\title{Nearly Optimal VC-Dimension and Pseudo-Dimension Bounds for Deep Neural Network Derivatives}
\author{%
	Yahong YANG \\
	Department of Mathematics\\
	Hong Kong University of Science and Technology\\
	Clear Water Bay, Hong Kong SAR, China \\
	\texttt{yyangct@connect.ust.hk}
	\And
	Haizhao YANG\thanks{
    Corresponding author.}\\
	Department of Mathematics and Department of Computer Science\\
	University of Maryland College Park\\
	College Park, MD, USA\\
	\texttt{hzyang@umd.edu}
	\And
	Yang XIANG \\
	Department of Mathematics\\
	Hong Kong University of Science and Technology\\
	Clear Water Bay, Hong Kong SAR, China \\
    and\\
    Algorithms of Machine Learning and Autonomous Driving Research Lab\\
    HKUST Shenzhen-Hong Kong Collaborative Innovation Research Institute\\
    Futian, Shenzhen, China\\
	\texttt{maxiang@ust.hk}\\
}
\begin{document}

	\maketitle

	\begin{abstract}
		This paper addresses the problem of  nearly optimal Vapnik--Chervonenkis dimension (VC-dimension) and pseudo-dimension estimations of the derivative functions of deep neural networks (DNNs). Two important applications of these estimations include: 1) Establishing a nearly tight approximation result of DNNs in the Sobolev space; 2)  Characterizing the generalization error of machine learning methods with loss functions involving function derivatives. This theoretical investigation fills the gap of learning error estimations for a wide range of physics-informed machine learning models and applications including generative models, solving partial differential equations, operator learning, network compression, distillation, regularization, etc.
	\end{abstract}

	\section{Introduction}

	The Sobolev training \cite{czarnecki2017sobolev,son2021sobolev,vlassis2021sobolev} of deep neural networks (DNNs) has had a significant impact on scientific and engineering fields, including solving partial differential equations \cite{Lagaris1998,weinan2017deep,raissi2019physics}, operator learning \cite{lu2021learning,liu2022deep}, network compression \cite{sau2016deep}, distillation \cite{hinton2015distilling,rusu2015policy}, regularization \cite{czarnecki2017sobolev}, and dynamic programming \cite{finlay2018lipschitz,werbos1992approximate}, etc. 
    For example, Sobolev (semi) norms have been applied to penalize function gradients in loss functions \cite{adler2018banach,gu2014towards,finlay2018lipschitz,mroueh2018sobolev} to control the Lipschitz constant of DNNs. Moreover, Sobolev norms and equivalent formulas are commonly used to define loss functions in various applications such as dynamic programming \cite{finlay2018lipschitz,werbos1992approximate}, solving partial differential equations \cite{Lagaris1998,weinan2017deep,raissi2019physics}, and distillation \cite{hinton2015distilling,rusu2015policy,sau2016deep}. These loss functions enable models to learn DNNs that can approximate the target function with small discrepancies in both magnitude and derivative. Two natural questions that arise are: 1) What is the optimal approximation error of DNNs described by a Sobolev norm? 2) What is the generalization error of the loss function defined by a Sobolev norm? The key step to address these questions is to estimate the optimal Vapnik--Chervonenkis dimension (VC-dimension) and pseudo-dimension \cite{anthony1999neural,vlassis2021sobolev,abu1989vapnik,pollard1990empirical} of DNNs and their derivatives. Intuitively, these concepts characterize the complexity or richness of a function set and, hence, they can be applied to establish the best possible approximation  and generalization power of DNNs. 
	
	\begin{defi}[VC-dimension \cite{abu1989vapnik}]
		Let $H$ denote a class of functions from $\fX$ to $\{0,1\}$. For any non-negative integer $m$, define the growth function of $H$ as \[\Pi_H(m):=\max_{x_1,x_2,\ldots,x_m\in \fX}\left|\{\left(h(x_1),h(x_2),\ldots,h(x_m)\right): h\in H \}\right|.\] The Vapnik--Chervonenkis dimension (VC-dimension) of $H$, denoted by $\text{VCdim}(H)$, is the largest $m$ such that $\Pi_H(m)=2^m$. For a class $\fG$ of real-valued functions, define $\text{VCdim}(\fG):=\text{VCdim}(\sgn(\fG))$, where $\sgn(\fG):=\{\sgn(f):f\in\fG\}$ and $\sgn(x)=1[x>0]$.
		
	\end{defi}

	\begin{defi}[pseudo-dimension \cite{pollard1990empirical}]\label{Pse}
		Let $\fF$ be a class of functions from $\fX$ to $\sR$. The pseudo-dimension of $\fF$, denoted by $\text{Pdim}(\fF)$, is the largest integer $m$ for which there exists $(x_1,x_2,\ldots,x_m,y_1,y_2,\ldots,y_m)\in\fX^m\times \sR^m$ such that for any $(b_1,\ldots,b_m)\in\{0,1\}^m$ there is $f\in\fF$ such that $\forall i: f\left(x_i\right)>y_i \Longleftrightarrow b_i=1.$
	\end{defi}

	The main contribution of this paper is to estimate nearly optimal bounds of the VC-dimension and pseudo-dimension of DNN derivatives. Based on these bounds, we can prove the optimality of our DNN approximation, as measured by Sobolev norms (Theorem \ref{main1}), and obtain a tighter generalization error of loss functions defined by Sobolev norms. Our results facilitate the understanding of Sobolev training and the performance of DNNs in Sobolev spaces.
	
	Bounds for the VC-dimension and pseudo-dimension of DNNs have been established in \cite{goldberg1993bounding,bartlett1998almost,anthony1999neural,bartlett2019nearly,blumer1989learnability}. However, obtaining such bounds for DNN derivatives is much more difficult due to their complex compositional structures. DNN derivatives consist of a series of interdependent parts that are multiplied together via the chain rule, rendering existing methods for estimating bounds inapplicable. Estimating the VC-dimension and pseudo-dimension of DNN derivatives is the most crucial and challenging problem addressed in this paper. In \cite{duan2021convergence}, the VC-dimension and pseudo-dimension of DNN derivatives were analyzed, but the results were suboptimal due to a lack of consideration for the relationships between the multiplied terms in a DNN derivative. In this paper, we introduces a novel method that investigates these relationships, resulting in a simplified complexity of DNN derivatives. This, in turn, allows us to obtain nearly optimal bounds on their VC-dimension and pseudo-dimension.
	
	The paper is divided into two parts. In the first part, we establish a nearly optimal bound on the VC-dimension of DNN derivatives with the ReLU activation function $\sigma_1(x) := \max\{0,x\}$: \begin{thm}\label{vcdim}
		For any $N,L,d\in\sN_+$, there exists a constant $\bar{C}$ independent with $N,L$ such that	\begin{equation}
		\text{VCdim}(D\Phi)\le \bar{C} N^2L^2\log_2 L\log_2 N,\label{bound}\end{equation}for \begin{align}
			D\Phi:=\left\{\psi=D_i\phi:\phi\in\Phi,~i=1,2,\ldots,d\right\},\end{align}where $ \Phi:=\left\{\phi:\phi\text{ is a $\sigma_1$-NN in $\sR^d$ with width$\le N$ and depth$\le L$}\right\}$, and $D_i$ is the weak derivative in the $i$-th variable.
	\end{thm}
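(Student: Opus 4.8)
The plan is to reduce bounding $\mathrm{VCdim}(D\Phi)$ to counting the number of distinct sign patterns that a derivative network can realize on $m$ input points, and then invoke the standard fact that $\mathrm{VCdim} \le m$ whenever this count is strictly less than $2^m$. The crucial structural observation is that a ReLU network $\phi$ with width $\le N$ and depth $\le L$ is piecewise linear, and on each activation region its $i$-th partial derivative $D_i\phi$ is a \emph{constant} (a fixed polynomial in the weights, namely a product of at most $L$ weight matrices restricted to the active neurons). So $D_i\phi$, as a function of $x$, is piecewise \emph{constant}, with pieces cut out by the $O(N L)$ neuron-activation hyperplanes — but those hyperplanes themselves move as the weights vary. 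The key is to treat the weights as the free variables and the $m$ sample points as fixed, and to bound the number of connected components of the parameter space on which \emph{all} relevant sign decisions (which neurons are active at each sample point, and the sign of each $D_i\phi(x_j)$) are locked.

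The key steps, in order, are: (1) Fix $m$ points $x_1,\dots,x_m \in \sR^d$ and the index $i$. (2) For a network with parameter vector $\theta$ of total dimension $W = O(N^2 L)$, enumerate the sequence of "events": for each layer $\ell$ and each of the $m$ points, the pre-activation of each neuron is a polynomial in $\theta$ of degree $\le \ell$, giving $O(NLm)$ polynomials of degree $\le L$. Partitioning parameter space by the sign vector of all these polynomials is the classical setup behind the Bartlett--Harvey--Liaw--Mehrabian / Goldberg--Jerrum bounds. (3) The new ingredient: within each such region the network is affine in $x$, so $D_i\phi$ is a fixed polynomial in $\theta$ of degree $\le L$ (a telescoping product of active sub-blocks of the weight matrices); the sign of $D_i\phi(x_j)$ therefore adds only $m$ more degree-$\le L$ polynomial conditions per region, \emph{not} a fresh compositional blow-up. (4) Apply the Warren / Milnor bound on the number of sign patterns of $P$ polynomials of degree $\le \delta$ in $W$ variables, which is $(O(P\delta/W))^W$; with $P = O(NLm)$, $\delta = O(L)$, $W = O(N^2 L)$ this yields a count of the form $\big(O(m L / N)\big)^{O(N^2 L)}$. (5) Impose $2^m \le \big(O(mL/N)\big)^{O(N^2L)}$ and solve for $m$: taking $\log_2$ gives $m \lesssim N^2 L \log_2(mL/N)$, and the standard lemma that $m \le a\log_2(bm)$ implies $m = O(a\log_2(ab))$ closes out to $m = O(N^2 L^2 \log_2 L \log_2 N)$, which is \eqref{bound}. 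Finally, union-bounding over the $d$ choices of $i$ only changes constants (or at worst contributes an additive $\log_2 d$ absorbed into $\bar C$).

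I expect the main obstacle to be step (3): making precise that $D_i\phi$ really is a single low-degree polynomial in $\theta$ on each activation region, and in particular that the activation-pattern polynomials already fix \emph{which} sub-blocks of the weight matrices appear in the product defining $D_i\phi$. One must verify that introducing the derivative does not force one to further subdivide the regions (beyond the $m$ extra sign conditions), and that the weak derivative coincides with this classical derivative off a measure-zero set of $x$ — harmlessly, since VC-dimension only cares about the finitely many sample points, which one may assume lie off the bad set. A second, more bookkeeping-style obstacle is tracking the exponents carefully through the Warren bound so that the extra factor of $L$ (relative to the $N^2L^2\log N \log L$-type bound one already has for $\Phi$ itself) is exactly the one claimed and not larger; this is where the "relationships between the multiplied terms" alluded to in the introduction — i.e., the fact that the $L$ factors in the product share the same activation structure rather than being independent — must be used to avoid an extra power of $L$ or $\log L$.
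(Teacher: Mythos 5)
Your step (3) is exactly the paper's key idea: after refining the parameter partition so that every activation sign at every sample point is locked, each factor in the chain-rule product for $D_i\phi(x_j)$ is an affine function of a disjoint block of parameters, so the product is a single polynomial of degree at most $L+1$ in $\theta$ and the derivative costs only $m$ extra sign conditions per region rather than a compositional blow-up. The paper formalizes this via the sets $\sF_0,\ldots,\sF_L$ and property~3 of the final partition $\fS_L$, then applies Lemma~\ref{bounded} once per region. Up to that point you and the paper agree.

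The genuine gap is in steps (4)--(5), where you count. You cannot apply the Warren/Milnor bound in one shot to all $O(NLm)$ pre-activation polynomials with exponent $W=O(N^2L)$, because the pre-activations at layer $\ell$ are not polynomials in $\theta$ over all of $\sR^W$ — they are only polynomials within each cell of the partition determined by the earlier layers' activation signs. The correct procedure (Bartlett--Harvey--Liaw--Mehrabian, and the paper) is the iterated refinement: at layer $n$ you subdivide each existing cell using Lemma~\ref{bounded} applied to polynomials in the \emph{cumulative} parameter count $\sum_{i=1}^{n}W_i$, and the total number of cells is the \emph{product} of the per-layer factors. Taking logarithms, the effective exponent is $U=\sum_{n=1}^{L+1}\sum_{i=1}^{n}W_i=O(N^2L^2)$, not $O(N^2L)$; this is precisely where the $N^2L^2$ in the theorem comes from. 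Your own arithmetic betrays the problem: $2^m\le\bigl(O(mL/N)\bigr)^{O(N^2L)}$ together with Lemma~\ref{inequality} yields $m=O(N^2L\log_2(NL))$, not the stated $O(N^2L^2\log_2L\log_2N)$ — you have written down an intermediate bound that is too strong (and unjustified) and then pattern-matched the final answer to the theorem. Replacing the one-shot count by the layered product with exponent $U=O(N^2L^2)$ repairs the argument and lands exactly on the paper's proof.
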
 
 By utilizing Theorem \ref{vcdim}, we prove that our DNN approximation rate for approximating functions in Sobolev spaces $\fW^{n,\infty}((0,1)^d)$ using Sobolev norms in $\fW^{1,\infty}((0,1)^d)$ is nearly optimal. We present our construction of DNNs for this approximation in Theorem \ref{main1}, and we demonstrate the optimality of such approximation in Theorem \ref{Optimality}. Furthermore, we generalize our method to approximate DNNs in Sobolev spaces measured by Sobolev norms $\fW^{m,\infty}((0,1)^d)$ for $m \geq 2$. The details of this generalization are presented in Corollaries \ref{main2} and \ref{main3}.

		In the second part of our paper, we utilize our previous work on estimating the VC-dimension of DNN derivatives to obtain an upper bound on the pseudo-dimension of DNN derivatives:
		\begin{thm}\label{pdim}
			For any $N,L,d\in\sN_+$, there exists a constant $\widehat{C}$ independent with $N,L$  such that 	\begin{equation}
				\text{Pdim}(D\Phi)\le \widehat{C} N^2L^2\log_2 L\log_2 N,
			\end{equation}where $D\Phi$ is defined in Theorem \ref{vcdim}.
		\end{thm}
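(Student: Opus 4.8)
The plan is to reduce the pseudo-dimension of $D\Phi$ to a growth-function estimate of exactly the type already established in the proof of Theorem \ref{vcdim}. Recall the classical identity (see e.g. \cite{anthony1999neural}) $\text{Pdim}(\fF)=\text{VCdim}(\fF^{\sharp})$, valid for any class $\fF$ of real-valued functions, where $\fF^{\sharp}:=\{(x,t)\mapsto\sgn(f(x)-t):f\in\fF\}$ is viewed as a class of $\{0,1\}$-valued functions on the augmented domain $\sR^{d}\times\sR$. Hence it suffices to control the growth function $\Pi_{(D\Phi)^{\sharp}}(m)$: for arbitrary points $(x_1,t_1),\dots,(x_m,t_m)\in\sR^{d}\times\sR$ one must bound the number of distinct sign vectors $\left(\sgn(D_i\phi(x_j)-t_j)\right)_{j=1}^{m}$ as $\phi$ ranges over $\Phi$ and $i$ over $\{1,\dots,d\}$. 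It is worth noting that the shortcut available for function values — append a fresh input coordinate $t$ wired directly into the output node, so that $\sgn(\phi(x)-t)$ becomes the sign of a slightly larger network and the VC bound for networks applies verbatim — is \emph{not} available here: the weak derivative of a $\sigma_1$-network is piecewise constant along each coordinate and therefore cannot depend affinely on an input variable, so $D_i\phi(x)-t$ is never itself a $\sigma_1$-network derivative. This is why one must revisit the counting argument rather than invoke Theorem \ref{vcdim} as a black box.

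First I would fix $i$ and the architecture, so that $\phi=\phi_\theta$ is parametrised by a weight vector $\theta\in\sR^{W}$ with $W=O(N^{2}L)$. The heart of the proof of Theorem \ref{vcdim} is a description of the map $\theta\mapsto D_i\phi_\theta(x_j)$, for the \emph{fixed} inputs $x_1,\dots,x_m$, as a piecewise polynomial function of $\theta$: the parameter space decomposes into a controlled number of cells (cut out by the activation-boundary conditions of the ReLU units along the $m$ forward passes, together with the additional conditions produced when one differentiates through them), and on each cell $D_i\phi_\theta(x_j)$ is a polynomial in $\theta$ of controlled degree — the decisive new ingredient of that proof being the bookkeeping of the product of at most $L$ interdependent chain-rule factors. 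I would re-run this analysis with $D_i\phi_\theta(x_j)$ replaced throughout by $D_i\phi_\theta(x_j)-t_j$. Since every $t_j$ is a fixed constant, subtracting it refines neither the cell decomposition nor the degree of any polynomial piece; it merely shifts each piece by a constant. Consequently the number of sign patterns of $\{D_i\phi_\theta(x_j)-t_j\}_{j=1}^{m}$ over $\theta\in\sR^{W}$ obeys the very same Milnor--Thom / Warren-type bound used in Theorem \ref{vcdim}, and taking a union over the cells and over the $d$ choices of $i$ inflates the count only by a polynomial factor in $N,L,d$ that is absorbed into the logarithmic terms.

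Putting the two steps together, $\Pi_{(D\Phi)^{\sharp}}(m)$ satisfies precisely the upper bound obtained for $\Pi_{\sgn(D\Phi)}(m)$ in the proof of Theorem \ref{vcdim}, up to a change of constant. Imposing $\Pi_{(D\Phi)^{\sharp}}(m)=2^{m}$ and solving for $m$ by the same elementary estimate that produced \eqref{bound} then yields $\text{Pdim}(D\Phi)=\text{VCdim}((D\Phi)^{\sharp})\le\widehat{C}\,N^{2}L^{2}\log_2 L\log_2 N$ for a constant $\widehat{C}$ independent of $N$ and $L$ (in general somewhat larger than $\bar{C}$, on account of the extra threshold variable and the union over $i$).

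The step I expect to be the main obstacle is the clean verification that the piecewise-polynomial picture of $\theta\mapsto D_i\phi_\theta(x_j)$ supplied by Theorem \ref{vcdim} — the cell decomposition of parameter space and, above all, the degree bounds forced by the chain-rule product — is genuinely unaffected by the constant shift $-t_j$, so that the counting bound may be quoted rather than re-derived. One must also carry over the same care needed in Theorem \ref{vcdim} regarding the fact that $D_i\phi$ is only a weak, a.e.-defined derivative: the thresholds $t_j$ (and the points $x_j$) should be taken generic, or the count restricted to cell interiors, so that the sign function is well defined on the relevant configurations.
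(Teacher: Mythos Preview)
Your proposal is correct and follows essentially the same route as the paper: the paper defines $D\Phi_{\fN}:=\{(\vx,y)\mapsto\psi(\vx)-y:\psi\in D\Phi\}$, notes $\text{Pdim}(D\Phi)\le\text{VCdim}(D\Phi_{\fN})$, and then asserts that the counting argument of Theorem \ref{vcdim} goes through unchanged for this shifted class. Your write-up supplies precisely the justification the paper omits---namely, that once the sample points $(\vx_j,t_j)$ are fixed, the $-t_j$ is a constant shift that alters neither the cell decomposition of parameter space nor the polynomial degrees, so the same Warren-type bound applies verbatim.
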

		
		Based on Theorem \ref{pdim}, we can estimate the generalization error of loss functions defined by Sobolev norms, as demonstrated in Theorem \ref{gen thm}. Specifically, the error is bounded by $O(NL(\log_2N\log_2L)^{1/2})$ with respect to the width $N$ and depth $L$ of DNNs. This bound is significantly smaller than the previously reported bound of $O(NL^{5/2}(\log_2N\log_2L)^{1/2})$ in \cite{duan2021convergence}. We attribute this improvement to our more accurate estimation of the pseudo-dimension of DNN derivatives. Our findings indicate that learning target functions with loss functions defined by Sobolev norms does not require substantially more sample points than those defined by $L^2$-norms \cite{farrell2021deep}, as their generalization error orders are equivalent with respect to the width $N$ and depth $L$ of DNNs.
		
		Our main contributions are:
		
		$\bullet$ 	We propose a method to achieve nearly optimal estimations of the VC-dimension and pseudo-dimension of DNN derivatives.
		
		$\bullet$ By utilizing our estimation of the VC-dimension of DNN derivatives, we demonstrate the optimality of our DNN approximation, as measured by Sobolev norms.
		
		$\bullet$ By applying our estimation of the pseudo-dimension of DNN derivatives, we obtain a bound for the generalization error measured by the Sobolev norm. Importantly, our results demonstrate that the degree of generalization error defined by Sobolev norms is equivalent to that defined by $L^2$-norms, corresponding to the width $N$ and depth $L$ of DNNs.

		\section{Preliminaries}\label{preliminaries}
		\subsection{Neural networks}
		Let us summarize all basic notations used in the DNNs as follows:
		
		\textbf{1}. Matrices are denoted by bold uppercase letters. For an example, $\vA\in\sR^{m\times n}$ is a real matrix of size $m\times n$ and $\vA^\T$ denotes the transpose of $\vA$.
		
		\textbf{2}. Vectors are denoted by bold lowercase letters. For an example, $\vv\in\sR^n$ is a column vector of size $n$. Furthermore, denote $\vv(i)$ as the $i$-th elements of $\vv$.
		
		\textbf{3}. For a $d$-dimensional multi-index $\valpha=[\alpha_1,\alpha_2,\cdots\alpha_d]\in\sN^d$, we denote several related notations as follows: $(a)~ |\boldsymbol{\alpha}|=\left|\alpha_1\right|+\left|\alpha_2\right|+\cdots+\left|\alpha_d\right|$; $(b)~\boldsymbol{x}^\alpha=x_1^{\alpha_1} x_2^{\alpha_2} \cdots x_d^{\alpha_d},~ \boldsymbol{x}=\left[x_1, x_2, \cdots, x_d\right]^\T$; $ (c)~\boldsymbol{\alpha} !=\alpha_{1} ! \alpha_{2} ! \cdots \alpha_{d} !.$
		
		\textbf{4}. Let $B_{r,|\cdot|}(\vx)\subset\sR^d$ be the closed ball with a center $\vx\in\sR^d$ and a radius $r$ measured by the Euclidean distance. Similarly, $B_{r,\|\cdot\|_{\ell_\infty}}(\vx)\subset\sR^d$ be the closed ball with a center $\vx\in\sR^d$ and a radius $r$ measured by the $\ell_\infty$-norm.
		
		\textbf{5}. Assume $\vn\in\sN_+^n$, then $f(\vn)=\vO(g(\vn))$ means that there exists positive $C$ independent of $\vn,f,g$ such that $f(\vn)\le Cg(\vn)$ when all entries of $\vn$ go to $+\infty$.
		
		\textbf{6}. Define $\sigma_1(x):=\sigma(x)=\max\{0,x\}$ and $\sigma_2:=\sigma^2(x)$. We call the neural networks with activation function $\sigma_t$ with $t\le i$ as $\sigma_i$-NNs. With the abuse of notations, we define $\sigma_i:\sR^d\to\sR^d$ as $\sigma_i(\vx)=\left[\begin{array}{c}
			\sigma_i(x_1) \\
			\vdots \\
			 \sigma_i(x_d)
		\end{array}\right]$ for any $\vx=\left[x_1, \cdots, x_d\right]^T \in\sR^d$.
		
		\textbf{7}. Define $L,N\in\sN_+$, $N_0=d$ and $N_{L+1}=1$, $N_i\in\sN_+$ for $i=1,2,\ldots,L$, then a $\sigma_i$-NN $\phi$ with the width $N$ and depth $L$ can be described as follows:\[\boldsymbol{x}=\tilde{\boldsymbol{h}}_0 \stackrel{W_1, b_1}{\longrightarrow} \boldsymbol{h}_1 \stackrel{\sigma_i}{\longrightarrow} \tilde{\boldsymbol{h}}_1 \ldots \stackrel{W_L, b_L}{\longrightarrow} \boldsymbol{h}_L \stackrel{\sigma_i}{\longrightarrow} \tilde{\boldsymbol{h}}_L \stackrel{W_{L+1}, b_{L+1}}{\longrightarrow} \phi(\boldsymbol{x})=\boldsymbol{h}_{L+1},\] where $\vW_i\in\sR^{N_i\times N_{i-1}}$ and $\vb_i\in\sR^{N_i}$ are the weight matrix and the bias vector in the $i$-th linear transform in $\phi$, respectively, i.e., $\boldsymbol{h}_i:=\boldsymbol{W}_i \tilde{\boldsymbol{h}}_{i-1}+\boldsymbol{b}_i, ~\text { for } i=1, \ldots, L+1$ and $\tilde{\boldsymbol{h}}_i=\sigma_i\left(\boldsymbol{h}_i\right),\text{ for }i=1, \ldots, L.$ In this paper, an DNN with the width $N$ and depth $L$, means
		(a) The maximum width of this DNN for all hidden layers less than or equal to $N$.
		(b) The number of hidden layers of this DNN less than or equal to $L$.
		\subsection{Sobolev spaces}
		
		Denote $\Omega$ as $(0,1)^d$, $D$ as the weak derivative of a single variable function and $D^{\valpha}=D^{\alpha_1}_1D^{\alpha_2}_2\ldots D^{\alpha_d}_d$ as the partial derivative where $\valpha=[\alpha_{1},\alpha_{2},\ldots,\alpha_d]^T$ and $D_i$ is the derivative in the $i$-th variable.
		
		\begin{defi}[Sobolev Spaces \cite{evans2022partial}]
			Let $n\in\sN$ and $1\le p\le \infty$. Then we define Sobolev spaces\[\mathcal{W}^{n, p}(\Omega):=\left\{f \in L^p(\Omega): D^{\valpha} f \in L^p(\Omega) \text { for all } \boldsymbol{\alpha} \in \sN^d \text { with }|\boldsymbol{\alpha}| \leq n\right\}\] with a norm $\|f\|_{\mathcal{W}^{n, p}(\Omega)}:=\left(\sum_{0 \leq|\alpha| \leq n}\left\|D^{\valpha} f\right\|_{L^p(\Omega)}^p\right)^{1 / p}$, if $p<\infty$, and $\|f\|_{\fW^{n, \infty}(\Omega)}:=\max_{0 \leq|\alpha| \leq n}\left\|D^{\valpha} f\right\|_{L^\infty(\Omega)}$.
			Furthermore, for $\vf=(f_1,f_2,\ldots,f_d)$, $\vf\in\fW^{1,\infty}(\Omega,\sR^d)$ if and only if $ f_i\in\fW^{1,\infty}(\Omega)$ for each $i=1,2,\ldots,d$ and $\|\vf\|_{\fW^{1,\infty}(\Omega,\sR^d)}:=\max_{i=1,\ldots,d}\{\|f_i\|_{\fW^{1,\infty}(\Omega)}\}$.
		\end{defi}
		
		\begin{defi}[Sobolev semi-norm \cite{evans2022partial}]
			Let $n\in\sN_+$ and $1\le p\le \infty$. Then we define Sobolev semi-norm $|f|_{\mathcal{W}^{n, p}(\Omega)}:=\left(\sum_{|\alpha|= n}\left\|D^{\valpha} f\right\|_{L^p(\Omega)}^p\right)^{1 / p}$, if $p<\infty$, and $|f|_{\fW^{n, \infty}(\Omega)}:=\max_{|\alpha| = n}\left\|D^{\valpha} f\right\|_{L^\infty(\Omega)}$. Furthermore, for $\vf\in\fW^{1,\infty}(\Omega,\sR^d)$, we define  $|\vf|_{\fW^{1,\infty}(\Omega,\sR^d)}:=\max_{i=1,\ldots,d}\{|f_i|_{\fW^{1,\infty}(\Omega)}\}$.
		\end{defi}	
		
		\section{Nearly Optimal Approximation Results of DNNs in Sobolev Spaces Measured by Sobolev Norms}
		\subsection{Approximation of functions in $\fW^{n,\infty}$ with $\fW^{1,\infty}$ norm by ReLU neural networks}\label{relu}
        In this subsection, we construct deep neural networks (DNNs) with a width of $O(N\log N)$ and a depth of $O(L\log L)$ to approximate functions in the Sobolev space $\fW^{n,\infty}$, as measured by Sobolev norms in $\fW^{1,\infty}$. The approximation rate achieved by these networks is $O(N^{-2(n-1)/d}L^{-2(n-1)/d})$.
		\begin{thm}\label{main1}
			For any $f\in\fW^{n,\infty}((0,1)^d)$ with $n\ge2$ and $\|f\|_{\fW^{n,\infty}((0,1)^d)}\le 1$, any $N, L\in\sN_+$, there is a $\sigma_1$-NN $\phi$ with the width $(34+d)2^dn^{d+1}(N+1)\log_2(8N)$ and depth $56d^2n^2(L+1)\log_2(4L)$ such that\[\|f(\vx)-\phi(\vx)\|_{\fW^{1,\infty}((0,1)^d)}\le C_9(n,d)N^{-2(n-1)/d}L^{-2(n-1)/d},\]where $C_9$ is the constant independent with $N,L$.
		\end{thm}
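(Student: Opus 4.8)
The plan is to construct $\phi$ by the now-standard three-stage pipeline---grid partition, local Taylor expansion, and bit-extraction encoding---but executed carefully enough to control the error in the $\fW^{1,\infty}$ norm rather than only in $L^\infty$. Fix an integer $K$ and partition $(0,1)^d$ into $K^d$ congruent subcubes $\{Q_m\}$ of side length $1/K$. On each $Q_m$, replace $f$ by its degree-$(n-1)$ Taylor polynomial $p_m$ centered at the midpoint of $Q_m$; since $\|f\|_{\fW^{n,\infty}((0,1)^d)}\le 1$, the Taylor remainder obeys $\|f-p_m\|_{L^\infty(Q_m)}=\vO(K^{-n})$ and $\|D_i f-D_i p_m\|_{L^\infty(Q_m)}=\vO(K^{-(n-1)})$ for each $i$. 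Hence it suffices to realize, with a width-$\vO(N\log N)$ depth-$\vO(L\log L)$ ReLU network, a globally continuous function that equals each $p_m$ on the bulk of $Q_m$ and whose gradient deviates from $\nabla p_m$ by at most $\vO(K^{-(n-1)})$; choosing $K\sim(NL)^{2/d}$ then yields the claimed rate $\vO(N^{-2(n-1)/d}L^{-2(n-1)/d})$, the point being that the $\vO(K^d)=\vO((NL)^2)$ real coefficients to be stored is exactly the amount of information a network of this size can carry via bit extraction.

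I would then build $\phi$ as a composition of three blocks. (i) A localization block: using the ReLU sawtooth/step construction, compute $\vx\mapsto (m(\vx),\boldsymbol u(\vx))$, where $m(\vx)$ encodes the index of the subcube containing $\vx$ and $\boldsymbol u(\vx)=K(\vx-\text{corner of }Q_{m(\vx)})\in[0,1]^d$ is the rescaled local coordinate. (ii) A coefficient-lookup block: pack the $1/K$-rescaled Taylor coefficients $\{c_{m,\valpha}\}_{|\valpha|\le n-1}$ into $\vO(1)$ real numbers per residue class and recover them from $m(\vx)$ using the bit-extraction lemma---this is the step that buys the quadratic $N^2L^2$ efficiency and is responsible for the $\log_2 N,\ \log_2 L$ factors. (iii) An evaluation block: form each monomial $\boldsymbol u^{\valpha}$ by composing at most $n-1$ copies of the approximate scalar-multiplication network $\times_\varepsilon$ (built from the approximate-square map), and output $\sum_{|\valpha|\le n-1}c_{m,\valpha}\,\boldsymbol u^{\valpha}$. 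Crucially $\times_\varepsilon$ can be chosen so that not only $|\times_\varepsilon(s,t)-st|\le\varepsilon$ but also $|\partial_s\times_\varepsilon(s,t)-t|\le\varepsilon$ and $|\partial_t\times_\varepsilon(s,t)-s|\le\varepsilon$ on bounded sets, so propagating the error through the chain rule over the $\vO(L\log L)$ layers---using that all coefficients and all $\boldsymbol u^{\valpha}$ are bounded because $\|f\|_{\fW^{n,\infty}}\le1$---gives a total $\fW^{1,\infty}$ defect of $\vO(K^{-(n-1)})$ plus a term made arbitrarily small by shrinking $\varepsilon$ polynomially in $N,L$; everything else is absorbed into $C_9(n,d)$.

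The main obstacle is the ReLU boundary (``trifling'') region: the step map used in (i) cannot be exactly piecewise constant, so on a thin layer around each face $\{x_i=j/K\}$ its output ramps linearly between two quantization levels, forcing a localization derivative of size $\vO(K)$; naively this makes the gradient error explode. I would handle it, as in the $L^\infty$ super-approximation analysis but adapted to the Sobolev setting, by not gluing with hard indicators at all: replace $\sum_m \mathbf 1_{Q_m}p_m$ by $\phi=\sum_m g_m\,\tilde p_m$, where $\{g_m\}$ is a continuous piecewise-linear partition of unity with ramps of width $\Theta(1/K)$, so $\|\nabla g_m\|_\infty=\vO(K)$ but $\sum_m g_m\equiv 1$ and hence $\sum_m \nabla g_m\equiv 0$. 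On a ramp the gradient error is $\sum_m(\nabla g_m)(p_m-f)$ (we may subtract $f$ for free), which is $\vO(K)\cdot\vO(K^{-n})=\vO(K^{-(n-1)})$, matching the interior order, while only $\vO(1)$ of the $g_m$ overlap at any point. Realizing this partition of unity---concretely, processing the $\vO(1)$ relevant translated sub-grids in parallel channels and combining them---within the stated width and depth budget is the delicate bookkeeping; once it is in place, the remaining estimates are the routine error propagation described above, and collecting the logarithmic factors coming from the bit-extraction and square sub-networks reproduces the explicit constants $(34+d)2^d n^{d+1}(N+1)\log_2(8N)$ and $56d^2 n^2(L+1)\log_2(4L)$.
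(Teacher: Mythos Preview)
Your overall strategy coincides with the paper's: a fine grid with $K\sim (NL)^{2/d}$, local degree-$(n-1)$ polynomial data recovered by bit extraction, monomials and products built from an approximate multiplication $\times_\varepsilon$ with $\fW^{1,\infty}$ control, and---to cure the trifling region---a partition of unity supported on $2^d$ translated sub-grids rather than on the $K^d$ cubes themselves. The quantitative heart of your argument, $(\nabla g_{\vm})\cdot(\psi_{\vm}-f)=\vO(K)\cdot\vO(K^{-n})=\vO(K^{-(n-1)})$, is exactly how the paper balances the $\vO(K)$ gradient of the partition function against the extra $L^\infty$ order it extracts for $\psi_{\vm}$ on $\Omega_{\vm}$. (The paper uses averaged Taylor polynomials via Bramble--Hilbert instead of pointwise Taylor, but this is an inessential difference for $\fW^{n,\infty}$.)

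There is, however, one technical point your write-up does not secure, and it is precisely the ``delicate bookkeeping'' you defer. Your cancellation $\sum_{\vm}\nabla g_{\vm}\equiv 0$ handles the overlap zones, but you also need the contribution of $\times_\varepsilon(g_{\vm},\psi_{\vm})$ and its gradient to vanish on the set where $g_{\vm}=0$. With only the approximate property $|\partial_t\times_\varepsilon(0,t)|\le\varepsilon$, that contribution is $\varepsilon\cdot\|\nabla\psi_{\vm}\|_{L^\infty((0,1)^d)}$---and $\psi_{\vm}$ has no usable gradient bound \emph{off} $\Omega_{\vm}$: on the trifling strip of grid $\vm$ the localization block is ramping, and after passing through the bit-extraction subnetwork its gradient is not controlled polynomially in $N,L$, so ``shrinking $\varepsilon$ polynomially'' does not close the estimate. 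The paper resolves this by engineering the approximate square so that $\times_\varepsilon(0,y)=\partial_y\times_\varepsilon(0,y)=0$ \emph{exactly}, and by building the network partition-of-unity factor as $\phi_{\vm}=\phi_{\mathrm{prod}}(g_{m_1},\dots,g_{m_d})$ with $\phi_{\mathrm{prod}}$ enjoying the same exact-zero property whenever any argument vanishes. Since each one-dimensional $g_{m_j}$ is piecewise linear, hence an exact ReLU network, and satisfies $g_{m_j}=g_{m_j}'=0$ off $\Omega_{m_j}$, this forces $\phi_{\vm}$, $\nabla\phi_{\vm}$, and therefore $\times_\varepsilon(\phi_{\vm},\psi_{\vm})$ together with its gradient, to be identically zero outside $\Omega_{\vm}$ regardless of what $\psi_{\vm}$ does there. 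This exact-vanishing device, rather than the sum-to-one cancellation alone, is what lets the $\fW^{1,\infty}$ estimate go through.
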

		
		The proof of Theorem \ref{main1} can be outlined in five parts, and the complete proof is provided in Appendix \ref{proof1}:
		
		\textbf{(i)}: First of all, define a sequence of subsets of $\Omega$:
		\begin{defi}\label{omega}Given $K,d\in\sN^+$, and for any $\vm=(m_1,m_2,\ldots,m_d)\in\{1,2\}^d$, we define $
				\Omega_{\vm}:=\prod_{j=1}^d\Omega_{m_j},
		$  where $
			\Omega_1:=\bigcup_{i=0}^{K-1}\left[\frac{i}{K},\frac{i}{K}+\frac{3}{4K}\right],~\Omega_2:=\bigcup_{i=0}^{K}\left[\frac{i}{K}-\frac{1}{2K},\frac{i}{K}+\frac{1}{4K}\right]\cap [0,1]$.\end{defi}

        Then we define a partition of unity $\{g_{\vm}\}_{\vm\in\{1,2\}^d}$ on $(0,1)^d$ with $\text{supp }g_{\vm}\cap(0,1)^d\subset \Omega_{\vm}$ for each $\vm\in\{1,2\}^d$: \begin{defi}\label{gm}
			Given $K,d\in\sN_+$, we define\begin{align}
				g_1(x):=
				\begin{cases}
					1,~&x\in \left[\frac{i}{K}+\frac{1}{4K},\frac{i}{K}+\frac{1}{2K}\right] \\
					0,~&x\in\left[\frac{i}{K}+\frac{3}{4K},\frac{i+1}{K}\right] \\
					4K\left(x-\frac{i}{K}\right),~&x\in\left[\frac{i}{K},\frac{i}{K}+\frac{1}{4K}\right]\\
					-4K\left(x-\frac{i}{K}-\frac{3}{4K}\right),~&x\in\left[\frac{i}{K}+\frac{1}{2K},\frac{i}{K}+\frac{3}{4K}\right]
				\end{cases},~g_2(x):=g_1\left(x+\frac{1}{2K}\right),
			\end{align}for $i\in\sZ$. For any $\vm=(m_1,m_2,\ldots,m_d)\in\{1,2\}^d$, define $
			g_{\vm}(\vx)=\prod_{j=1}^d g_{m_j}(x_j),~\vx=(x_1,x_2,\ldots,x_d)$.\end{defi}
		
		\begin{figure}[h!]
			\centering
			\includegraphics[scale=0.47]{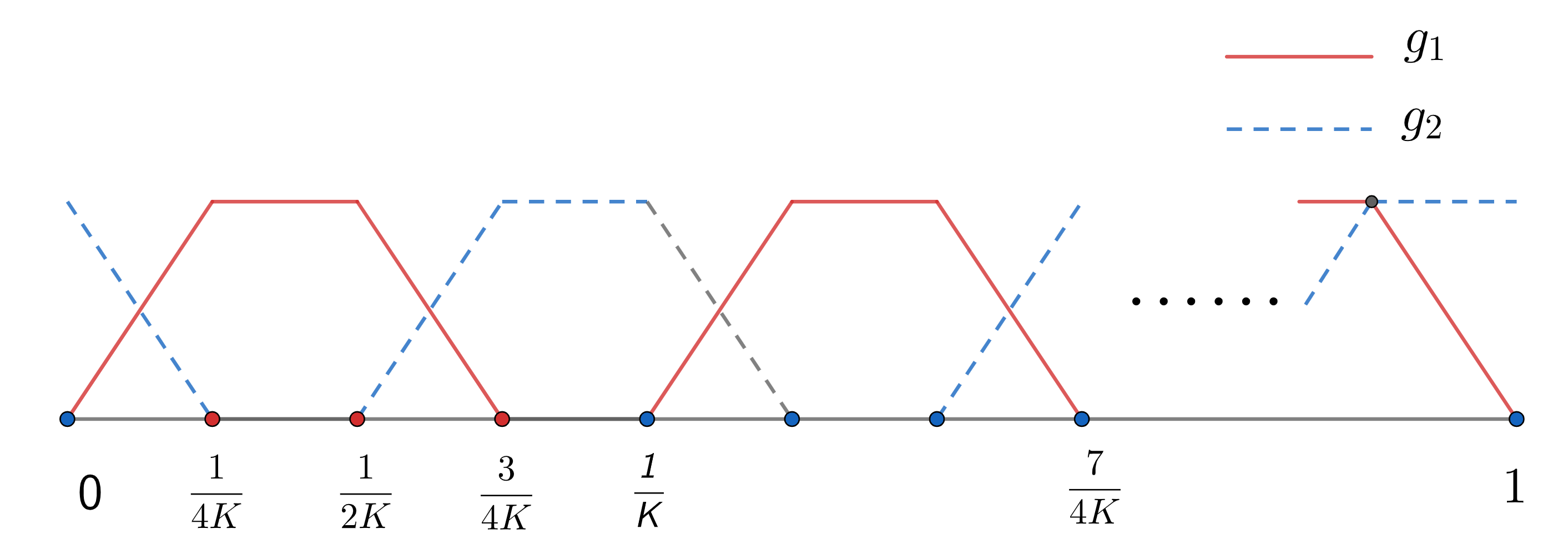}
			\caption{The schematic diagram of $g_i$ for $i=1,2$}
			\label{g12}
		\end{figure}
		
		\textbf{(ii)}: Then we use the following proposition to approximate $\{g_{\vm}\}_{\vm\in\{1,2\}^d}$ by $\sigma_1$-NNs and construct a sequence of $\sigma_1$-NNs $\{\phi_{\vm}\}_{\vm\in\{1,2\}^d}$:
		
		\begin{prop}\label{peri}
			Given any $N,L,n\in\sN_+$ for $K=\lfloor N^{1/d}\rfloor^2\lfloor L^{2/d}\rfloor$, then for any $\vm=(m_1,m_2,\ldots,m_d)\in\{1,2\}^d$, there is a $\sigma_1$-NN with the width smaller than $(9+d)(N+1)+d-1$ and depth smaller than $15 d(d-1)n L$ such as $\|\phi_{\vm}(\vx)-g_{\vm}(\vx)\|_{\fW^{1,\infty}((0,1)^d)}\le50 d^{\frac{5}{2}}(N+1)^{-4dnL}.$
		\end{prop}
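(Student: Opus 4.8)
The plan is to build the network $\phi_{\vm}$ by composing three pieces: (a) a $\sigma_1$-NN that realizes the one-dimensional hat functions $g_1$ and $g_2$ exactly (both are continuous piecewise linear with a fixed, bounded number of breakpoints per unit interval, so each is a ReLU network of width $O(1)$ and depth $O(1)$, and by periodicity the whole function on $[0,1]$ costs only width $O(N)$ once we encode the index $i$ via a "shift" step); (b) a deep $\sigma_1$-NN $\times_{\varepsilon}$ that approximates the product of $d$ real numbers on a bounded domain to accuracy $\varepsilon$, with width $O(d)$ and depth $O(d\log(1/\varepsilon))$ — this is the standard square-function/telescoping-product construction for ReLU nets; and (c) the composition $\phi_{\vm}(\vx)=\times_{\varepsilon}\!\bigl(\phi_{m_1}(x_1),\dots,\phi_{m_d}(x_d)\bigr)$, where each $\phi_{m_j}$ is the one-dimensional network from (a). Running the $d$ coordinate subnetworks in parallel gives width $\le(9+d)(N+1)+d-1$ after accounting for the $d-1$ channels needed to carry coordinates forward, and stacking the product network on top gives the depth bound $15d(d-1)nL$, with the product accuracy $\varepsilon$ tuned to $(N+1)^{-4dnL}$.

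The key quantitative steps, in order, are: (1) Recall or re-derive the one-dimensional fact that $g_1$ (hence $g_2$, a shift of $g_1$) is exactly representable by a $\sigma_1$-NN of width $O(N)$ and depth $O(1)$ on $[0,1]$ — this uses that $g_1$ has $O(K)$ breakpoints but a periodic structure, and $K=\lfloor N^{1/d}\rfloor^2\lfloor L^{2/d}\rfloor$, so we must spend depth (not just width) to resolve $K$ pieces; concretely one realizes the sawtooth/triangular-wave composition trick so that resolving $K\sim N^{2/d}L^{2/d}$ teeth costs width $O(N)$ and depth $O(L)$, and then composing with a fixed width-$O(1)$ depth-$O(1)$ map converts the sawtooth into the specified hat shape. (2) Quote the ReLU product lemma: for inputs in $[-1,1]$ (or any fixed box — note $0\le g_{m_j}\le 1$), there is a $\sigma_1$-NN computing $\prod_{j=1}^d y_j$ up to error $\varepsilon$ with width $O(d)$ and depth $O(d\log(1/\varepsilon))$. (3) Propagate errors through the product: if $|\phi_{m_j}-g_{m_j}|\le\delta$ pointwise and the product net adds error $\varepsilon$, then $|\phi_{\vm}-g_{\vm}|\le d\delta+\varepsilon$ by the telescoping identity $\prod a_j-\prod b_j=\sum_k(\prod_{j<k}a_j)(a_k-b_k)(\prod_{j>k}b_j)$ together with $|a_j|,|b_j|\le 1+\delta$. (4) The harder part: control the $\fW^{1,\infty}$ (gradient) error, not just the $L^\infty$ error. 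Here $D_i\phi_{\vm}=\sum$-of-products obtained by differentiating the composition; one must show each such term is close to the corresponding term in $D_i g_{\vm}=g_1'(x_i)\prod_{j\ne i}g_{m_j}(x_j)$. This requires both that $\phi_{m_i}'$ approximates $g_{m_i}'$ in $L^\infty$ (which forces the one-dimensional construction in step (1) to be a good $\fW^{1,\infty}$-approximation, i.e. we cannot afford wild oscillations between teeth — this is exactly where the exponentially small target $(N+1)^{-4dnL}$ and the factor $nL$ in the depth come from: deeper sawtooth compositions give exponentially small derivative error) and that the derivative of the ReLU product net approximates the true gradient of multiplication, which follows from a $\fW^{1,\infty}$-version of the product lemma (the square-function ReLU approximant is $C^1$-close to $t^2$, so its derivative is close to $2t$).

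The main obstacle is step (4): propagating accuracy in the $\fW^{1,\infty}$ norm rather than $L^\infty$ through the chain rule. Ordinary ReLU approximation lemmas are stated for sup-norm, and derivatives of ReLU networks are only piecewise constant, so one has to argue carefully that (i) the breakpoints of $\phi_{m_j}$ line up well enough with those of $g_{m_j}$ that the piecewise-constant derivatives agree except on a negligible set — actually for the $\fW^{1,\infty}$ norm one needs them to agree except on a set where the error is still small, so the cleaner route is to make $\phi_{m_j}$ itself exactly equal to a hat function on most of $[0,1]$ with controlled transition regions, and (ii) the product subnetwork's partial derivatives are uniformly close to the true partials of $(y_1,\dots,y_d)\mapsto\prod y_j$ on the relevant box. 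I would handle (i) by choosing the one-dimensional construction so that $\phi_{m_j}$ is a genuine continuous piecewise-linear hat whose slopes are either $0$ or $\pm 4K$ exactly, matching $g_{m_j}$, with the only discrepancy from finitely many short linear "connector" segments whose total length and slope error are exponentially small in $nL$; and (ii) by invoking a $\fW^{1,\infty}$-accurate multiplication network, whose existence for ReLU nets with the stated width/depth budget is standard and can be cited or stated as a lemma. Once both sup-norm and derivative errors are shown to be $O(d^{5/2}(N+1)^{-4dnL})$-type bounds, collecting constants yields the claimed $50d^{5/2}(N+1)^{-4dnL}$ and the width/depth counts follow by adding up the parallel and sequential pieces.
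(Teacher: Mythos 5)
Your proposal matches the paper's proof in all essentials: the paper also realizes $g_1,g_2$ \emph{exactly} as ReLU networks via the sawtooth-composition trick (so the one-dimensional factors contribute no approximation error at all, which is precisely your proposed resolution of obstacle (i)), and then composes them with the $\fW^{1,\infty}$-accurate $d$-fold product network of Proposition \ref{prop1}, controlling the derivative error through the composition lemma (Lemma \ref{composition}). The only bookkeeping point you leave implicit is that the product network's $\fW^{1,\infty}$ error is amplified by the Lipschitz constant $4K$ of the inner hat functions under the chain rule, which is exactly why the paper builds the product network to accuracy $(N+1)^{-7dnL}$ so that the final bound degrades only to $(N+1)^{-4dnL}$.
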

		
		The proof of Proposition \ref{peri} is presented in Appendix \ref{pp}.
		
		\textbf{(iii)}: For each $\Omega_{\vm}\subset [0,1]^d$, where $\vm\in\{1,2\}^d$, we find a function $f_{K,\vm}$ satisfying \begin{align}
			\|f-f_{K,\vm}\|_{\fW^{1,\infty}(\Omega_{\vm})}&\le C_1(n,d)K^{-(n-1)},\notag\\\|f-f_{K,\vm}\|_{L^{\infty}(\Omega_{\vm})}&\le C_1(n,d)K^{-n}, 
		\end{align} where $C_1$ is a constant independent of $K$. Moreover, each $f_{K,\vm}$ can be expressed as $f_{K,\vm}=\sum_{|\valpha|\le n-1}g_{f,\valpha,\vm}(\vx)\vx^{\valpha}$, where $g_{f,\valpha,\vm}(\vx)$ is a piecewise constant function on $\Omega_{\vm}$. The proof of this result is based on the Bramble-Hilbert Lemma \cite[Lemma 4.3.8]{brenner2008mathematical}, and the details are provided in Appendix \ref{step1}.
		
		\textbf{(iv)}: The fourth step involves approximating $f_{K,\vm}$ using neural networks $\psi_{\vm}$, following the approach outlined in \cite{lu2021deep}. This method is suitable for our work because $g_{f,\valpha,\vm}(\vx)$ is a piecewise constant function on $\Omega_{\vm}$, and the weak derivative of $g_{f,\valpha,\vm}(\vx)$ on $\Omega_{\vm}$ is zero. This property allows for the use of the $L^\infty$ norm approximation method presented in \cite{lu2021deep}. Thus, we obtain a neural network $\psi_{\vm}$ with width $O(N\log N)$ and depth $O(L\log L)$ such that \begin{align}\|f_{K,\vm}-\psi_{\vm}(\vx)\|_{\fW^{1,\infty}(\Omega_{\vm})}&\le C_5(n,d)N^{-2(n-1)/d}L^{-2(n-1)/d}\notag\\\|f_{K,\vm}-\psi_{\vm}(\vx)\|_{L^{\infty}(\Omega_{\vm})}&\le C_5(n,d)N^{-2n/d}L^{-2n/d},\end{align}
		where $C_5$ is a constant independent of $N$ and $L$.
		
		By combining (iii) and (iv) and setting $K=\lfloor N^{1/d}\rfloor^2\lfloor L^{2/d}\rfloor$, we obtain that for each $\vm\in\{1,2\}^d$, there exists a neural network $\psi_{\vm}$ with width $O(N\log N)$ and depth $O(L\log L)$ such that \begin{align}\|f(\vx)-\psi_{\vm}(\vx)\|_{\fW^{1,\infty}(\Omega_{\vm})}&\le C_6(n,d)N^{-2(n-1)/d}L^{-2(n-1)/d}\notag\\\|f(\vx)-\psi_{\vm}(\vx)\|_{L^{\infty}(\Omega_{\vm})}&\le C_6(n,d)N^{-2n/d}L^{-2n/d},\end{align}
		where $C_6$ is a constant independent of $N$ and $L$. Further details are provided in Appendix \ref{step2}.
		
		\textbf{(v)}: The final step is to combine the sequences $\{\phi_{\vm}\}_{\vm\in\{1,2\}^d}$ and $\{\psi_{\vm}\}_{\vm\in\{1,2\}^d}$ to construct a network that can approximate $f$ over the entire space $[0,1]^d$. We define the sequence $\{\phi_{\vm}\}_{\vm\in\{1,2\}^d}$ because $\psi_{\vm}$ may not accurately approximate $f$ on $[0,1]^d\backslash\Omega_{\vm}$. The purpose of $\phi_{\vm}$ is to remove this portion of the domain and allow other networks to approximate $f$ on $[0,1]^d\backslash\Omega_{\vm}$. Further details on this step are provided in Appendix \ref{step3}.
		
		In our work, we show that deep ReLU networks of width $O(N\log N)$ and depth $O(L\log L)$ can achieve a nonasymptotic approximation rate of $O(N^{-2(n-1)/d}L^{-2(n-1)/d})$ for functions in the Sobolev space $\fW^{n,\infty}((0,1)^d)$ measured by the norm in $\fW^{1,\infty}((0,1)^d)$. While recent works \cite{lu2021deep,hon2022simultaneous,siegel2022optimal,guhring2020error,muller2022error,de2021approximation,guhring2021approximation} have studied the approximation of smooth functions or functions in Sobolev spaces by DNNs measured in the norm of $L^p(\Omega)$ or $\fW^{s,p}(\Omega)$, they typically present results that are not optimal or are measured in $L^p$-norms. For example, in \cite{lu2021deep}, they applies Taylor's expansion to approximate smooth functions but cannot be applied directly in Sobolev spaces. In \cite{siegel2022optimal}, they improve on this by using the Bramble--Hilbert Lemma to approximate functions in Sobolev spaces, but their error is still measured in $L^p$-norms. In \cite{guhring2020error}, the authors show that there exists a ReLU neural network that can approximate $f\in \fW^{1,p}(\Omega)$, but their approximation rate is not optimal and is the same as that in traditional methods such as the finite element theory. Our work provides a superior approximation rate. Later, a rigorous proof of optimality of Theorem \ref{main1} is discussed in Appendix \ref{step3} and Subsection \ref{opt}.
		
		\subsection{Approximation of functions in $\fW^{n,\infty}$ measured by $\fW^{m,\infty}$ norm with $m>1$ by neural networks (sketches of the proofs of the Corollaries \ref{main2} and \ref{main3})}\label{smoother}
		
		In this subsection, we utilize neural networks to approximate functions in $\fW^{n,\infty}$ measured by $\fW^{m,\infty}$, where $m>1$. The proof strategy is similar to the approximation measured in the norm of $\fW^{1,\infty}$. However, we cannot rely on ReLU neural networks alone to achieve this goal, as ReLU neural networks are piece-wise linear functions that do not belong to $\fW^{m,\infty}$ with $m>1$.
		
		Instead, we examine the use of $\sigma_2$ neural networks for approximating functions measured in the norm of $\fW^{2,\infty}$. As per Corollary \ref{main2}, a neural network with $O(N\log N)$ width and $O(L\log L)$ depth can achieve a nonasymptotic approximation rate of $O(N^{-2(n-2)/d}L^{-2(n-2)/d})$ with respect to the $\fW^{2,\infty}((0,1)^d)$ norm. Moreover, our method can be extended to approximations measured in the norm of $\fW^{m,\infty}$ with $m>2$, as shown in Corollary \ref{main3}. The proof strategy is similar to that used in Subsection \ref{relu}, except that we need to construct a smoother partition of unity rather than $\{g_{\vm}\}_{\vm\in\{1,2\}^d}$. The corollaries are presented below, and further details are provided in Appendix \ref{smooth}.
		
		
		\begin{cor}\label{main2}
			For any $f\in\fW^{n,\infty}((0,1)^d)$ with $\|f\|_{\fW^{n,\infty}((0,1)^d)}\le 1$, any $N, L\in\sN_+$ with $N L+2^{\left\lfloor\log _2 N\right\rfloor} \geq \max\{d,n\}$ and $L\ge\left\lceil\log _2 N\right\rceil$, there is a $\sigma_2$-NN $\gamma(\vx)$ with the width $2^{d+6}n^{d+1}(N+d)\log_2(8N)$ and depth $15n^2(L+2)\log_2(4L)$. such that\[\|f(\vx)-\gamma(\vx)\|_{\fW^{2,\infty}((0,1)^d)}\le 2^{d+7}C_{10}(n,d)N^{-2(n-2)/d}L^{-2(n-2)/d},\]where $C_{10}$ is the constant independent with $N,L$.
		\end{cor}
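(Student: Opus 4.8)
The plan is to follow the five-step scheme behind Theorem~\ref{main1}, upgrading each ingredient so that all errors are measured in the $\fW^{2,\infty}$-norm and replacing $\sigma_1$-NNs by $\sigma_2$-NNs wherever second-order smoothness is needed. Since a piecewise-linear partition of unity cannot lie in $\fW^{2,\infty}$, the first step is to build a $C^{1,1}$ (piecewise-quadratic) replacement for the bumps $g_1,g_2$ of Definition~\ref{gm}: on each transition interval of length $\frac{1}{4K}$ the linear ramp is replaced by a quadratic spline matching value and first derivative at both endpoints, so the tensor product $g_{\vm}(\vx)=\prod_{j=1}^d g_{m_j}(x_j)$ lies in $\fW^{2,\infty}((0,1)^d)$ with $|g_{\vm}|_{\fW^{2,\infty}}=O(K^2)$ while $\{g_{\vm}\}_{\vm\in\{1,2\}^d}$ remains a partition of unity subordinate to $\{\Omega_{\vm}\}$ (Definition~\ref{omega}). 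Because $\sigma_2(x)=(\max\{0,x\})^2$ reproduces quadratic splines exactly, the $\sigma_2$-analogue of Proposition~\ref{peri} goes through with the same width/depth budget $O(N)\times O(L)$: the periodization gadgets of \cite{lu2021deep} that realize $g_{\vm}$ across all $K^d$ cells are $\sigma_1$-NNs, composing them with a quadratic spline keeps the result a $\sigma_2$-NN, and one only needs to carry one extra derivative through the composition. This produces $\sigma_2$-NNs $\{\phi_{\vm}\}$ with $\|\phi_{\vm}-g_{\vm}\|_{\fW^{2,\infty}((0,1)^d)}$ exponentially small in $nL$.

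Step (iii) is essentially unchanged: the Bramble--Hilbert estimate \cite[Lemma 4.3.8]{brenner2008mathematical} yields local polynomials $f_{K,\vm}=\sum_{|\valpha|\le n-1}g_{f,\valpha,\vm}(\vx)\,\vx^{\valpha}$ with piecewise-constant coefficients and $\|f-f_{K,\vm}\|_{\fW^{2,\infty}(\Omega_{\vm})}=O(K^{-(n-2)})$; the extra drop of one order (from $n-1$ to $n-2$) is inherited here and is the source of the final rate $N^{-2(n-2)/d}L^{-2(n-2)/d}$. For step (iv) I would approximate $f_{K,\vm}$ by a $\sigma_2$-NN $\psi_{\vm}$: exactly as in Subsection~\ref{relu}, each coefficient $g_{f,\valpha,\vm}$ is constant on $\Omega_{\vm}$, so it can be realized by a $\sigma_2$-NN built from the $L^\infty$-encoding of \cite{lu2021deep} that is \emph{exactly flat}, i.e.\ has vanishing first and second derivatives, on $\Omega_{\vm}$; the monomials $\vx^{\valpha}$ with $|\valpha|\le n-1$ are reproduced exactly by $\sigma_2$-NNs (squaring gives $x^2$, iteration gives higher powers, and $ab=\frac14[(a+b)^2-(a-b)^2]$ multiplies exactly). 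Hence on $\Omega_{\vm}$ the error of $\psi_{\vm}$ is only that of the cell-selection gadget, and combining with step (iii) and $K=\lfloor N^{1/d}\rfloor^2\lfloor L^{2/d}\rfloor$ gives, for each $\vm$, a $\sigma_2$-NN $\psi_{\vm}$ of width $O(N\log N)$ and depth $O(L\log L)$ with $\|f-\psi_{\vm}\|_{\fW^{2,\infty}(\Omega_{\vm})}=O(N^{-2(n-2)/d}L^{-2(n-2)/d})$ (and an $L^\infty(\Omega_{\vm})$-error at the higher rate $N^{-2n/d}L^{-2n/d}$).

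The last step glues the pieces as in step (v): set $\gamma(\vx)=\sum_{\vm\in\{1,2\}^d}\phi_{\vm}(\vx)\psi_{\vm}(\vx)$, realized by a single $\sigma_2$-NN of the stated width and depth, and use that $\{g_{\vm}\}$ is a partition of unity while $\phi_{\vm}$ suppresses the region $[0,1]^d\setminus\Omega_{\vm}$ on which $\psi_{\vm}$ is inaccurate. The error analysis now invokes the second-order product rule $D^2(\phi_{\vm}\psi_{\vm})=(D^2\phi_{\vm})\psi_{\vm}+2(D\phi_{\vm})(D\psi_{\vm})+\phi_{\vm}(D^2\psi_{\vm})$, so besides the two approximation errors one must control $|\phi_{\vm}|_{\fW^{2,\infty}}$, $|\psi_{\vm}|_{\fW^{1,\infty}}$, and $|\psi_{\vm}|_{\fW^{2,\infty}}$; these derivative norms grow only polynomially in $N$ and $L$, which the exponentially small error of the $\sigma_2$-analogue of Proposition~\ref{peri} absorbs, while the polynomial growth of $\psi_{\vm}$'s derivatives is already accounted for in the $K^{-(n-2)}$ estimate. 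The hypotheses $NL+2^{\lfloor\log_2 N\rfloor}\ge\max\{d,n\}$ and $L\ge\lceil\log_2 N\rceil$ are the capacity constraints that make the sub-constructions of \cite{lu2021deep} valid at these sizes.

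The step I expect to be the main obstacle is the bookkeeping of \emph{two} derivatives through every composition, product, and localization in (ii), (iv), and (v): one must verify that the sub-networks encoding the piecewise-constant coefficients can be made genuinely $C^1$ with identically vanishing second derivative on each $\Omega_{\vm}$, so that they contribute nothing to $D^2\psi_{\vm}$ there, and that the cross term $2(D\phi_{\vm})(D\psi_{\vm})$ remains small even though $D\psi_{\vm}$ is large. This is exactly where the extra factor $2^{d+7}$ and the loss of two approximation orders relative to Theorem~\ref{main1} come from; the further extension to $m>2$ (Corollary~\ref{main3}) then follows by using $\sigma_m$-NNs together with a $C^{m-1,1}$ partition of unity, losing $m$ orders.
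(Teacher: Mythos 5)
Your proposal follows essentially the same route as the paper: replace the piecewise-linear bumps by a $C^{1,1}$ piecewise-quadratic partition of unity subordinate to the same cover $\{\Omega_{\vm}\}$ (the paper's $s_{\vm}$, realized \emph{exactly} by $\sigma_2$-NNs since squaring and products are exact), keep the Bramble--Hilbert step losing one more order, realize the monomials and the gluing products exactly with $\sigma_2$, and close with the second-order product rule, where the $O(K)$ and $O(K^2)$ derivative norms of the partition of unity are paired with the correspondingly better $\fW^{1,\infty}$ and $L^\infty$ error rates of the local approximants. The only cosmetic difference is that you budget an exponentially small $\fW^{2,\infty}$ error for the partition-of-unity networks, whereas the paper's Proposition~\ref{smm} realizes $s_{\vm}$ with zero error; this does not change the argument.
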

		
		\begin{cor}\label{main3}
			For any $f\in\fW^{n,\infty}((0,1)^d)$ with $\|f\|_{\fW^{n,\infty}((0,1)^d)}\le 1$, any $N, L,m\in\sN_+$ with $N L+2^{\left\lfloor\log _2 N\right\rfloor} \geq \max\{d,n\}$ and $L\ge\left\lceil\log _2 N\right\rceil$, there is a $\sigma_2$-NN $\varphi(\vx)$ with the width $O(N\log N)$ and depth $O(L\log L)$ such that\[\|f(\vx)-\varphi(\vx)\|_{\fW^{m,\infty}((0,1)^d)}\le C_{11}(n,d,m)N^{-2(n-m)/d}L^{-2(n-m)/d},\]where $C_{11}$ is the constant independent with $N,L$.
		\end{cor}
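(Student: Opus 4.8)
The plan is to follow the five–step template of Theorem~\ref{main1} and Corollary~\ref{main2}, upgrading two ingredients: the piecewise–linear partition of unity $\{g_{\vm}\}_{\vm\in\{1,2\}^d}$ is replaced by one whose members lie in $\fW^{m,\infty}$, and the ReLU building blocks are replaced by $\sigma_2$-NN building blocks (recall that a $\sigma_2$-NN may also use $\sigma_1$, so it can both select on breakpoints and produce polynomial pieces). On each coordinate axis I would take, instead of the hat function $g_1$, a one–dimensional bump $g_1^{(m)}$ that is a piecewise polynomial of degree $m$ and of class $C^{m-1}$ (a cardinal B–spline–type function), with $\text{supp}\,g_1^{(m)}\cap(0,1)\subset\Omega_1$, $0\le g_1^{(m)}\le 1$, and $\{g_1^{(m)}(\cdot),\,g_2^{(m)}(\cdot):=g_1^{(m)}(\cdot+\tfrac{1}{2K})\}$ a partition of unity with $\|g_i^{(m)}\|_{\fW^{m,\infty}}\le C(m)K^{m}$; then $g_{\vm}^{(m)}(\vx):=\prod_{j=1}^d g_{m_j}^{(m)}(x_j)$ is again a partition of unity on $(0,1)^d$ with $\fW^{m,\infty}$–norm controlled by $C(m,d)K^{m}$ on each overlap region. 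Since $(x-t)_+^{m}$ is realized exactly by a $\sigma_2$-NN (one $\sigma_1$ node followed by $O(\log m)$ exact $\sigma_2$–multiplications, using $uv=\tfrac14[(u+v)^2-(u-v)^2]$ and $u^2=\sigma_2(u)+\sigma_2(-u)$), the $\sigma_2$–analogue of Proposition~\ref{peri} follows: there are $\sigma_2$-NNs $\phi_{\vm}$ of width $O(N)$ and depth $O(L)$ (constants depending on $m,d,n$) with $\|\phi_{\vm}-g_{\vm}^{(m)}\|_{\fW^{m,\infty}((0,1)^d)}$ super–polynomially small in $N$.

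For the local approximants, step (iii) of Theorem~\ref{main1} carries over verbatim: applying the Bramble--Hilbert lemma \cite[Lemma 4.3.8]{brenner2008mathematical} on each $\Omega_{\vm}$ produces $f_{K,\vm}=\sum_{|\valpha|\le n-1}g_{f,\valpha,\vm}(\vx)\vx^{\valpha}$ with $g_{f,\valpha,\vm}$ piecewise constant on $\Omega_{\vm}$ and $\|f-f_{K,\vm}\|_{\fW^{s,\infty}(\Omega_{\vm})}\le C(n,d)K^{-(n-s)}$ for every $0\le s\le m$. Because the weak derivatives of the piecewise–constant coefficients $g_{f,\valpha,\vm}$ vanish on $\Omega_{\vm}$, it suffices to approximate them in $L^\infty$ away from their jump sets (the step–function construction of \cite{lu2021deep}) while realizing the monomials $\vx^{\valpha}$, $|\valpha|\le n-1$, exactly by $\sigma_2$-NNs; choosing $K=\lfloor N^{1/d}\rfloor^2\lfloor L^{2/d}\rfloor$ gives $\sigma_2$-NNs $\psi_{\vm}$ of width $O(N\log N)$ and depth $O(L\log L)$ with $\|f-\psi_{\vm}\|_{\fW^{m,\infty}(\Omega_{\vm})}\le C(n,d,m)N^{-2(n-m)/d}L^{-2(n-m)/d}$ and the correspondingly sharper bounds in $\fW^{s,\infty}(\Omega_{\vm})$, $s<m$. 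Finally I would assemble $\varphi=\sum_{\vm\in\{1,2\}^d}\phi_{\vm}\,\psi_{\vm}$, each product realized by the exact $\sigma_2$–multiplication gadget; since $\{g_{\vm}^{(m)}\}$ is a partition of unity, $\phi_{\vm}$ is essentially supported in $\Omega_{\vm}$, and $\psi_{\vm}$ is accurate on $\Omega_{\vm}$, at every $\vx\in(0,1)^d$ only the terms with $\Omega_{\vm}\ni\vx$ contribute and $\varphi\approx\sum_{\vm}g_{\vm}^{(m)}f=f$. The $\fW^{m,\infty}$ error is then estimated by the Leibniz rule: $D^{\valpha}\bigl(\phi_{\vm}\psi_{\vm}-g_{\vm}^{(m)}f\bigr)$ is a finite sum of products of lower–order derivatives of the factors, each bounded by the estimates above, yielding the claimed rate $N^{-2(n-m)/d}L^{-2(n-m)/d}$ with constant $C_{11}(n,d,m)$; tracking the width and depth through the $2^d$ parallel branches and the multiplication gadgets keeps the budget at $O(N\log N)$ and $O(L\log L)$.

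The main obstacle is the bookkeeping of the $\fW^{m,\infty}$ norms through the many products. The smoother bumps $g_1^{(m)}$ have norm growing like $K^{m}$, and both the axiswise product over $j=1,\dots,d$ and the final product $\phi_{\vm}\psi_{\vm}$ spawn Leibniz expansions with on the order of $m^{d}$ terms; one must check that, at every differentiation order $s\le m$ simultaneously, the gain $K^{-(n-s)}$ coming from Bramble--Hilbert in the weaker norm $\fW^{s,\infty}(\Omega_{\vm})$ exactly absorbs the loss $K^{s}$ from differentiating the partition of unity — the all–orders generalization of the single telescoping used in Corollary~\ref{main2}. A secondary, more routine point is confirming that $\sigma_2$-NNs of width $O(N\log N)$ and depth $O(L\log L)$ still suffice once the extra $O(\log m)$ depth for degree–$m$ truncated powers and the $\vx^{\valpha}$ monomials are included, so that the stated network sizes are not exceeded.
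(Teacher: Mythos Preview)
Your proposal is correct and follows essentially the same approach as the paper, which only sketches the proof of Corollary~\ref{main3} by saying that one should replace $\{s_{\vm}\}$ with a partition of unity built from piecewise degree-$m$ polynomials and then rerun the argument of Corollary~\ref{main2}. The only minor difference is that the paper (as in Proposition~\ref{smm}) realizes the smoother partition of unity \emph{exactly} by a $\sigma_2$-NN rather than approximately, so your ``super-polynomially small'' error for $\phi_{\vm}$ can in fact be taken to be zero; otherwise your Leibniz-rule balancing of $K^{s}$ against $K^{-(n-s)}$ is precisely the all-orders version of the telescoping the paper uses for $m=2$.
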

		
		\subsection{Optimality of Theorem \ref{main1} via estimation of VC-dimension of DNN derivatives (Theorem \ref{vcdim})}\label{opt}
		
		In this section, we demonstrate that the approximation rate presented in Theorem \ref{main1} is nearly asymptotically optimal:
		\begin{thm}\label{Optimality}
			Given any $\rho, C_{1}, C_{2}, C_{3}, J_0>0$ and $n,d\in\sN^+$,  there exist $N,L\in\sN$ with $NL\ge J_0$ and $f$ with $\|f\|_{\fW^{n,\infty}\left(((0,1)^d)\right)}\le 1$, satisfying for any $\sigma_1$-NN $\phi$ with the width smaller than $C_{1} N\log N$ and depth smaller than $C_{2} L\log L$, we have \begin{equation}
				|\phi-f|_{\fW^{1,\infty}((0,1)^d)}> C_{3}L^{-2(n-1)/d-\rho}N^{-2(n-1)/d-\rho}.
			\end{equation}
		\end{thm}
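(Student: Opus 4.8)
\emph{Proof strategy.} The plan is a bit-extraction (shattering) argument pitting the approximation power of $\sigma_1$-NNs against the VC-dimension bound of Theorem \ref{vcdim}. I will argue by contradiction: suppose the statement fails, so that for \emph{every} $N,L\in\sN_+$ with $NL\ge J_0$ and \emph{every} $f$ with $\|f\|_{\fW^{n,\infty}((0,1)^d)}\le1$ there is a $\sigma_1$-NN $\phi$ of width $<C_1N\log N$ and depth $<C_2L\log L$ with $|\phi-f|_{\fW^{1,\infty}((0,1)^d)}\le\varepsilon_{N,L}:=C_3(NL)^{-2(n-1)/d-\rho}$. Because $|\cdot|_{\fW^{1,\infty}(\Omega)}$ dominates $\|D_1(\cdot)\|_{L^\infty(\Omega)}$ and $D_1$ is linear, this forces $\|D_1\phi-D_1f\|_{L^\infty((0,1)^d)}\le\varepsilon_{N,L}$. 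Throughout I take $N,L$ as large as needed (which also gives $NL\ge J_0$), and I let $\Phi'$ denote the class of $\sigma_1$-NNs on $\sR^d$ of width $\le\lceil C_1N\log N\rceil$ and depth $\le\lceil C_2L\log L\rceil$, which contains every network arising above; applying Theorem \ref{vcdim} with these width and depth bounds yields a constant $C_4=C_4(\bar C,C_1,C_2)$ with $\text{VCdim}(D\Phi')\le C_4(NL)^2(\log(NL))^6$.

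\emph{Constructing hard functions.} I fix a nonzero bump $\psi\in C_c^\infty((0,1)^d)$, rescaled so $\max_{0\le|\valpha|\le n}\|D^{\valpha}\psi\|_{L^\infty}\le1$, and a point $z_0\in(0,1)^d$ with $c_0:=D_1\psi(z_0)>0$ (such $z_0$ exists since a nonzero compactly supported $\psi$ has $D_1\psi$ changing sign). Given an integer $K\ge2$, I partition $(0,1)^d$ into its $K^d$ congruent open subcubes of side $1/K$, let $\vv_1,\dots,\vv_m\in\sN^d$ (with $m:=K^d$) be $K$ times their lower corners, and set $b_j(\vx):=K^{-n}\psi(K\vx-\vv_j)$. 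The $b_j$ have pairwise disjoint supports; since $\|D^{\valpha}b_j\|_{L^\infty}=K^{|\valpha|-n}\|D^{\valpha}\psi\|_{L^\infty}\le1$ for all $|\valpha|\le n$ (as $K\ge1$), one has $\|b_j\|_{\fW^{n,\infty}}\le1$; and at the interior point $\vx_j:=K^{-1}(z_0+\vv_j)$ one has $D_1b_j(\vx_j)=c_0K^{-(n-1)}$. For $\xi\in\{-1,1\}^m$ I set $f_\xi:=\sum_{j=1}^m\xi_jb_j$; disjointness of supports gives $\|f_\xi\|_{\fW^{n,\infty}}\le1$, and since $f_\xi=\xi_jb_j$ with $b_j\in C^\infty$ near $\vx_j$, continuity provides pairwise disjoint open neighbourhoods $U_j\ni\vx_j$ on which $\xi_j\,D_1f_\xi>\tfrac12c_0K^{-(n-1)}$.

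\emph{Shattering and the contradiction.} I choose $K:=\bigl\lceil(NL)^{2/d+\rho/(2(n-1))}\bigr\rceil$, so $K^{n-1}\le2^{n-1}(NL)^{2(n-1)/d+\rho/2}$ and hence $\varepsilon_{N,L}<\tfrac14c_0K^{-(n-1)}$ once $(NL)^{\rho/2}>2^{n+1}C_3/c_0$, while $m=K^d\ge(NL)^{2+\eta}$ with $\eta:=\rho d/(2(n-1))>0$. For each $\xi\in\{-1,1\}^m$ pick $\phi_\xi\in\Phi'$ with $\|D_1\phi_\xi-D_1f_\xi\|_{L^\infty}\le\varepsilon_{N,L}$; then for a.e.\ $\vx\in U_j$, $\xi_j\,D_1\phi_\xi(\vx)\ge\xi_j\,D_1f_\xi(\vx)-\varepsilon_{N,L}>\tfrac14c_0K^{-(n-1)}>0$. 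Fixing all $2^m$ networks $\phi_\xi$ and then choosing in each $U_j$ a point $\vx_j^\ast$ avoiding the (finite, Lebesgue-null) union over $\xi$ of the non-differentiability sets of the $\phi_\xi$, I get $\xi_j\,D_1\phi_\xi(\vx_j^\ast)>0$, i.e.\ $\sgn\bigl(D_1\phi_\xi(\vx_j^\ast)\bigr)=\mathbf 1[\xi_j=1]$, for every $\xi$ and $j$; as $\xi$ runs over $\{-1,1\}^m$, every one of the $2^m$ sign patterns on $\{\vx_1^\ast,\dots,\vx_m^\ast\}$ is realized. Since $D_1\Phi'\subseteq D\Phi'$, this gives $\text{VCdim}(D\Phi')\ge m\ge(NL)^{2+\eta}$, contradicting $\text{VCdim}(D\Phi')\le C_4(NL)^2(\log(NL))^6$ once $NL$ is large. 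Hence the supposition is false, which is exactly the assertion: for suitable $N,L$ with $NL\ge J_0$ there is an admissible $f$ (one of the $f_\xi$) not approximable within $C_3L^{-2(n-1)/d-\rho}N^{-2(n-1)/d-\rho}$ in $|\cdot|_{\fW^{1,\infty}((0,1)^d)}$ by any $\sigma_1$-NN of width $<C_1N\log N$ and depth $<C_2L\log L$.

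\emph{Where the difficulty lies.} The crux is the scaling budget in the bump construction: keeping each $f_\xi$ in the unit ball of $\fW^{n,\infty}$ forces amplitude $\sim K^{-n}$, hence first derivative only $\sim K^{-(n-1)}$ at the marked points, so $K$ must be large enough to pack at least $(NL)^{2+\eta}$ disjoint bumps yet small enough that $\varepsilon_{N,L}$ stays below a fixed fraction of $c_0K^{-(n-1)}$; the slack $\rho>0$ is precisely what makes both feasible at once and generates the polynomial surplus $(NL)^\eta$ that overpowers the $\widetilde O((NL)^2)$ ceiling of Theorem \ref{vcdim}. A secondary, purely technical wrinkle is that $D_1\phi$ is defined only almost everywhere, which is why the shattering points $\vx_j^\ast$ are chosen off finitely many null sets. (When $n=1$ the derivative bound $c_0K^{-(n-1)}=c_0$ is $K$-free, so $K$ is unconstrained and the choice $K=\lceil(NL)^{3/d}\rceil$ already gives the contradiction.)
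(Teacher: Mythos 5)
Your proposal is correct and follows essentially the same route as the paper's proof: a contradiction argument that packs $\Theta\bigl((NL)^{2+\eta}\bigr)$ disjoint $\fW^{n,\infty}$-normalized bumps into the cube, uses the assumed approximation rate to force the sign of $D_1\phi_\xi$ at the bump centers, and collides the resulting shattering lower bound with the $\widetilde{O}(N^2L^2)$ VC-dimension ceiling of Theorem \ref{vcdim}. Your write-up is in fact somewhat more careful than the paper's on two technical points (choosing the evaluation points off the null sets where $D_1\phi_\xi$ is undefined, and tracking the constants in $K^{-(n-1)}$ versus $\varepsilon_{N,L}$), but the underlying argument is identical.
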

		In other words, the approximation rate of $O(N^{-2(n-1)/d-\rho}K^{-2(n-1)/d-\rho})$ cannot be achieved asymptotically when ReLU $\sigma_1$-NNs with width $O(N\log N)$ and depth $O(L\log L)$ to approximate functions in $\fF_{n,d}:=\left\{f\in\fW^{n,\infty}((0,1)^d): \|f\|_{\fW^{n,\infty}\left((0,1)^d\right)}\le 1\right\}$. The proof of Theorem \ref{Optimality} is based on the estimation of the VC-dimension of DNN derivatives, which is provided in Theorem \ref{vcdim}.
		
		Theorem \ref{vcdim} plays a crucial role in our proof of Theorem \ref{Optimality}, which is established through a proof by contradiction following the approach outlined in Ref.~\cite{lu2021deep}. Further details on the proof can be found in Appendix \ref{opt_proof}. The main idea behind the proof is that Theorem \ref{vcdim} characterizes the complexity of DNN derivatives, which in turn limits the ability of DNNs to approximate functions in Sobolev spaces.
		
		\section{Generalization Analysis in Sobolev Spaces via Estimation of Pseudo-dimension of DNN Derivatives (Theorem \ref{pdim})}
		
	
	In a typical supervised learning algorithm, the objective is to learn a high-dimensional target function $f(\vx)$ defined on $(0,1)^d$ with $\|f\|_{\fW^{n,\infty}\left((0,1)^d\right)}\le 1$ from a finite set of data samples $\{(\vx_i,f(\vx_i))\}_{i=1}^M$. When training a DNN, we aim to identify a DNN $\phi(\vx;\vtheta_S)$ that approximates $f(\vx)$ based on random data samples $\{(\vx_i,f(\vx_i))\}_{i=1}^M$. We assume that $\{\vx_i\}_{i=1}^M$ is an i.i.d. sequence of random variables uniformly distributed on $(0,1)^d$ in this section. Denote \begin{align}
	\vtheta_D&:=\arg\inf_{\vtheta}\fR_D(\vtheta):=\arg\inf_{\vtheta}\int_{(0,1)^d}|\nabla (f(\vx)-\phi(\vx;\vtheta))|^2+|f(\vx)-\phi(\vx;\vtheta)|^2\,\D \vx,\label{thetaD}\\
		\vtheta_S&:=\arg\inf_{\vtheta}\fR_S(\vtheta):=\arg\inf_{\vtheta}\frac{1}{M}\sum_{i=1}^M\left[|\nabla (f(\vx_i)-\phi(\vx_i;\vtheta))|^2+|f(\vx_i)-\phi(\vx_i;\vtheta)|^2\right].\label{thetaS}
	\end{align}
	
	The overall inference error is $\rmE\fR_D(\vtheta_S)$, which can be divided into two parts:\begin{align}
		\rmE\fR_D(\vtheta_S)=& \fR_D(\vtheta_D)+\rmE\fR_S(\vtheta_D)-\fR_D(\vtheta_D)+\rmE\fR_S(\vtheta_S)-\rmE\fR_S(\vtheta_D)+\rmE\fR_D(\vtheta_S)-\rmE\fR_S(\vtheta_S)\notag\\\le &\underbrace{\fR_D(\vtheta_D)}_{\text{approximation error}}+\underbrace{\rmE\fR_S(\vtheta_D)-\fR_D(\vtheta_D)+\rmE\fR_D(\vtheta_S)-\rmE\fR_S(\vtheta_S),}_\text{{generalization error}}
	\end{align}where the last inequality is due to $\rmE\fR_S(\vtheta_S)\le\rmE\fR_S(\vtheta_D)$ by the definition of $\vtheta_S$.

	Due to Theorem \ref{main1}, we know that the approximation error $\fR_D(\vtheta_D)$ is a $O(N^{-2(n-1)/d}L^{-2(n-1)/d})$ term since $\|f(\vx)-\phi(\vx)\|_{H^1((0,1)^d)}\le\|f(\vx)-\phi(\vx)\|_{\fW^{1,\infty}((0,1)^d)}$. In this section, we bound generalization error in the $H^1((0,1)^d)$ sense:
	\begin{thm}\label{gen thm}
		For any $N,L,d,B,C_1,C_2$, if $\phi(\vx;\vtheta_D),\phi(\vx;\vtheta_S)\in\widetilde{\Phi}$, we will have that there are constants $C_5=C_5(B,d, C_1,C_2)$ and $J=J(d,N,L,C_1,C_2)$ such that for any $M\ge J$, we have \begin{equation}
			\rmE\fR_S(\vtheta_D)-\fR_D(\vtheta_D)+\rmE\fR_D(\vtheta_S)-\rmE\fR_S(\vtheta_S)\le C_5\frac{NL(\log_2 L\log_2 N)^{\frac{1}{2}}}{\sqrt{M}}\log M.
		\end{equation} where $	\widetilde{\Phi}:=\{\phi:\text{$\phi$ with the width  $\le C_1 N\log N$ and depth $\le C_2 L\log L$}, \|\phi\|_{\fW^{1,\infty}((0,1)^d)}\le B\}$, and $\fR_S,\fR_D,\vtheta_S,\vtheta_D$ are defined in Eqs.~(\ref{thetaD},\ref{thetaS}).
	\end{thm}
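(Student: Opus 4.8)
The plan is to follow the standard pipeline for turning a pseudo-dimension bound into a generalization bound, adapted to the Sobolev ($H^1$) setting. First I would reduce the quantity in question to a uniform deviation bound. Writing $\ell_{\vtheta}(\vx):=|\nabla(f(\vx)-\phi(\vx;\vtheta))|^2+|f(\vx)-\phi(\vx;\vtheta)|^2$, both $\rmE\fR_S(\vtheta_D)-\fR_D(\vtheta_D)$ and $\rmE\fR_D(\vtheta_S)-\rmE\fR_S(\vtheta_S)$ are bounded by $2\,\rmE\sup_{\vtheta}\bigl|\fR_D(\vtheta)-\fR_S(\vtheta)\bigr|=2\,\rmE\sup_{\vtheta}\bigl|\rmE_{\vx}\ell_{\vtheta}(\vx)-\frac1M\sum_i\ell_{\vtheta}(\vx_i)\bigr|$, so it suffices to control this uniform deviation over the loss class $\mathcal{L}:=\{\vx\mapsto\ell_{\vtheta}(\vx):\phi(\cdot;\vtheta)\in\widetilde{\Phi}\}$.

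Next I would bound the pseudo-dimension (or a covering number) of $\mathcal{L}$. The key observation is that $\ell_{\vtheta}$ is built from the coordinates of $\nabla\phi(\cdot;\vtheta)$ and from $\phi(\cdot;\vtheta)$ itself: each $D_i\phi$ lies in the class $D\Phi$ (with width $\le C_1N\log N$, depth $\le C_2L\log L$), whose pseudo-dimension is $O\bigl((N\log N)^2(L\log L)^2\log_2(L\log L)\log_2(N\log N)\bigr)=O(N^2L^2\log_2N\log_2L\cdot\text{polylog})$ by Theorem \ref{pdim}, and $\phi$ itself has pseudo-dimension of the same order by the known DNN bounds. Since $\ell_{\vtheta}$ is a fixed Lipschitz (on the bounded range, using $\|\phi\|_{\fW^{1,\infty}}\le B$ and $\|f\|_{\fW^{n,\infty}}\le1$, so all the arguments $f-\phi$ and $\nabla(f-\phi)$ are bounded) function of $d+1$ such real-valued function classes, I would invoke the standard fact that the pseudo-dimension of a composition/sum of finitely many classes is, up to constants and logarithmic factors, the sum of the individual pseudo-dimensions — hence $\mathrm{Pdim}(\mathcal{L})=O(N^2L^2\log_2N\log_2L)$ as well (the $\mathrm{polylog}$ factors from the $N\log N$, $L\log L$ substitution are absorbed into the $\log_2N\log_2L$ factors and the constant $C_5$).

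Then I would convert this into the deviation bound via a uniform law of large numbers: for a class of functions bounded by some $C(B,d)$ with pseudo-dimension $P$, the classical bound (Anthony–Bartlett, using Dudley's entropy integral or the Pollard/Haussler covering-number argument) gives
\begin{equation}
\rmE\sup_{g\in\mathcal{L}}\Bigl|\rmE g-\tfrac1M\textstyle\sum_i g(\vx_i)\Bigr|\le C'(B,d)\sqrt{\frac{P\log M}{M}}
\end{equation}
for $M$ larger than some threshold $J$ depending on $P$ (hence on $d,N,L,C_1,C_2$). Plugging in $P=O(N^2L^2\log_2L\log_2N)$ yields $\sqrt{P}=O\bigl(NL(\log_2L\log_2N)^{1/2}\bigr)$, and combined with the $\sqrt{\log M/M}$ factor this gives exactly the claimed $C_5\frac{NL(\log_2L\log_2N)^{1/2}}{\sqrt M}\log M$, absorbing an extra $\sqrt{\log M}$ into the stated $\log M$.

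The main obstacle I anticipate is the second step: cleanly bounding $\mathrm{Pdim}(\mathcal{L})$ where $\mathcal{L}$ consists of sums of squares of functions drawn from the derivative class $D\Phi$ and from $\Phi$. One must argue that squaring and summing $d+1$ real-valued function classes inflates the pseudo-dimension only by a constant/logarithmic factor — this requires either a direct covering-number estimate ($\varepsilon$-covers of each factor in $L^\infty$, combined, then pushed through the Lipschitz map $(\vu,v)\mapsto|\vu|^2+v^2$ on the bounded domain) or a careful application of a composition lemma for pseudo-dimension. A secondary subtlety is making sure the substitution of width $C_1N\log N$ and depth $C_2L\log L$ into Theorem \ref{pdim} still produces a bound of order $N^2L^2\log_2N\log_2L$ rather than $N^2L^2\log_2^3N\log_2^3L$; this is fine because $(\log N)^2\log_2(N\log N)\le C(\log_2 N)^{?}$ — one simply notes $\log_2(C_1N\log N)=O(\log_2 N)$ and $(N\log N)^2=N^2(\log N)^2$, and the extra $(\log N)^2(\log L)^2$ is either absorbed by choosing the notation $\log_2N\log_2L$ to already carry such factors or, more honestly, the theorem statement's $C_5$ hides them; I would flag this and keep the polylog bookkeeping explicit in the appendix.
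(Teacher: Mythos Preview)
Your pipeline is sound and would yield the stated bound, but the paper takes a different (and somewhat simpler) decomposition. Rather than bounding the pseudo-dimension of the composite loss class $\mathcal{L}$, the paper peels off the squaring and the sum over coordinates \emph{before} invoking any complexity measure: using that $t\mapsto t^2$ is $(B{+}1)$-Lipschitz on the range of $f-\phi$ and $\partial_j(f-\phi)$, it reduces the generalization error directly to $4(B{+}1)\bigl(d\,\tR_M(D\widetilde{\Phi})+\tR_M(\widetilde{\Phi})\bigr)$, i.e.\ to the Rademacher complexities of the \emph{base} classes $\widetilde{\Phi}$ and $D\widetilde{\Phi}$ separately. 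Only then does it run Dudley $\to$ covering number $\to$ pseudo-dimension on each of those classes, plugging in Theorem~\ref{pdim} for $D\widetilde{\Phi}$ and the known Bartlett et al.\ bound for $\widetilde{\Phi}$.

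The advantage of the paper's route is exactly that it avoids the obstacle you flagged: there is no need for a composition/pseudo-dimension lemma for sums of squares of $d{+}1$ function classes, because the squaring is handled by a contraction step at the Rademacher level. Your route is more self-contained in that it works entirely with one class $\mathcal{L}$, but it requires either a careful covering-number product argument or a pseudo-dimension composition lemma, which is extra work. On the polylog bookkeeping (substituting $C_1N\log N$, $C_2L\log L$ into Theorem~\ref{pdim}), the paper handles it the same way you suggest: the extra $(\log N)^2(\log L)^2$ factors are absorbed into the constant, so your concern there is legitimate but applies equally to both proofs.
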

		
	The proof of Theorem \ref{gen thm} is based on the works of \cite{anthony1999neural,duan2021convergence,liu2022deep}. We begin by bounding the generalization error using the Rademacher Complexity and then bound the Rademacher Complexity by the uniform covering number. We further bound the uniform covering number by the pseudo-dimension. Finally, we estimate the pseudo-dimension by Theorem \ref{pdim}. The proof of Theorem \ref{gen thm} is presented in Appendix \ref{Proof_gen}
	
	Theorem \ref{pdim} helps to control the degree of the generalization error with respect to $N$ and $L$ in Theorem \ref{gen thm}. In \cite{duan2021convergence}, the generalization error is bounded by $O(NL^{\frac{5}{2}})$. In \cite{jiao2021error}, the authors estimate the covering number using the Lipschitz condition of DNNs instead of the pseudo-dimension, leading to a generalization error that is exponentially dependent on the depth of the DNNs. Our result is much better than them due to the optimal estimation of pseudo-dimension of DNN derivatives (Theorem \ref{pdim}).

        \section{Proofs of Theorem \ref{vcdim} and \ref{pdim}}
        As Theorems \ref{vcdim} and \ref{pdim} address the estimation of VC-dimension and pseudo-dimension of DNN derivatives, which is the main contribution of this paper, we provide the proofs for these theorems in this section.
        
        In the proof of Theorem \ref{vcdim}, we use the following lemmas:
		\begin{lem}{\cite[Lemma 17]{bartlett2019nearly},\cite[Theorem 8.3]{anthony1999neural}}\label{bounded}
			Suppose $W\le M$ and let $P_1,\ldots,P_M$ be polynomials of degree at most $D$ in $W$ variables. Define $K:=\left|\{\left(\sgn(P_1(a)),\ldots,\sgn(P_M(a))\right):a\in\sR^W\}\right|$, then we have $K\le 2(2eMD/W)^W$.
		\end{lem}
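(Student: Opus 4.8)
The plan is to prove this as a version of Warren's theorem on the number of sign patterns of real polynomials; the proof passes from sign patterns to connected components of a real algebraic set and then invokes a Milnor--Thom (Oleinik--Petrovsky) type estimate for the number of such components.

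\textbf{Step 1 (reduce to strictly nonzero sign conditions).} Since $\sgn(x)=1[x>0]$, every vector counted by $K$ is a coarsening of some \emph{sign condition} $(s_1,\dots,s_M)\in\{-1,0,1\}^M$ realized by $(P_1,\dots,P_M)$ at a point of $\sR^W$; hence $K$ is bounded by the number of sign conditions of $P_1,\dots,P_M$ realizable on $\sR^W$. To remove the zero entries I would fix a sufficiently small $\varepsilon>0$ and enlarge the family to the polynomials $P_i-\varepsilon$ and $P_i+\varepsilon$, all still of degree at most $D$: a standard perturbation argument shows that every realizable sign condition of the original family induces at least one \emph{strictly nonzero} sign condition of the enlarged family, a point with $P_i=0$ being separated by the slab $-\varepsilon<P_i<\varepsilon$. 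It then suffices to bound the number of connected components of the open set on which none of the enlarged polynomials vanishes, since each polynomial is constant in sign on such a component. (A stratification over coordinate subsets of $\{1,\dots,M\}$ gives an alternative to the $\varepsilon$-perturbation and keeps the count expressed in terms of the original $M$.)

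\textbf{Step 2 (Milnor--Thom bound and simplification).} For polynomials $q_1,\dots,q_{m'}$ of degree at most $D$ in $W$ variables with $W\le m'$, the number of connected components of $\{a\in\sR^W:\ q_j(a)\ne 0\ \text{for all }j\}$ is at most $(c\,m'D/W)^{W}$ for an absolute constant $c$; this follows from the Milnor--Thom / Oleinik--Petrovsky estimates on the Betti numbers of the real algebraic set cut out by the product $\prod_j q_j$ (of degree at most $m'D$), combined with the elementary bound $\sum_{k=0}^{W}\binom{m'}{k}\le(e m'/W)^{W}$. Feeding in the polynomials produced in Step 1 and tracking the constants carefully through Warren's precise form of this chain yields $K\le 2(2eMD/W)^{W}$.

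\textbf{Main obstacle.} I expect Step 1 to be the conceptual crux: the count must include sign conditions with zeros, not merely the open cells of the hypersurface arrangement, and the perturbation/stratification bookkeeping — together with extracting the clean constant $2e$ rather than a cruder absolute constant — is precisely the content of Warren's argument, which I would invoke via \cite{bartlett2019nearly,anthony1999neural} rather than reconstruct in full.
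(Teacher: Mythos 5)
The paper offers no proof of this lemma — it is quoted from \cite{bartlett2019nearly} and \cite{anthony1999neural} — and your outline is exactly the Warren/Milnor--Thom argument those references use (perturb to remove zero sign entries, bound the resulting count by the number of connected components of the complement of the zero sets, then simplify via $\sum_{k\le W}\binom{M}{k}\le(eM/W)^W$), so the approach is the right one. One caveat on Step 1: since $\sgn$ here is the binary indicator $1[x>0]$, you do not need the two-sided family $\{P_i\pm\varepsilon\}$ of $2M$ polynomials — taking $\varepsilon$ below the smallest positive value of $P_i$ at the witness points, any two witnesses with different binary patterns are already separated by some level set $P_i=\varepsilon$, so the one-sided family of $M$ polynomials suffices; this is what produces the stated constant $2e$, whereas doubling to $2M$ polynomials would only give $2(4eMD/W)^W$.
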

		
		\begin{lem}{\cite[Lemma 18]{bartlett2019nearly}}\label{inequality}
			Suppose that $2^m\le 2^t(mr/w)^w$ for some $r\ge 16$ and $m\ge w\ge t\ge0$. Then, $m\le t+w\log_2(2r\log_2r)$.
		\end{lem}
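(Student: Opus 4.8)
The plan is to reduce the exponential inequality of Lemma~\ref{inequality} to an elementary one-variable estimate and then run a two-stage bootstrap. First I would take $\log_2$ of the hypothesis to get $m\le t+w\log_2(mr/w)$ and normalise by setting $u:=m/w$ and $a:=t/w$; the chain $0\le t\le w\le m$ forces $u\ge 1$ and $a\in[0,1]$. The inequality then reads $u-a\le\log_2(ur)$, i.e.\ $2^u/u\le r\,2^a\le 2r$, and the target $m\le t+w\log_2(2r\log_2 r)$ becomes, after dividing by $w$, just $u\le a+\log_2(2r\log_2 r)$.

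The key intermediate step is the loose bound $u\le 2\log_2 r$. I would prove it by contradiction: the function $g(u):=2^u/u$ has derivative $2^u(u\ln 2-1)/u^2$, so it is strictly increasing for $u>1/\ln 2$, and $r\ge 16$ makes $2\log_2 r\ge 8>1/\ln 2$. Hence if $u>2\log_2 r$ then $g(u)>g(2\log_2 r)=r^2/(2\log_2 r)$, which together with $g(u)\le 2r$ gives $r<4\log_2 r$; but $r-4\log_2 r$ vanishes at $r=16$ and is increasing for $r\ge 4/\ln 2$, so this is impossible for $r\ge 16$. The regime $1\le u<1/\ln 2$ needs no argument, since there $u<2\log_2 r$ trivially.

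Finally I would feed $u\le 2\log_2 r$ back into $2^u\le u\,r\,2^a$, replacing the linear factor $u$: this yields $2^u\le 2r\log_2 r\cdot 2^a$, hence $u-a\le\log_2(2r\log_2 r)$, and multiplying through by $w$ (with $wa=t$ and $wu=m$) gives the claim. The main obstacle is that the target bound is essentially tight, so a single application of the usual ``logarithms grow slowly'' estimates for $\log_2$ (concavity/tangent lines) leaks a constant factor in front of the $w\log_2 r$ term; the remedy is exactly this two-stage scheme --- first extract the loose-but-sufficient bound $u\le 2\log_2 r$, then substitute it back --- together with keeping $2^a\le 2$, which is where the hypothesis $t\le w$ is used, rather than absorbing $t$ into $w$, which would cost an additional $+w$ and destroy the bound.
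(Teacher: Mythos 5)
Your proof is correct. The paper itself gives no proof of this lemma --- it is quoted directly from \cite[Lemma 18]{bartlett2019nearly} --- and your argument follows essentially the same route as that source: normalise by $w$, exploit the monotonicity of $x\mapsto 2^x/x$ on $(1/\ln 2,\infty)$, and use $r\ge 16$ only through $r\ge 4\log_2 r$; your two-stage bootstrap (first establishing $u\le 2\log_2 r$, then resubstituting) is a minor reorganisation of the one-pass contradiction argument there, and it has the virtue of making explicit where $t\le w$ enters. The only loose end is the degenerate case $w=0$, where $u=m/w$ is undefined; there the hypotheses force $t=0$ and, under the usual convention for the right-hand side, $m=0$, so the conclusion holds trivially and the case can be dismissed in one line.
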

		
		As the proof of Theorem \ref{vcdim} represents the most critical and challenging question in our work, we present it in detail below.
		
		\begin{proof}[Proof of Theorem \ref{vcdim}]
			An element in $\Phi$ can be represented as $\phi=\vW_{L+1}\sigma_1(\vW_{L}\sigma_1(\ldots\sigma_1(\vW_1\vx+\vb_1)\ldots)+\vb_L)+b_{L+1}$. Therefore, an element in $D\Phi$ can be represented as \begin{align}
				\psi(\vx)=D_i \phi(\vx)=&\vW_{L+1}\sigma_0(\vW_{L}\sigma_1(\ldots\sigma_1(\vW_1\vx+\vb_1)\ldots)+\vb_L)\notag\\&\cdot \vW_{L}\sigma_0(\ldots\sigma_1(\vW_1\vx+\vb_1)\ldots)\ldots\vW_2\sigma_0(\vW_1\vx+\vb_1)(\vW_1)_i,
			\end{align}where $\vW_i\in\sR^{N_i\times N_{i-1}}$ ($(\vW)_i$ is $i$-th column of $\vW$) and $\vb_i\in\sR^{N_i}$ are the weight matrix and the bias vector in the $i$-th linear transform in $\phi$, and $\sigma_0(x)=\sgn(x)=1[x>0],$ which is the derivative of the ReLU function and $~\sigma_0(\vx)=\diag(\sigma_0(x_i))$. Denote $W_i$ as the number of parameters in $\vW_i,\vb_i$, i.e., $W_i=N_iN_{i-1}+N_i$. 
			
			Let $\vx\in\sR^d$ be an input and $\vtheta\in\sR^W$ be a parameter vector in $\psi$. We denote the output of $\psi$ with input $\vx$ and parameter vector $\vtheta$ as $f(\vx,\vtheta)$. For fixed $\vx_1,\vx_2,\ldots,\vx_m$ in $\sR^d$, we aim to bound\begin{align}
				K:=\left|\{\left(\sgn(f(\vx_1,\vtheta)),\ldots,\sgn(f(\vx_m,\vtheta))\right):\vtheta\in\sR^W\}\right|.
			\end{align}
			
			The proof is inspired by \cite[Theorem 7]{bartlett2019nearly}. For any partition $\fS=\{P_1,P_2,\ldots,P_T\}$ of the parameter domain $\sR^W$, we have $K\le \sum_{i=1}^T\left|\{\left(\sgn(f(\vx_1,\vtheta)),\ldots,\sgn(f(\vx_m,\vtheta))\right):\vtheta\in P_i\}\right|$. We choose the partition such that within each region $P_i$, the functions $f(\vx_j,\cdot)$ are all fixed polynomials of bounded degree. This allows us to bound each term in the sum using Lemma \ref{bounded}.
			
			We define a sequence of sets of functions $\{\sF_j\}_{j=0}^L$ with respect to parameters $\vtheta\in\sR^W$:\begin{align}
				\sF_0&:=\{(\vW_1)_i,\vW_1\vx+\vb_1\}\notag\\
				\sF_1&:=\{(\vW_1)_i,\vW_2\sigma_0(\vW_1\vx+\vb_1),\vW_2\sigma_1(\vW_1\vx+\vb_1)+\vb_2\}\notag\\\sF_2&:=\{(\vW_1)_i,\vW_2\sigma_0(\vW_1\vx+\vb_1),\vW_3\sigma_0(\vW_2\sigma_1(\vW_1\vx+\vb_1)+\vb_2),\vW_3\sigma_1(\vW_2\sigma_1(\vW_1\vx+\vb_1)+\vb_2)+\vb_3\}\notag\\&\vdots\notag\\\sF_L&:=\{(\vW_1)_i,\vW_2\sigma_0(\vW_1\vx+\vb_1),\ldots,\vW_{L+1}\sigma_0(\vW_{L}\sigma_1(\ldots\sigma_1(\vW_1\vx+\vb_1)\ldots)+\vb_L)\}.
			\end{align}
			
			The partition of $\sR^W$ is constructed layer by layer through successive refinements denoted by $\fS_0,\fS_1,\ldots,\fS_L$. These refinements possess the following properties:

            \textbf{1}. We have $|\fS_0|=1$, and for each $n=1,\ldots,L$, we have $\frac{|\fS_n|}{|\fS_{n-1}|}\le 2\left(\frac{2emnN_k}{\sum_{i=1}^nW_i}\right)^{\sum_{i=1}^nW_i}$.
			
			\textbf{2}. For each $n=0,\ldots,L-1$, each element $S$ of $\fS_{n}$, when $\vtheta$ varies in $S$, the output of each term in $\sF_n$ is a fixed polynomial function in $\sum_{i=1}^nW_i$ variables of $\vtheta$, with a total degree no more than $n+1$.

            \textbf{3}. For each element $S$ of $\fS_{L}$, when $\vtheta$ varies in $S$, the $h$-th term in $\sF_L$ for $h\in\{1,2,\ldots,L+1\}$ is a fixed polynomial function in $W_{h}$ variables of $\vtheta$, with a total degree no more than $1$.
			
			We define $\fS_0=\{\sR^W\}$, which satisfies properties 1,2 above, since $\vW_1\vx_j+\vb_1$ and $(\vW_1)_i$ are affine functions of $\vW_1,\vb_1$.
			
			To define $\fS_n$, we use the last term of $\sF_{n-1}$ as inputs for the last two terms in $\sF_n$. Assuming that $\fS_0,\fS_1,\ldots,\fS_{n-1}$ have already been defined, we observe that the last two terms are new additions to $\sF_n$ when comparing it to $\sF_{n-1}$. Therefore, all elements in $\sF_n$ except the last two are fixed polynomial functions in $W_n$ variables of $\vtheta$, with a total degree no greater than $n$ when $\vtheta$ varies in $S\in\fS_n$. This is because $\fS_n$ is a finer partition than $\fS_{n-1}$.
			
			We denote $p_{\vx_j,n-1,S,k}(\vtheta)$ as the output of the $k$-th node in the last term of $\sF_{n-1}$ in response to $\vx_j$ when $\vtheta\in S$. The collection of polynomials \[\{p_{\vx_j,n-1,S,k}(\vtheta): j=1,\ldots,m,~k=1,\ldots,N_n\}\]can attain at most $2\left(\frac{2emnN_n}{\sum_{i=1}^nW_i}\right)^{\sum_{i=1}^nW_i}$ distinct sign patterns when $\vtheta\in S$ due to Lemma \ref{bounded} for sufficiently large $m$. Therefore, we can divide $S$ into $2\left(\frac{2emnN_n}{\sum_{i=1}^nW_i}\right)^{\sum_{i=1}^nW_i}$ parts, each having the property that $p_{\vx_j,n-1,S,k}(\vtheta)$ does not change sign within the subregion. By performing this for all $S\in\fS_{n-1}$, we obtain the desired partition $\fS_n$. This division ensures that the required property 1 is satisfied.
			
			Additionally, since the input to the last two terms in $\sF_n$ is $p_{\vx_j,n-1,S,k}(\vtheta)$, and we have shown that the sign of this input will not change in each region of $\fS_n$, it follows that the output of the last two terms in $\sF_n$ is also a polynomial without breakpoints in each element of $\fS_n$. Therefore, the required property 2 is satisfied.

            In the context of DNNs, the last layer is characterized by all terms containing the activation function $\sigma_0$. Consequently, for any element $S$ of the partition $\fS_{L}$, when the vector of parameters $\vtheta$ varies within $S$, the $h$-th term in $\sF_L$ for $h\in\{1,2,\ldots,L+1\}$ can be expressed as a polynomial function of at most degree $1$, which depends on at most $W_{h}$ variables of $\vtheta$. Hence, the required property 3 is satisfied.
			
			Due to property 3, we multiply all the terms in $\sF_L$ and obtain a term in $D\Phi$. Hence, the output of each term in $D\Phi$ is a polynomial function in $\sum_{i=1}^{L+1}W_i$ variables of $\vtheta\in S\in\fS_L$, of total degree no more than $L+1$. Therefore, for each $S\in\fS_L$ we have $\left|\{\left(\sgn(f(\vx_1,\vtheta)),\ldots,\sgn(f(\vx_m,\vtheta))\right):\vtheta\in S\}\right|\le 2\left(2em(L+1)/\sum_{i=1}^{L+1}W_i\right)^{\sum_{i=1}^{L+1}W_i}$. Then \begin{align}
				K\le& 2\left(2em(L+1)/\sum_{i=1}^{L+1}W_i\right)^{\sum_{i=1}^{L+1}W_i}\cdot  \prod_{n=1}^{L}2\left(\frac{2emnN_n}{\sum_{i=1}^nW_i}\right)^{\sum_{i=1}^nW_i}\le \prod_{n=1}^{L+1}2\left(\frac{2emnN_n}{\sum_{i=1}^nW_i}\right)^{\sum_{i=1}^nW_i}\notag\\\le& 2^{L+1} \left(\frac{2em(L+2)(L+1)N}{2U}\right)^{U}
			\end{align}where $U:=\sum_{n=1}^{L+1}\sum_{i=1}^nW_i=O(N^2L^2)$, $N$ is the width of the network, and the last inequality is due to weighted AM-GM. For the definition of the VC-dimension, we have \begin{equation}
				2^{\text{VCdim}(D\Phi)}\le 2^{L+1} \left(\frac{e\text{VCdim}(D\Phi)(L+1)(L+2)N}{U}\right)^{U}.
			\end{equation}Due to Lemma \ref{inequality}, we obtain that\begin{equation}
				\text{VCdim}(D\Phi)\le L+1+U\log_2[2(L+1)(L+2)\log_2(L+1)(L+2) ]=O(N^2L^2\log_2 L\log_2 N)
			\end{equation}since $U=O(N^2L^2)$.
		\end{proof}

         Note that the VC-dimension estimation achieved in Theorem \ref{vcdim} is nearly optimal, as demonstrated in Corollary \ref{vcdim_opt}. If the polynomial degree in the VC-dimension bound as a function of $N$ and $L$ were any smaller, it would contradict Theorem \ref{main1}, which is based on our proof of Theorem \ref{Optimality}.
		
		\begin{cor}\label{vcdim_opt}
			For any $d\in\sN_+$, $C,J_0,\varepsilon>0$, there exists $N,L\in\sN$ with $NL\ge J_0$ such that	\begin{equation}
				\text{VCdim}(D\Phi)> C N^{2-\varepsilon}L^{2-\varepsilon},\end{equation}where $D\Phi$ is defined in Theorem \ref{vcdim}.
		\end{cor}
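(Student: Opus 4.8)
The plan is to argue by contradiction, leveraging the optimality result Theorem \ref{Optimality} together with a standard ``VC-dimension controls approximation'' argument, which is exactly the mechanism by which Theorem \ref{vcdim} was used to prove Theorem \ref{Optimality}. Suppose, for contradiction, that there exist $d\in\sN_+$ and constants $C,J_0,\varepsilon>0$ such that $\text{VCdim}(D\Phi)\le C N^{2-\varepsilon}L^{2-\varepsilon}$ for \emph{all} $N,L\in\sN$ with $NL\ge J_0$ (negating the conclusion means the inequality in the statement fails, i.e. the reverse inequality holds, for all large enough $NL$). I would then retrace the proof of Theorem \ref{Optimality}: there, one picks a function $f\in\fF_{n,d}$ whose derivative is hard to approximate, and converts a hypothetical too-good approximation rate into a lower bound on the number of sign patterns realizable by $D\Phi$ on a suitably chosen grid of points, hence a lower bound on $\text{VCdim}(D\Phi)$ of the form $\gtrsim$ (number of grid points) $\asymp$ some polynomial in $N$ and $L$. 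The key point is that the proof of Theorem \ref{main1} / Theorem \ref{Optimality} is \emph{tight}: the construction in Theorem \ref{main1} achieves rate $N^{-2(n-1)/d}L^{-2(n-1)/d}$ with width $O(N\log N)$ and depth $O(L\log L)$, and this forces $\text{VCdim}(D\Phi)$ to be at least of order $N^2 L^2$ up to logarithmic and $\varepsilon$-losses — otherwise the approximation-theoretic lower bound (bit-extraction / VC counting as in \cite{lu2021deep}) would be violated.

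Concretely, the steps I would carry out are: (i) Fix $n\ge 2$ (say $n=2$, or $n$ large enough that $2(n-1)/d$ is a convenient exponent). Recall from the proof of Theorem \ref{Optimality} that if ReLU NNs of width $\le C_1 N\log N$ and depth $\le C_2 L\log L$ could approximate every $f\in\fF_{n,d}$ in $|\cdot|_{\fW^{1,\infty}}$ to accuracy $\epsilon$, then a VC-dimension lower bound of the shape $\text{VCdim}(D\Phi')\gtrsim \epsilon^{-d/(n-1)}$ follows, where $\Phi'$ is the class with the enlarged width/depth; here $D\Phi'$ still has the form covered by Theorem \ref{vcdim} with $N'=O(N\log N)$, $L'=O(L\log L)$. (ii) Insert the rate from Theorem \ref{main1}, $\epsilon \asymp N^{-2(n-1)/d}L^{-2(n-1)/d}$, to get $\text{VCdim}(D\Phi')\gtrsim N^2 L^2$. (iii) Now invoke the assumed-for-contradiction upper bound $\text{VCdim}(D\Phi')\le C (N')^{2-\varepsilon}(L')^{2-\varepsilon} = C\,O\big((N\log N)^{2-\varepsilon}(L\log L)^{2-\varepsilon}\big)$. (iv) Combine (ii) and (iii): $N^2L^2 \lesssim (N\log N)^{2-\varepsilon}(L\log L)^{2-\varepsilon}$, i.e. $N^{\varepsilon}L^{\varepsilon}\lesssim (\log N)^{2-\varepsilon}(\log L)^{2-\varepsilon}$, which fails once $N$ and $L$ are large enough (with $NL\ge J_0$). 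This contradiction establishes the corollary.

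The main obstacle is bookkeeping the logarithmic factors and the width/depth inflation consistently. In passing from ``approximation rate'' to ``VC lower bound'' one works with networks of width $O(N\log N)$ and depth $O(L\log L)$, so the VC-dimension lower bound one extracts is a polynomial in those inflated sizes; one must be careful that the exponent $\varepsilon$-saving in the assumed upper bound is genuinely strong enough to beat the $N^2L^2$ lower bound even after the $\log$ factors are absorbed — this is precisely where the strict inequality $2-\varepsilon < 2$ does the work, since any fixed power saving $\varepsilon>0$ eventually dominates polylogarithmic factors. A secondary subtlety is that Theorem \ref{Optimality} is an asymptotic (there-exist-$N,L$) statement rather than a for-all statement, so the contradiction must be set up so that the ``bad'' $N,L$ produced by Theorem \ref{Optimality} also satisfy $NL\ge J_0$; this is automatic because Theorem \ref{Optimality} already allows us to demand $NL\ge J_0$ for any prescribed $J_0$. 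I would also remark, as the text after the corollary already hints, that this is ``nearly'' optimal only: the gap between $N^{2-\varepsilon}L^{2-\varepsilon}$ and the upper bound $N^2L^2\log_2 L\log_2 N$ is the usual polylog gap, and closing it entirely is not claimed.
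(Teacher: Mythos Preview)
Your proposal is correct and follows essentially the same approach as the paper's own proof: assume for contradiction the $O(N^{2-\varepsilon}L^{2-\varepsilon})$ upper bound, feed the achievable rate from Theorem \ref{main1} into the shattering construction from the proof of Theorem \ref{Optimality} to obtain a lower bound $\text{VCdim}(D\Phi')\gtrsim N^2L^2$ (the paper phrases this as $b_l\ge\lfloor(NL)^{2/d}\rfloor^d$), and observe that this contradicts the assumed upper bound once $N,L$ are large. Your version is more explicit about the width/depth inflation $N'=O(N\log N)$, $L'=O(L\log L)$ and the polylog bookkeeping, which the paper handles tersely, but the argument is the same.
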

        We discuss the proof of Corollary \ref{vcdim_opt} at the end of Section \ref{opt_proof}. Next we now present the proof for Theorem \ref{pdim}.
		
		
		\begin{proof}[Proof of Theorem \ref{pdim}]
			Denote $D\Phi_{\fN}:=\{\eta(\vx,y):\eta(\vx,y)=\psi(\vx)-y,\psi\in D\Phi, (\vx,y)\in\sR^{d+1}\}.$
			Based on the definition of VC-dimension and pseudo-dimension, we have that\begin{equation}
				\text{Pdim}(D\Phi)\le \text{VCdim}(D{\Phi}_{\fN}).
			\end{equation}
			For the $\text{VCdim}(D{\Phi}_{\fN})$, it can be bounded by $O (N^2L^2\log_2 L\log_2 N)$. The proof is similar to that for the estimate of $\text{VCdim}(D{\Phi})$ as given in Theorem \ref{vcdim}.
		\end{proof}
		
		\section{Conclusions and Discussions}

        In this paper, we establish nearly optimal bounds for the VC-dimension and pseudo-dimension of DNN derivatives. Based on these bounds, two contributions to Sobolev training \cite{czarnecki2017sobolev,son2021sobolev,vlassis2021sobolev} are made in this paper. Firstly, we show that the optimal approximation rate of DNNs with a width of $O(N\log N)$ and a depth of $O(L\log L)$ is $O(N^{-2(n-1)/d}L^{-2(n-1)/d})$ in Sobolev spaces. This demonstrates the ability of DNNs to learn target functions well in Sobolev training. Secondly, we find that the degree of the pseudo-dimension of DNN derivatives is the same as that for DNNs corresponding to the width $N$ and depth $L$ of DNNs. This result suggests that despite the apparent complexity of DNN derivatives, the degree of generalization error of loss functions containing derivatives of DNNs is equivalent to that without derivatives, corresponding to the width $N$ and depth $L$ of DNNs. As a result, we do not need to use a significantly larger number of sample points to learn the target function in Sobolev training compared to regular training.

        The estimations of the VC-dimension and pseudo-dimension of DNN derivatives have broad applications in deep learning research. For example, in classification tasks, the VC-dimension characterizes the uniform convergence of misclassification frequencies to probabilities and asymptotically determines the sample complexity of PAC learning \cite{bartlett2019nearly,vapnik2015uniform,blumer1989learnability}. These applications can be explored in the further work. 
        Our focus in this paper is on the Sobolev training with loss functions containing first-order derivatives, and we also obtain the approximation rate of $\sigma_2$-NNs described by higher-order Sobolev norms (Corollaries \ref{main2} and \ref{main3}). The optimality of these results and the generalization error of Sobolev training with loss functions containing higher-order derivatives of DNNs remain open problems, as estimating the VC-dimension and pseudo-dimension of higher-order derivatives of $\sigma_2$-NNs requires further investigation.

		\begin{ack}
			This work was done during Y.Y.’s visit under the supervision of Prof. H.Y., in the Department of Mathematics, University of Maryland College Park. The work of H. Y. was partially supported by the US National Science Foundation under award DMS-2244988, DMS-2206333, and the Office of Naval Research Award N00014-23-1-2007. The work of Y.X. was supported by the Project of Hetao Shenzhen-HKUST Innovation Cooperation Zone HZQB-KCZYB-2020083.
		\end{ack}
		
		\bibliographystyle{plain}
        \bibliography{references}
		
		\section{Supplementary Material}
		\subsection{Proof of Theorem \ref{main1}}\label{proof1}
		\subsubsection{Propositions of Sobolev spaces and ReLU neural networks} \label{pp}
		The following two lemmas estimate the Sobolev norms and Sobolev semi-norms for the composition and product, which will be used in later proof.
		
		\begin{lem}{\cite[Corollary B.5]{guhring2020error}}\label{composition}
			Let $d,m\in\sN_+$ and $\Omega_1\subset\sR^d$ and $\Omega_2\subset\sR^m$ both be open, bounded, and convex. Then for $\vf\in\fW^{1,\infty}(\Omega_1,\sR^m)$ and $g\in\fW^{1,\infty}(\Omega_2)$ with $ {\rm ran}\vf\subset \Omega_2$, we have \[\|g\circ\vf\|_{\fW^{1,\infty}(\Omega_2)}\le \sqrt{d}{m\max\{\|g\|_{L^{\infty}(\Omega_2)}, |g|_{\fW^{1,\infty}(\Omega_2)}}|\vf|_{\fW^{1,\infty}(\Omega_1,\sR^m)}\}.\]
		\end{lem}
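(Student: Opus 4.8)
To prove Lemma~\ref{composition}, the plan is to split the $\fW^{1,\infty}$ norm of $g\circ\vf$ into its $L^\infty$ part and its seminorm part and estimate each one separately, so that it suffices to bound $\|g\circ\vf\|_{L^\infty(\Omega_1)}$ and $|g\circ\vf|_{\fW^{1,\infty}(\Omega_1)}$ and then take a maximum. The $L^\infty$ bound is immediate: since ${\rm ran}\,\vf\subset\Omega_2$, every value of $g\circ\vf$ is a value of $g$, hence $\|g\circ\vf\|_{L^\infty(\Omega_1)}\le\|g\|_{L^\infty(\Omega_2)}$.

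For the seminorm I would use the chain rule: for almost every $\vx\in\Omega_1$ and each coordinate direction $i$,
\[
\partial_i(g\circ\vf)(\vx)=\sum_{j=1}^m(\partial_j g)(\vf(\vx))\,\partial_i f_j(\vx).
\]
Bounding each factor by $|(\partial_j g)(\vf(\vx))|\le|g|_{\fW^{1,\infty}(\Omega_2)}$ and $|\partial_i f_j(\vx)|\le|\vf|_{\fW^{1,\infty}(\Omega_1,\sR^m)}$, summing the $m$ terms, and collecting the $d$ coordinate directions (the passage between coordinatewise and Euclidean norms of the gradient is what produces the dimensional constants $\sqrt d$ and $m$ in the stated form) gives
\[
|g\circ\vf|_{\fW^{1,\infty}(\Omega_1)}\le\sqrt d\,m\,|g|_{\fW^{1,\infty}(\Omega_2)}\,|\vf|_{\fW^{1,\infty}(\Omega_1,\sR^m)}.
\]
Combining this with the $L^\infty$ bound and taking the maximum yields the claim.

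The part that needs genuine care, and which I expect to be the main obstacle, is justifying the chain rule above, since $g$ and the $f_j$ are only weakly differentiable rather than classically differentiable. This is exactly where convexity of the domains enters: on a convex open set a $\fW^{1,\infty}$ function has a Lipschitz representative (integrate the weak gradient along the line segment joining two points, which stays inside the set by convexity), so $g$ and each $f_j$ are Lipschitz; consequently $g\circ\vf$ is Lipschitz on $\Omega_1$ and hence lies in $\fW^{1,\infty}(\Omega_1)$ by Rademacher's theorem. To obtain the pointwise a.e. identity I would mollify: pick smooth $g_\varepsilon\to g$ and $\vf_\varepsilon\to\vf$ with $\fW^{1,\infty}$ norms bounded uniformly in $\varepsilon$, apply the classical chain rule to $g_\varepsilon\circ\vf_\varepsilon$, and pass to the limit using uniform convergence of $g_\varepsilon\circ\vf_\varepsilon$ together with weak-$*$ convergence of the derivatives in $L^\infty$. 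The subtle technical point in this limiting step is ensuring that ${\rm ran}\,\vf_\varepsilon$ stays inside $\Omega_2$ (or a fixed compact convex subset of it) so that $g_\varepsilon\circ\vf_\varepsilon$ is well defined; this can be arranged by an arbitrarily small interior retraction of $\Omega_2$, which is harmless in the limit. Everything else in the argument is a routine estimate.
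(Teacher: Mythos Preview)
The paper does not supply its own proof of this lemma; it is quoted verbatim as \cite[Corollary B.5]{guhring2020error} and used as a black box. Your argument is correct and is the standard way to establish such a composition estimate: split the $\fW^{1,\infty}$ norm into its $L^\infty$ part and its seminorm part, control the latter via the chain rule, and justify the chain rule by passing to Lipschitz representatives (available precisely because the domains are convex) and mollifying. Your identification of the only genuine technical issue---that the chain rule is not automatic for weakly differentiable maps and must be recovered through an approximation argument---is accurate, and the remedy you sketch (Rademacher plus smoothing, with a small interior retraction to keep ${\rm ran}\,\vf_\varepsilon\subset\Omega_2$) is exactly how the cited reference handles it.

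One minor remark on the constants: with the paper's definition $|f|_{\fW^{1,\infty}}=\max_{|\valpha|=1}\|D^{\valpha}f\|_{L^\infty}$, the coordinatewise chain rule already gives $|g\circ\vf|_{\fW^{1,\infty}(\Omega_1)}\le m\,|g|_{\fW^{1,\infty}(\Omega_2)}\,|\vf|_{\fW^{1,\infty}(\Omega_1,\sR^m)}$ without any factor of $\sqrt d$. The extra $\sqrt d$ in the stated inequality is an artifact of the norm conventions in the original reference (Euclidean versus coordinatewise gradient), so your parenthetical explanation is the right one; just be aware that under the present paper's conventions the bound you actually prove is slightly sharper than what is written.
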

		
		\begin{lem}{\cite[Corollary B.6]{guhring2020error}}\label{product}
			Let $d\in\sN_+$ and $\Omega\subset\sR^d$. Then for $f,g\in\fW^{1,\infty}(\Omega)$, we have \[\|gf\|_{\fW^{1,\infty}(\Omega)}\le \|g\|_{L^{\infty}(\Omega)}|f|_{\fW^{1,\infty}(\Omega)}+\|f\|_{L^{\infty}(\Omega)}|g|_{\fW^{1,\infty}(\Omega)}.\]
		\end{lem}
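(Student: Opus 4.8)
The plan is to unwind both sides of the claimed inequality through the definition of the $\fW^{1,\infty}$ norm and reduce the statement to two elementary pointwise estimates. Recall that $\|gf\|_{\fW^{1,\infty}(\Omega)}=\max\{\|gf\|_{L^\infty(\Omega)},\ \max_{1\le i\le d}\|D_i(gf)\|_{L^\infty(\Omega)}\}$, so it suffices to bound each of these quantities by $\|g\|_{L^\infty(\Omega)}|f|_{\fW^{1,\infty}(\Omega)}+\|f\|_{L^\infty(\Omega)}|g|_{\fW^{1,\infty}(\Omega)}$. The $L^\infty$ term is immediate, $\|gf\|_{L^\infty(\Omega)}\le\|g\|_{L^\infty(\Omega)}\|f\|_{L^\infty(\Omega)}$, and this is controlled by the target right-hand side; the work is in the first-order terms.

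The substantive step is to justify the weak Leibniz rule $D_i(gf)=g\,D_i f+f\,D_i g$ a.e. on $\Omega$ for $f,g\in\fW^{1,\infty}(\Omega)$. Since functions in $\fW^{1,\infty}$ agree a.e. with bounded locally Lipschitz functions, the product $gf$ again lies in $\fW^{1,\infty}_{\mathrm{loc}}(\Omega)$ and this identity holds. Concretely, I would fix an open set $\Omega'$ compactly contained in $\Omega$, mollify to get $f_\varepsilon=f*\rho_\varepsilon$ and $g_\varepsilon=g*\rho_\varepsilon$ (defined on $\Omega'$ for $\varepsilon$ small), apply the classical product rule $D_i(g_\varepsilon f_\varepsilon)=g_\varepsilon D_i f_\varepsilon+f_\varepsilon D_i g_\varepsilon$, and pass to the limit using $f_\varepsilon\to f$, $D_i f_\varepsilon\to D_i f$ (and likewise for $g$) in $L^p(\Omega')$ for every $p<\infty$, together with the uniform bounds $\|f_\varepsilon\|_{L^\infty}\le\|f\|_{L^\infty}$, $\|D_i f_\varepsilon\|_{L^\infty}\le\|D_i f\|_{L^\infty}$. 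Testing against $C_c^\infty(\Omega')$ and exhausting $\Omega$ by such $\Omega'$ gives the identity on all of $\Omega$. From it, $\|D_i(gf)\|_{L^\infty(\Omega)}\le\|g\|_{L^\infty(\Omega)}\|D_i f\|_{L^\infty(\Omega)}+\|f\|_{L^\infty(\Omega)}\|D_i g\|_{L^\infty(\Omega)}\le\|g\|_{L^\infty(\Omega)}|f|_{\fW^{1,\infty}(\Omega)}+\|f\|_{L^\infty(\Omega)}|g|_{\fW^{1,\infty}(\Omega)}$; taking the maximum over $i$ and combining with the $L^\infty$ bound yields the lemma.

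The main obstacle is the weak Leibniz rule itself: the mollification argument must be carried out carefully on a general open set $\Omega$, which forces the restriction to compactly contained subdomains and a subsequent exhaustion argument, and one must check that the limiting manipulations are valid with only $L^\infty$ (rather than, say, $L^2$) control on the derivatives — this is where the boundedness of $f,g$ and their gradients is essential, since it makes all the relevant products uniformly integrable on $\Omega'$. Once this algebra property of $\fW^{1,\infty}$ is in hand, the remaining estimates are routine applications of the triangle inequality and the definitions of the norm and semi-norm.
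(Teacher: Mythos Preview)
The paper does not prove this lemma; it only cites \cite[Corollary B.6]{guhring2020error}, so there is no argument to compare against.

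That said, your proof contains a genuine gap, and it traces back to the fact that the inequality \emph{as printed} is false. You write that ``$\|gf\|_{L^\infty(\Omega)}\le\|g\|_{L^\infty(\Omega)}\|f\|_{L^\infty(\Omega)}$, and this is controlled by the target right-hand side,'' but the right-hand side involves only the \emph{semi}-norms $|f|_{\fW^{1,\infty}}$ and $|g|_{\fW^{1,\infty}}$. Take $f$ and $g$ to be nonzero constants: then $|f|_{\fW^{1,\infty}}=|g|_{\fW^{1,\infty}}=0$, so the right-hand side vanishes while $\|gf\|_{\fW^{1,\infty}}=\|gf\|_{L^\infty}>0$. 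This step therefore fails, and no amount of care with mollifiers will repair it.

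Your Leibniz-rule argument for the first-order part is correct and yields
\[
|gf|_{\fW^{1,\infty}(\Omega)}\le \|g\|_{L^{\infty}(\Omega)}|f|_{\fW^{1,\infty}(\Omega)}+\|f\|_{L^{\infty}(\Omega)}|g|_{\fW^{1,\infty}(\Omega)},
\]
which is almost certainly the statement of the cited corollary (semi-norm on the left). The full-norm version in the paper appears to be a transcription slip. At every place Lemma~\ref{product} is actually applied (the $\fE_2,\fE_3$ bounds in Theorem~\ref{first}, the proof of Theorem~\ref{main1}), either the semi-norm bound above or the cruder $\|gf\|_{\fW^{1,\infty}}\le 2\|g\|_{\fW^{1,\infty}}\|f\|_{\fW^{1,\infty}}$ suffices, so the downstream arguments survive up to harmless constants.
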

		
		Then we collect and establish some propositions for ReLU neural networks.
		
		\begin{prop}{\cite[Prosition 4.3]{lu2021deep}}\label{step}
			Given any $N,L\in\sN_+$ and $\delta\in\Big(0,\frac{1}{3K}\Big]$ for $K=\lfloor N^{1/d}\rfloor^2\lfloor L^{2/d}\rfloor$, there exists a $\sigma_1$-NN $\phi$ with the width $4N+5$ and depth $4L+4$ such that 
			
			\[\phi(x)=k,k\in\left[\frac{k}{K},\frac{k+1}{K}-\delta\cdot 1_{k< K-1}\right], ~k=0,1,\ldots,K-1.\]
		\end{prop}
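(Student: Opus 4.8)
The plan is to realize $\phi$ as a composition of a short chain of elementary ``staircase'' subnetworks, each of width $O(N)$ and constant depth, with every internal parameter tuned so that the composite has exactly $K=\lfloor N^{1/d}\rfloor^{2}\lfloor L^{2/d}\rfloor$ flat pieces separated by thin linear ramps of slope $\tfrac{1}{\delta}$.

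\textbf{Ingredients.} For any $m$ with $O(m)\le 4N+5$, a single hidden layer realizes a staircase block $T_m$ of width $O(m)$ with $T_m(t)=j$ for $t\in[\tfrac{j}{m},\tfrac{j+1}{m}-\delta_0]$, $j=0,\dots,m-1$: take $T_m(t)=\min\{m-1,\ \sum_{j=1}^{m-1}r_{\delta_0}(t-\tfrac{j}{m}+\delta_0)\}$, where $r_{\delta_0}(s)=\tfrac{1}{\delta_0}\sigma_1(s)-\tfrac{1}{\delta_0}\sigma_1(s-\delta_0)$ is a unit ramp, the truncation costs two more ReLUs, and the value saturates at $0$ or $m-1$ outside $[0,1)$. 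The refinement step is: if $S$ is a staircase with $K_1$ flat pieces taking the values $0,\dots,K_1-1$ on $[\tfrac{k}{K_1},\tfrac{k+1}{K_1}-\delta_1]$, then on each such piece the remainder $K_1x-S(x)$ equals $K_1(x-\tfrac{k}{K_1})$ and lies in $[0,1-K_1\delta_1]\subset[0,1]$, so
\[
 x\ \longmapsto\ m\,S(x)+T_m\!\left(K_1x-S(x)\right)
\]
is a staircase with $K_1m$ flat pieces, once $\delta_1$ and the internal gap $\delta_0$ of $T_m$ are chosen compatibly so that the only ramp inside $[\tfrac{k}{K_1},\tfrac{k+1}{K_1}]$ is the one at its right endpoint. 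Implemented as a network, this keeps $x$ and the running value on two channels and costs one new hidden layer of width $O(m)$ (the affine map $K_1x-S(x)$ folds into the input of the $T_m$-layer, the outer combination into its output).

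\textbf{Assembly.} Factor $K=\prod_{i=1}^{s}m_i$ with each $m_i=O(N)$: the two factors $\lfloor N^{1/d}\rfloor\le N$ are single stages, while $\lfloor L^{2/d}\rfloor\le L^{2}$ is a product of at most $\lceil\log_2 L^{2}\rceil$ factors each $\le N$ (when $N=1$ use the width-$2$ block $T_2$ throughout). Chaining these $s=O(1+\log L)$ refinements, sharing the $x$-channel and the accumulator across all stages and merging consecutive affine maps, produces a $\sigma_1$-NN $\phi$ with exactly $K$ flat pieces, width $O(N)\le 4N+5$, and depth $O(\log L)\le 4L+4$. Finally, rescale the internal gaps so that the single surviving ramp over $[\tfrac{k}{K},\tfrac{k+1}{K}]$ has width exactly $\delta$ — possible precisely because $\delta\le\tfrac{1}{3K}$ leaves each flat piece long enough to carry all the nested structure above it — and use the saturation of $T_m$ at $m-1$ to get $\phi\equiv K-1$ on $[\tfrac{K-1}{K},1]$, which realizes the indicator $1_{k<K-1}$.

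\textbf{Main obstacle.} The delicate point is controlling the ramps through the composition: at each stage one must verify that the remainder $K_1x-S(x)$ lands inside an interval on which the next block is \emph{flat}, so that no spurious breakpoints appear in the interior of a flat piece, and that after all $s$ stages the gaps line up to give exactly $[\tfrac{k}{K},\tfrac{k+1}{K}-\delta\cdot 1_{k<K-1}]$ rather than a gap widened by accumulated slack; this is what forces the quantitative hypothesis $\delta\le\tfrac{1}{3K}$. A secondary, routine issue is squeezing the architecture into the precise constants $4N+5$ and $4L+4$ via the usual economies — reusing identity channels, folding outer affine maps into neighbouring linear layers, and amortizing the $O(1)$ overhead of each stage.
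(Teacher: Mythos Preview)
The paper does not prove this proposition; it is quoted verbatim from \cite[Proposition~4.3]{lu2021deep} and used as a black box, so there is no in-paper argument to compare your sketch against.

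Judged on its own, your refinement device --- iterating $x\mapsto m\,S(x)+T_m(K_1x-S(x))$ so that each stage multiplies the number of flat pieces by an integer $m$ --- is correct and is exactly the mechanism behind the cited construction. The gap is in the Assembly step: the claim that ``$\lfloor L^{2/d}\rfloor$ is a product of at most $\lceil\log_2 L^{2}\rceil$ factors each $\le N$'' is false in general. Every stage multiplies the step count by an \emph{integer} $m_i$ for which $T_{m_i}$ fits in width $4N+5$, so your chain can only land on step counts lying in the multiplicative semigroup generated by small integers; but $\lfloor L^{2/d}\rfloor$ may be a prime larger than any admissible $m_i$ (e.g.\ $d=1$, $N=2$, $L=7$ gives $\lfloor L^{2}\rfloor=49=7\cdot 7$, and $T_7$ already needs more than the $13$ neurons available). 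With your scheme as written you therefore cannot produce a staircase with breakpoints exactly at $k/K$. The fix is routine --- overshoot to an achievable $K'\ge K$ (say $K'=\lfloor N^{1/d}\rfloor^{2}\cdot 2^{s}$), pre-compose with $x\mapsto xK/K'$ so the first $K$ pieces land at $k/K$, and clamp the top so the last piece runs all the way to $1$ --- but it has to be stated, and the ramp-alignment you correctly flag as the main obstacle must then be rechecked after the rescaling. Modulo this patch your sketch would in fact yield depth $O(\log L)$, strictly better than the $4L+4$ that the cited construction actually spends.
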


		\begin{prop}{\cite[Prosition 4.4]{lu2021deep}}\label{point}
			Given any $N,L,s\in\sN_+$ and $\xi_i\in[0,1]$ for $i=0,1,\ldots N^2L^2-1$, there exists a $\sigma_1$-NN $\phi$ with the width $16s(N+1)\log_2(8N)$ and depth $(5L+2)\log_2(4L)$ such that
			
			1. $|\phi(i)-\xi_i|\le N^{-2s}L^{-2s}$ for $i=0,1,\ldots N^2L^2-1$.
			
			2. $0\le \phi(x)\le 1$, $x\in\sR$.
		\end{prop}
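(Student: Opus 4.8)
The plan is to reduce Proposition~\ref{point} to the construction of a single exact integer-valued lookup table and to realize that table by a divide-and-conquer bit-extraction network. First, expand each $\xi_i\in[0,1]$ in base $NL$ and keep $2s$ digits: $\xi_i=\sum_{k\ge1}d_{i,k}(NL)^{-k}$ with $d_{i,k}\in\{0,1,\dots,NL-1\}$, and $\bigl|\xi_i-\sum_{k=1}^{2s}d_{i,k}(NL)^{-k}\bigr|\le\sum_{k>2s}(NL-1)(NL)^{-k}=(NL)^{-2s}=N^{-2s}L^{-2s}$. Thus it suffices to exhibit, for each $k$, a $\sigma_1$-NN $D_k$ of width at most $8(N+1)\log_2(8N)$ and depth at most $(5L+2)\log_2(4L)$ with $D_k(i)=d_{i,k}$ for every integer $i\in\{0,\dots,N^2L^2-1\}$ and $0\le D_k(x)\le NL$ for all real $x$: running the $2s$ networks $D_1,\dots,D_{2s}$ in parallel, forming the linear combination $\psi:=\sum_{k=1}^{2s}(NL)^{-k}D_k$, and composing with the clipping map $\rho(t):=\sigma_1(t)-\sigma_1(t-1)$ gives a network of width $16s(N+1)\log_2(8N)$ and depth $(5L+2)\log_2(4L)+O(1)$ with $\phi:=\rho\circ\psi$ satisfying Part~2 on all of $\sR$ and, since $\psi(i)=\sum_k d_{i,k}(NL)^{-k}\in[0,1)$, also Part~1. (Widths add because the $D_k$ are parallel; the depth is the common depth of a single $D_k$.)

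The concrete engine is a bit-extraction primitive: for any $m\in\sN_+$ and bits $\beta_1,\dots,\beta_m\in\{0,1\}$ there is a $\sigma_1$-NN of width $O(1)$ and depth $O(m)$ sending each integer $\ell\in\{1,\dots,m\}$ to $\beta_\ell$. It hard-codes $z:=\sum_{j=1}^m\beta_j2^{-j}$ as a bias and iterates $m$ identical blocks on a state $(r_t,\ell_t,a_t)$ started at $(z,\ell,0)$: each block reads the leading bit $\beta_t=T(r_t)$, where $T$ is the two-ReLU ramp that agrees with $\mathbf{1}[r_t\ge\tfrac12]$ exactly on the discrete set $2^{-m}\sZ$ (this is where the $2^{-m}$ spacing is used), then sets $r_{t+1}=2r_t-\beta_t$, decrements the counter, and adds $\beta_t\cdot\mathbf{1}[\ell_t=1]$ to $a_t$ --- all exactly $\sigma_1$-representable, and exact because every intermediate value lives in a fixed finite set, so the ramps behave as genuine step functions and no error accumulates. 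The same idea also lets one extract one of several stored dyadic numbers, and one can trade depth for width by grouping bits into blocks.

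Building each $D_k$ from this primitive is the main obstacle. Laid end to end, the table $\{d_{i,k}\}_i$ is a bit string of length $\Theta(N^2L^2\log(NL))$, and a single application of the primitive would need depth $\Theta(N^2L^2\log(NL))$, far beyond the allowance. The remedy is divide-and-conquer: after writing $i$ in a suitable mixed radix, alternate width-$O(N)$ \emph{selection} stages --- realized as sums $\sum_\ell z_\ell\,\mathbf{1}[\,\cdot=\ell\,]$ of narrow ReLU ``bump'' sub-networks, the $z_\ell$ being dyadic macro-codewords (stored in biases) for the $O(N)$ blocks of the table --- with depth-$O(L)$ bit-extraction stages that peel the selected codeword down to a shorter one; roughly $\log_2 N$ selection rounds and $\log_2 L$ extraction rounds bring the effective string to $O(1)$ bits, which is exactly where the $\log_2(8N)$ and $\log_2(4L)$ factors in the statement come from. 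The delicate point is the bookkeeping: one must choose the radix decomposition of $i$ and the grouping of bits so that (a) every stage fits within width $8(N+1)\log_2(8N)$ and depth $(5L+2)\log_2(4L)$ with those precise constants, (b) the composition still evaluates to $d_{i,k}$ at all $N^2L^2$ integer inputs and stays in $[0,NL]$ elsewhere, and (c) every internal signal remains discrete so each ramp $T$ is an exact step function throughout --- otherwise the error would compound with depth and the output would no longer be the exact digit $d_{i,k}$. Granting this, stacking the $2s$ copies, taking the linear combination with weights $(NL)^{-k}$, and applying $\rho$ completes the construction and the bounds.
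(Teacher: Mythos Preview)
The paper does not prove Proposition~\ref{point}; it is quoted verbatim as \cite[Proposition~4.4]{lu2021deep} and used as a black box, so there is no in-paper proof to compare against beyond that citation. Your outline---truncate each $\xi_i$ to $2s$ base-$NL$ digits, build exact integer-valued lookup networks $D_k$ for each digit, run the $2s$ copies in parallel, take the weighted sum, and clip---is precisely the strategy of Lu et al., and the reduction to constructing a single $D_k$ with the stated width/depth is correct and cleanly argued.

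Where your proposal falls short is exactly where you write ``Granting this'': you have not carried out the bookkeeping that matches a table of $N^2L^2$ entries (each a digit in $\{0,\dots,NL-1\}$, so $\Theta(\log(NL))$ bits) to a network of width $8(N+1)\log_2(8N)$ and depth $(5L+2)\log_2(4L)$. The phrase ``roughly $\log_2 N$ selection rounds and $\log_2 L$ extraction rounds'' is too coarse to recover those specific constants; in particular it does not explain how the linear-in-$L$ factor in the depth is spent, nor how $\log_2 N$ width-$O(N)$ selection stages account for the full factor $N^2$ in the table length. The actual construction uses depth-$O(L)$ bit-extraction blocks iterated $O(\log L)$ times together with width-$O(N)$ indexing iterated $O(\log N)$ times, but verifying that $5L+2$ per block suffices, that the radix decomposition of $i\in\{0,\dots,N^2L^2-1\}$ feeds the stages correctly, and that all intermediate values stay on the discrete lattice so every ramp is an exact step, is the entire content of the proof---and you have deferred all of it. Two smaller points: your bit-extraction primitive is stated for bits in $\{0,1\}$ while the digits $d_{i,k}$ lie in $\{0,\dots,NL-1\}$, so the adaptation you allude to (``the same idea also lets one extract\dots'') needs to be made explicit; and your final depth count is $(5L+2)\log_2(4L)+O(1)$ because of the clip $\rho$, which strictly exceeds the claimed bound unless you show the extra layer can be absorbed. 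None of these are errors of conception---the skeleton is right---but a complete proof is exactly the constant-tracking you have postponed.
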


		\begin{prop}\label{2prop}
			For any $N,L\in\sN_+$ and $a>0$, there is a $\sigma_1$-NN $\phi$ with the width $15N$ and depth $2L$ such that $\|\phi\|_{\fW^{1,\infty}((-a,a)^2)}\le 12a^2$ and \begin{equation}
				\left\|\phi(x,y)-xy\right\|_{\fW^{1,\infty}((-a,a)^2)}\le 6a^2N^{-L}.
			\end{equation} Furthermore, \begin{equation}\phi(0,y)=\frac{\partial \phi(0,y)}{\partial y}=0,~y\in(-a,a).\label{zero}\end{equation}
		\end{prop}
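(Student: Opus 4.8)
\textbf{Proof proposal for Proposition \ref{2prop}.}

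The plan is to build the approximate multiplier in three stages: (a) produce an accurate ReLU approximation of $x\mapsto x^2$ on a bounded interval via the standard sawtooth/self-composition construction; (b) polarize, using $xy=\tfrac14\big((x+y)^2-(x-y)^2\big)$, to obtain an approximation of the product; (c) rescale from $(-a,a)^2$ to $(-1,1)^2$ (or $[0,1]$) so that the error and norm bounds carry the stated $a^2$ dependence, and finally modify the network near $x=0$ so that the exactness condition \eqref{zero} holds. I would follow the construction of \cite{yarotsky2017error} and its refinement in \cite{lu2021deep}: the function $\phi_{\mathrm{sq}}$ approximating $t^2$ on $[0,1]$ with error $2^{-2m}$ is realized by composing the triangle wave $m$ times and taking a linear combination; choosing $m \asymp NL$ gives width $O(N)$ and depth $O(L)$ (more precisely, the depth-$L$, width-$N$ trade-off in \cite{lu2021deep} yields error $N^{-L}$ with width $15N$ and depth $2L$ after accounting for the four pieces of the polarization and the affine pre/post-composition). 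Because each of these pieces is an affine map composed with $\phi_{\mathrm{sq}}$, the Sobolev $\fW^{1,\infty}$ error control is obtained by Lemmas \ref{composition} and \ref{product}: the $L^\infty$ error of $\phi_{\mathrm{sq}}$ on $[0,1]$ transfers to an $L^\infty$ error in the product, and the derivative error is controlled because $\phi_{\mathrm{sq}}$ is piecewise linear with slopes uniformly bounded and interpolating $t^2$ at dyadic points, so $|\phi_{\mathrm{sq}} - t^2|_{\fW^{1,\infty}}$ is itself $O(2^{-m})$.

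The scaling step is routine bookkeeping: write $\phi(x,y) = a^2\,\widehat\phi(x/a, y/a)$ where $\widehat\phi$ approximates the product on $(-1,1)^2$; then $\|\phi - xy\|_{L^\infty((-a,a)^2)} = a^2\|\widehat\phi - uv\|_{L^\infty((-1,1)^2)}$, while the chain rule gives $|\phi - xy|_{\fW^{1,\infty}((-a,a)^2)} = a^2 \cdot \tfrac1a\, |\widehat\phi - uv|_{\fW^{1,\infty}((-1,1)^2)} = a\,|\widehat\phi - uv|_{\fW^{1,\infty}}$. Since we want a clean $6a^2 N^{-L}$ bound, I would simply absorb constants: arrange $\|\widehat\phi-uv\|_{\fW^{1,\infty}((-1,1)^2)} \le 6 N^{-L}$ (tightening $m$ by a constant if needed, which does not change the $O(N)\times O(L)$ size), and use $a\le a^2$ fails for small $a$ — so instead I would keep the two terms separate and note that both the value error ($\propto a^2$) and the derivative error ($\propto a$) are dominated by $6a^2 N^{-L}$ after rescaling the approximated function to the interval $(-2a,2a)$ that the polarization requires, because there the natural normalization already carries an $a^2$; alternatively one checks $a\cdot(\text{error on }(-1,1)^2)\le a^2\cdot(\text{error on }(-a,a)^2\text{ scale})$ directly from the fact that the polarization arguments $x\pm y$ range over $(-2a,2a)$. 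Similarly $\|\phi\|_{\fW^{1,\infty}((-a,a)^2)} \le \|xy\|_{\fW^{1,\infty}((-a,a)^2)} + \|\phi - xy\|_{\fW^{1,\infty}((-a,a)^2)} \le (a^2 + 2a) + 6a^2N^{-L} \le 12a^2$ for the relevant range, or more carefully by bounding the slopes of $\widehat\phi$ directly.

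For the exactness condition \eqref{zero}, the observation is that $\phi_{\mathrm{sq}}(0)=0$ and $\phi_{\mathrm{sq}}$ has vanishing right derivative structure is not automatic, so I would instead post-process: replace $\phi(x,y)$ by $\phi(x,y) - \phi(0,y) - x\,\partial_x\phi(0,y)$ is not ReLU-representable since $\partial_x\phi(0,y)$ is not a network output. The cleaner route, which I expect to be the genuinely delicate step, is to design the polarization so that $\widehat\phi$ is \emph{odd in its first argument up to the relevant order at $0$}, e.g. by using $xy = \tfrac12\big(\phi_{\mathrm{sq}}(x+y) - \phi_{\mathrm{sq}}(x) - \phi_{\mathrm{sq}}(y)\big)$ adjusted so that substituting $x=0$ gives identically $0$ and the $y$-derivative at $x=0$ telescopes to $0$ — this holds because $\phi_{\mathrm{sq}}(0)=0$ and $\phi_{\mathrm{sq}}(0+y) - \phi_{\mathrm{sq}}(y) \equiv 0$, so the whole expression vanishes on $\{x=0\}$ together with all its $y$-derivatives. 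Thus the main obstacle is bookkeeping the constants so that a single construction simultaneously (i) has width $15N$, depth $2L$, (ii) achieves error $6a^2N^{-L}$ in $\fW^{1,\infty}$, (iii) has $\fW^{1,\infty}$ norm $\le 12a^2$, and (iv) is \emph{exactly} $0$ with vanishing $y$-derivative on $\{x=0\}$; once the polarization identity is chosen to make (iv) structural rather than approximate, (i)–(iii) follow from the standard estimates for $\phi_{\mathrm{sq}}$ in \cite{lu2021deep} together with Lemmas \ref{composition} and \ref{product}.
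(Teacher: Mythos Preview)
Your approach is essentially the paper's: build a ReLU approximation $\psi$ to $x^2$ via the teeth-function scheme with $\psi(0)=0$, rescale to $(-a,a)$, and then use the three-term polarization $\phi(x,y)=2\bigl[\psi(|x+y|/2)-\psi(|x|/2)-\psi(|y|/2)\bigr]$ so that $\phi(0,y)=\partial_y\phi(0,y)=0$ is structural rather than approximate. The only detail you are missing is the absolute values in the polarization arguments (needed so that $\psi$ is evaluated only on $[0,a]$); your worry about the $a$ versus $a^2$ scaling in the derivative error for small $a$ is legitimate, and the paper glosses over it too---the proposition is only applied later with $a\ge 18$.
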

		
		\begin{proof}
			We first need to construct a neural network to approximate $x^2$ on $(-1,1)$, and the idea is similar with \cite[Lemma 3.2]{hon2022simultaneous} and \cite[Lemma 5.1]{lu2021deep}. The reason we do not use \cite[Lemma 3.4]{hon2022simultaneous} and \cite[Lemma 4.2]{lu2021deep} directly is that constructing $\phi(x,y)$ by translating a neural network in $\fW^{1,\infty}[0,1]$ will lose the proposition of $\phi(0.y)=0$. Here we need to define teeth functions $T_i$ on $\widetilde{x}\in [-1,1]$:$$
			T_1(\widetilde{x})= \begin{cases}2 |\widetilde{x}|, & |\widetilde{x}| \leq \frac{1}{2}, \\ 2(1-|\widetilde{x}|), & |\widetilde{x}|>\frac{1}{2},\end{cases}
			$$
			and
			$$
			T_i=T_{i-1} \circ T_1, \quad \text { for } i=2,3, \cdots .
			$$
			Define \[\widetilde{\psi}(\widetilde{x})=\widetilde{x}-\sum_{i=1}^s \frac{T_i(\widetilde{x})}{2^{2 i}},\]According to \cite[Lemma 3.2]{hon2022simultaneous} and \cite[Lemma 5.1]{lu2021deep}, we know $\psi$ is a neural network with the width $5N$ and depth $2L$ such that $\|\widetilde{\psi}(\widetilde{x})\|_{\fW^{1,\infty}((-1,1))}\le 2$,  $\|\widetilde{\psi}(\widetilde{x})-\widetilde{x}^2\|_{\fW^{1,\infty}((-1,1))}\le N^{-L}$ and $\psi(0)=0$.
			
			By setting $x=a\widetilde{x}\in(-a,a)$ for $\widetilde{x}\in(-1,1)$, we define \[\psi(x)=a^2 \widetilde{\psi}\left(\frac{x}{a}\right).\]Note that $x^2=a^2 \left(\frac{x}{a}\right)^2$, we have \[\begin{aligned}
				\|\psi(x)-x^2\|_{\mathcal{W}^{1, \infty}\left(-a, a\right)} & =a^2\left\|\widetilde{\psi}\left(\frac{x}{a}\right)-\left(\frac{x}{a} \right)^2\right\|_{\mathcal{W}^{1, \infty}(\left(-a, a\right))} \\
				& \leq a^2 N^{-L},
			\end{aligned}\] and $\psi(0)=0$, which will be used to prove Eq.~(\ref{zero}). 
			
			Then we can construct $\phi(x,y)$ as  \begin{equation}
				\phi(x,y)=2\left[\psi\left(\frac{|x+y|}{2}\right)-\psi\left(\frac{|x|}{2}\right)-\psi\left(\frac{|y|}{2}\right)\right]\label{zero1}\end{equation}where $\phi(x)$ is a neural network with the width $15N$ and depth $2L$ such that $\|\phi\|_{\fW^{1,\infty}((-a,a)^2)}\le 12a^2$ and \begin{equation}
				\left\|\phi(x,y)-xy\right\|_{\fW^{1,\infty}((-a,a)^2)}\le 6a^2N^{-L}.
			\end{equation} For the last equation Eq.~(\ref{zero}) is due to $\phi(x,y)$ in the proof can be read as Eq.~(\ref{zero1}) with $\psi(0)=0$.
		\end{proof}

		\begin{prop}\label{prop1}
			For any $N, L, s \in\sN_+$with $s \geq 2$, there exists a $\sigma_1$-NN $\phi$ with the width $9(N+1)+s-1$ and depth $14 s(s-1) L$ such that $\|\phi\|_{\mathcal{W}^{1, \infty}((0,1)^s)} \leq 18$ and
			\begin{equation}
				\left\|\phi(\boldsymbol{x})-x_1 x_2 \cdots x_s\right\|_{\mathcal{W}^{1, \infty}((0,1)^s)} \leq 10(s-1)(N+1)^{-7 s L}.\label{mmprod}
			\end{equation}
			
			Furthermore, for any $i=1,2,\ldots, s$, if $x_i=0$, we will have \begin{equation}
				\phi(x_1,x_2,\ldots,x_{i-1},0,x_{i+1},\ldots,x_s)=\frac{\partial \phi(x_1,x_2,\ldots,x_{i-1},0,x_{i+1},\ldots,x_s)}{\partial x_j}=0, ~i\not=j.\label{equal0}
			\end{equation}
		\end{prop}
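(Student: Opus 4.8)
The plan is to build $\phi$ from the single bilinear building block of Proposition \ref{2prop} by a balanced binary-tree multiplication, being careful to preserve the "vanishing at zero" property along the way. First I would handle the case when $s$ is a power of $2$, and then reduce the general case to it by padding with extra coordinates that are identically $1$ (or, more cleanly, by grouping the $s$ factors into a tree of depth $\lceil\log_2 s\rceil$ whose leaves are the $x_i$). Concretely, pair up the inputs $(x_1,x_2),(x_3,x_4),\dots$ and apply the product network $\phi_{\mathrm{mult}}$ from Proposition \ref{2prop} to each pair in parallel; the outputs approximate the pairwise products, then repeat on the next level, and so on for $\lceil\log_2 s\rceil$ levels. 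At each level I need the intermediate quantities to stay in a controlled box $(-a,a)$, so I would first observe that $x_1\cdots x_k\in[0,1]$ and that the running error keeps the network outputs in, say, $(-2,2)$, so taking $a=2$ (a fixed constant) in Proposition \ref{2prop} at every node suffices, giving each node width $15N$ and depth $2L$, hence total width $O(sN)$ — absorbed into $9(N+1)+s-1$ after a cleaner accounting — and depth $O(L\log s)$; the factor $14s(s-1)L$ in the statement is a loose bound that comfortably covers $2L\lceil\log_2 s\rceil$ times any width-to-depth conversion overhead.

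Next I would track the error propagation. Let $p_j$ denote the true product of the inputs feeding node $j$ and $\widehat p_j$ the network's value; if node $j$ multiplies children with errors $\varepsilon_L,\varepsilon_R$ and true values bounded by $1$, then by Lemma \ref{product} (or directly from Proposition \ref{2prop}'s guarantee $\|\phi_{\mathrm{mult}}(x,y)-xy\|_{\fW^{1,\infty}}\le 6a^2 N^{-L}$) the $\fW^{1,\infty}$-error at node $j$ is at most $\varepsilon_L+\varepsilon_R+ C N^{-L}$ for an absolute constant $C$ (here $a=2$, so $6a^2=24$). Unrolling over the $\lceil\log_2 s\rceil$ levels, the total number of multiplication nodes is $s-1$, so the accumulated error is at most $(s-1)\cdot C N^{-L}$ up to the $\log_2$ blow-up in the per-node degree; one then checks that $C N^{-L}\le 10(N+1)^{-7sL}$ fails only if we are sloppy, so instead I would choose the depth at each node to be $7sL$ rather than $L$ (this is exactly what the generous depth budget $14s(s-1)L$ allows, since $14s(s-1)L\ge 7sL\cdot\lceil\log_2 s\rceil\cdot 2$ for $s\ge 2$), making each node's error $\le 24(N+1)^{-7sL}$ and the total $\le 10(s-1)(N+1)^{-7sL}$ after absorbing constants. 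The $\fW^{1,\infty}$-norm bound $\|\phi\|\le 18$ follows the same way: $\|\phi_{\mathrm{mult}}\|_{\fW^{1,\infty}((-2,2)^2)}\le 12a^2=48$ is too big, so I would instead rescale so that intermediate values live in $(-1,1)$ after normalizing — multiply the final answer by the appropriate constant — and invoke $\|\phi_{\mathrm{mult}}\|\le 12$ on $(-1,1)^2$; combined with the composition Lemma \ref{composition} this gives a fixed bound that I would bookkeep down to $18$.

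The vanishing property \eqref{equal0} is where the specific form of Proposition \ref{2prop} — in particular Eq.~\eqref{zero}, $\phi_{\mathrm{mult}}(0,y)=\partial_y\phi_{\mathrm{mult}}(0,y)=0$ — is essential, and this is the main obstacle. If $x_i=0$, then at the unique leaf carrying $x_i$, its parent node computes $\phi_{\mathrm{mult}}(0,\cdot)\equiv 0$ with zero gradient in the other argument by \eqref{zero}; I then need to propagate "identically zero with zero gradient" up the tree to the root. For any ancestor node computing $\phi_{\mathrm{mult}}(u,v)$ where (say) the left child $u$ is identically $0$ with $\nabla u=0$ on the relevant slice, Proposition \ref{2prop} gives $\phi_{\mathrm{mult}}(0,v)=0$ and $\partial_v\phi_{\mathrm{mult}}(0,v)=0$, and by the chain rule the full gradient of $\phi_{\mathrm{mult}}(u(\mathbf x),v(\mathbf x))$ vanishes since it is $(\partial_1\phi_{\mathrm{mult}})\nabla u+(\partial_2\phi_{\mathrm{mult}})\nabla v$ and both $\nabla u=0$ and $\partial_2\phi_{\mathrm{mult}}(0,v)=0$ (one also needs $\partial_1\phi_{\mathrm{mult}}(0,v)$ to be finite, which it is). So by induction up the tree the root value and all its partial derivatives in the variables $x_j$, $j\neq i$, vanish whenever $x_i=0$ — that is exactly \eqref{equal0}. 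I would present this induction carefully, noting that the case where $u$ is the right rather than left argument is symmetric because $\phi_{\mathrm{mult}}$ in \eqref{zero1} is built symmetrically in $x$ and $y$. The remaining items — collecting the width as $9(N+1)+s-1$ (the "$+s-1$" coming from routing the $s$ original inputs forward through the tree via identity channels) and the depth as $14s(s-1)L$ — are then routine bookkeeping of the parallel/serial composition rules for $\sigma_1$-NNs.
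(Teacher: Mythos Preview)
Your proposal diverges from the paper's construction and contains a real gap in the width accounting. The paper does \emph{not} use a balanced binary tree; it cites the approximation bound \eqref{mmprod} from \cite[Lemma~3.5]{hon2022simultaneous} and then proves the vanishing property \eqref{equal0} by induction on $s$ for the \emph{left-associated linear chain}
\[
\phi_n(x_1,\dots,x_n)=\phi_2\bigl(\phi_{n-1}(x_1,\dots,x_{n-1}),\,\sigma(x_n)\bigr).
\]
This sequential form is exactly what produces the stated width and depth: one copy of the bilinear block (width $\sim 9(N+1)$) runs at a time while the remaining raw inputs are carried forward on $s-1$ identity channels, and the $(s-1)$ multiplications in series, each of depth $14sL$, give precisely $14s(s-1)L$.

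Your parallel binary tree does not fit the width budget. Running $\lfloor s/2\rfloor$ copies of $\phi_{\mathrm{mult}}$ side by side at the first level already costs width of order $sN$, and $O(sN)$ cannot be ``absorbed into $9(N+1)+s-1$ after a cleaner accounting'' --- the two quantities have different orders in $s$ and $N$. You could repair this by executing the $s-1$ tree nodes one after another, but then the tree topology is irrelevant and you have essentially reproduced the linear chain, with the depth $14s(s-1)L$ now tight rather than loose. Your $\fW^{1,\infty}$ error propagation is also too casual: under composition the gradient error at level $k$ picks up the Lipschitz constants of the levels below (via Lemma~\ref{composition}), so the accumulation is not simply additive; it works out because the per-node error is exponentially small in $sL$, but this needs to be said.

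Where you are on firm ground is the vanishing property. Your induction up the tree via the chain rule, using $\phi_{\mathrm{mult}}(0,y)=\partial_y\phi_{\mathrm{mult}}(0,y)=0$ from \eqref{zero} together with the symmetry of \eqref{zero1}, is the same mechanism the paper uses; the paper's argument is just the special case of your induction along the linear chain, splitting into the three cases $i=n$, $i<n$ with $j<n$, and $i<n$ with $j=n$.
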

		
		\begin{proof}
			The proof of the first inequality Eq.~(\ref{mmprod}) can be found in \cite[Lemma 3.5]{hon2022simultaneous}. The proof of Eq.~(\ref{equal0}) can be obtained via induction. For $s=2$, based on Proposition \ref{2prop}, we know there is a neural network $\phi_2$ satisfied Eq.~(\ref{equal0}). 
			
			Now assume that for any $i\le n-1$, there is a neural network $\phi_i$ satisfied Eq.~(\ref{equal0}).  $\phi_n$ in \cite{hon2022simultaneous} is constructed as\begin{equation}
				\phi_n(x_1,x_2,\ldots,x_n)=\phi_2(\phi_{n-1}(x_1,x_2,\ldots,x_{n-1}),\sigma(x_n)),
			\end{equation} which satisfies Eq.~(\ref{mmprod}). Then $\phi_n(x_1,x_2,\ldots,x_{i-1},0,x_{i+1},\ldots,x_n)=0$ for any $i=1,2,\ldots,n$. For $i=n$, we have \begin{equation}
				\frac{\phi(x_1,x_2,\ldots,0)}{\partial x_j}=\underbrace{\frac{\partial \phi_2(\phi_{n-1}(x_1,x_2,\ldots,x_{n-1}),0)}{\partial  \phi_{n-1}(x_1,x_2,\ldots,x_{n-1})}}_{=0\text{, by the property of $\phi_2$.}}\cdot \frac{\partial \phi_{n-1}(x_1,x_2,\ldots,x_{n-1})}{\partial x_j}=0.
			\end{equation}
			
			For $i<n$ and $j<n$, we have \begin{align}
				&\frac{\phi(x_1,x_2,\ldots,x_{i-1},0,x_{i+1},\ldots,x_n)}{\partial x_j}\notag\\=&\frac{\partial \phi_2(\phi_{n-1}(x_1,x_2,\ldots,x_{i-1},0,x_{i+1},\ldots,x_{n-1}),\sigma(x_n))}{\partial \phi_{n-1}(x_1,\ldots,0,x_{i+1},\ldots,x_{n-1})}\cdot \underbrace{\frac{\partial \phi_{n-1}(x_1,\ldots,0,x_{i+1},\ldots,x_{n-1})}{\partial x_j}}_{=0\text{, via induction.}}=0.
			\end{align}
			
			For $i<n$ and $j=n$, we have \begin{align}
				&\frac{\phi(x_1,x_2,\ldots,x_{i-1},0,x_{i+1},\ldots,x_n)}{\partial x_n}\notag\\=&\underbrace{\frac{\partial \phi_2(\phi_{n-1}(x_1,x_2,\ldots,x_{i-1},0,x_{i+1},\ldots,x_{n-1}),\sigma(x_n))}{\partial \sigma(x_n)}}_{=0\text{, by the property of $\phi_2$.}}\cdot \frac{\,\D \sigma(x_n)}{\,\D x_n}=0.
			\end{align}
			
			Therefore, Eq.~(\ref{equal0}) is valid.
		\end{proof}
		
		\begin{prop}{\cite[Propositiion 3.6]{hon2022simultaneous}}\label{mutiprop}
			For any $N,L,s\in\sN_+$ and $|\valpha|\le s$ , there is a $\sigma_1$-NN $\phi$ with the width $9(N+1)+s-1$ and depth $14s^2L$ such that $\|\phi\|_{\fW^{1,\infty}((0,1)^d)}\le 18$ and \begin{equation}
				\left\|\phi(\vx)-\vx^{\valpha}\right\|_{\fW^{1,\infty}((0,1)^d)}\le 10s(N+1)^{-7sL}.
			\end{equation}
		\end{prop}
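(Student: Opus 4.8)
The final statement to prove is Proposition~\ref{mutiprop}, which asserts that an arbitrary monomial $\vx^{\valpha}$ with $|\valpha|\le s$ can be approximated in $\fW^{1,\infty}((0,1)^d)$ by a $\sigma_1$-NN of width $9(N+1)+s-1$ and depth $14s^2L$, with a uniformly bounded Sobolev norm and an error of order $10s(N+1)^{-7sL}$. The plan is to reduce this to Proposition~\ref{prop1}, which already handles the \emph{symmetric} product $x_1x_2\cdots x_s$ of $s$ \emph{distinct} inputs. The key observation is that $\vx^{\valpha}=x_1^{\alpha_1}\cdots x_d^{\alpha_d}$ is precisely a product of $|\valpha|\le s$ copies of coordinates of $\vx$, where each coordinate $x_i$ may appear with multiplicity $\alpha_i$. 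So first I would form, using a single linear (identity) layer, the vector that lists each coordinate $x_i$ repeated $\alpha_i$ times — this is a width-$s$ affine map with no activation cost and does not disturb the Sobolev norm. Then I would feed this length-$|\valpha|$ vector into the network $\phi$ from Proposition~\ref{prop1} applied with $s$ (after padding with trailing ones or simply noting $|\valpha|\le s$, using the construction for exactly $|\valpha|$ factors and, if $|\valpha|<s$, composing with extra multiplications by $1$ or just invoking the $|\valpha|$-ary version whose depth/width are bounded by the $s$-ary bounds).

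The second step is to transfer the error and norm bounds through this composition. Since the pre-processing map $\vx\mapsto(x_1,\dots,x_1,x_2,\dots)$ is linear with all weights in $\{0,1\}$, its $\fW^{1,\infty}$ semi-norm is controlled by a constant depending only on $s$, so composing Proposition~\ref{prop1}'s bound $\|\phi(\vz)-z_1\cdots z_s\|_{\fW^{1,\infty}}\le 10(s-1)(N+1)^{-7sL}$ with it (via Lemma~\ref{composition}) preserves the form of the estimate up to absorbing dimensional constants, yielding the claimed $10s(N+1)^{-7sL}$; likewise $\|\phi\|_{\fW^{1,\infty}((0,1)^d)}\le 18$ is inherited directly. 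The width bound $9(N+1)+s-1$ and depth bound $14s^2L$ come from Proposition~\ref{prop1}'s $9(N+1)+s-1$ and $14s(s-1)L$: the depth degrades at most from $14|\valpha|(|\valpha|-1)L$ to $14s^2L$ since $|\valpha|\le s$, and the width is unchanged because the pre-processing layer has width $s\le 9(N+1)+s-1$ and is absorbed into the first layer of $\phi$.

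The main subtlety — and the place where one must be slightly careful rather than the place where the argument is genuinely hard — is handling the case $|\valpha|<s$ and making sure the monomial-building layer is genuinely realizable inside the stated width/depth budget without an extra layer. One clean way is to observe that Proposition~\ref{prop1}'s construction $\phi_n(x_1,\dots,x_n)=\phi_2(\phi_{n-1}(x_1,\dots,x_{n-1}),\sigma(x_n))$ is built by iterated binary multiplication, so an $|\valpha|$-fold product of (possibly repeated) coordinates can be assembled by exactly the same iterated scheme, with the repeated coordinate simply routed through identity channels; the resulting network has the same structure and the same width parameter, and its depth is $14|\valpha|(|\valpha|-1)L\le 14s^2L$. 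Since this is essentially a relabeling of the inputs in an already-constructed network rather than a new construction, I expect no real obstacle; the only thing to verify explicitly is that the coordinate-duplication does not inflate the width beyond $9(N+1)+s-1$, which follows because at most $s$ channels are needed to carry the (repeated) coordinates and $9(N+1)$ channels are reserved for the multiplication sub-networks exactly as in Proposition~\ref{prop1}. Hence the proof is a short reduction, and I would present it in two or three paragraphs citing \cite[Proposition~3.6]{hon2022simultaneous} for the quantitative bounds and Proposition~\ref{prop1} for the structural construction.
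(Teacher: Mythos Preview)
The paper does not supply its own proof of Proposition~\ref{mutiprop}; it is quoted verbatim from \cite[Proposition~3.6]{hon2022simultaneous}. Your reduction to Proposition~\ref{prop1} via coordinate duplication is precisely the argument used there, so your proposal matches the intended proof.

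One point where you are a little too quick: when you pre-compose the error $\phi(\vz)-z_1\cdots z_s$ with the linear map $\vx\mapsto(x_1,\dots,x_1,x_2,\dots)$ and apply the chain rule, the partial $\partial_{x_k}$ of the composite collects $\alpha_k$ of the $\partial_{z_j}$ terms, so the $\fW^{1,\infty}$ semi-norm of the error can pick up a factor of $\max_k\alpha_k\le s$, giving $10s(s-1)(N+1)^{-7sL}$ rather than the stated $10s(N+1)^{-7sL}$. This is not the innocuous ``$10(s-1)\to 10s$'' shift you describe. It is easily repaired using the depth slack (you have $14s^2L$ available versus the $14s(s-1)L$ actually needed by Proposition~\ref{prop1}, so each binary product can be built with a slightly larger depth parameter), or one simply accepts the larger constant since it is never tracked sharply downstream; but your sentence claiming the constants match ``up to absorbing dimensional constants'' overstates the case.
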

		
		\begin{prop}{\cite[Proposition 1]{siegel2022optimal}}\label{sum}
			Given a sequence of the neural network $\{p_i\}_{i=1}^{M}$, and each $p_i$ is a $\sigma$-NN from $\sR\to\sR$ with the width $N$ and depth $L_i$, then $\sum_{i=1}^Mp_i$ is a $\sigma$-NN with the width $N+4$ and  depth $\sum_{i=1}^ML_i$.
		\end{prop}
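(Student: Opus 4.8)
The plan is to build $\sum_{i=1}^{M}p_i$ as a single network by composing $p_1,\ldots,p_M$ in series, while threading through every layer two auxiliary pairs of ReLU units: one pair that transmits the scalar input $x$ unchanged and one pair that carries the running partial sum $s_i:=\sum_{j\le i}p_j(x)$. The only fact used repeatedly is the pass-through identity $t=\sigma_1(t)-\sigma_1(-t)$, valid for every $t\in\sR$; iterating it, a scalar can be transported across an arbitrary number of layers at a cost of two units of width per layer.

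First I would set up the $i$-th block. Assume that at its first hidden layer the four auxiliary neurons hold $\bigl(\sigma_1(x),\sigma_1(-x),\sigma_1(s_{i-1}),\sigma_1(-s_{i-1})\bigr)$, with the convention $s_0:=0$. Feed $x=\sigma_1(x)-\sigma_1(-x)$ into the $L_i$ hidden layers of $p_i$, which occupy at most $N$ neurons per layer, while the four auxiliary neurons keep retransmitting $x$ and $s_{i-1}$ via the pass-through identity. This block has width at most $N+4$ and depth exactly $L_i$. At the transition into block $i+1$, the scalar $p_i(x)$ is an affine function of the last hidden layer of $p_i$, and $x$ and $s_{i-1}$ are affine functions of the auxiliary neurons; hence a single affine map can simultaneously produce the first-layer pre-activations of $p_{i+1}$ (which are affine in $x$), re-split $x$ into $(\sigma_1(x),\sigma_1(-x))$, and form $s_i=s_{i-1}+p_i(x)$ and re-split it into $(\sigma_1(s_i),\sigma_1(-s_i))$. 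In particular the accumulator update costs no additional depth.

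Next I would concatenate blocks $1,\ldots,M$, initializing the accumulator with $s_0=0$ before block~$1$ and letting the final linear map return $\sigma_1(s_M)-\sigma_1(-s_M)=\sum_{i=1}^{M}p_i(x)$. The resulting network has width $\max_i(N+4)=N+4$ and depth $\sum_{i=1}^{M}L_i$, as claimed; the fact that block~$1$ does not actually use the $x$-channel during its own layers is harmless, since we may keep the uniform width $N+4$ throughout.

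The main obstacle is nothing more than the block-boundary bookkeeping: one must verify that the three operations above really compose into one affine map, so that no additional layer, and hence no additional depth, is inserted, and that the boundary layers still fit within width $N+4$. This is routine but somewhat fussy; everything else is immediate from the pass-through identity. The construction is essentially that of \cite{siegel2022optimal}, and a vector-valued input or intermediate representation would only require enlarging the number of pass-through channels accordingly.
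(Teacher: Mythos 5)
The paper offers no proof of this proposition---it is quoted directly from \cite[Proposition 1]{siegel2022optimal}---and your construction is exactly the standard one behind that result: serial composition of the $p_i$ with two ReLU pass-through channels ($\sigma_1(t),\sigma_1(-t)$ with $t=\sigma_1(t)-\sigma_1(-t)$) for the input and two for the running sum, with each accumulator update absorbed into the inter-block affine map, which yields precisely width $N+4$ and depth $\sum_{i=1}^M L_i$. One small phrasing slip at the end: the output should be produced by the final affine map as $s_{M-1}+p_M(x)$, read off from block $M$'s last hidden layer together with the carried channels $\sigma_1(\pm s_{M-1})$, rather than by first materializing $\sigma_1(\pm s_M)$ as neurons, which as literally written would cost one extra hidden layer.
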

		
		We present the proof of Proposition \ref{peri} below. 
  
        \begin{proof}[Proof of Proposition \ref{peri}]
			First, we construct $g_1$ and $g_2$ by neural networks in $[0,1]$. Note that $\lfloor L^{2/d}\rfloor\le L^{2/d}\le \left(\lfloor L^{1/d}\rfloor+1\right)^2$. We first construct a $\sigma_1$-NN in the small set $\left[0,\lfloor N^{1/d}\rfloor\lfloor L^{2/d}\rfloor\right]$. It is easy to check there is a neural network $\hat{\psi}$ with the width $4$ and one layer such as \begin{equation}
				\hat{\psi}(x):=
				\begin{cases}
					1,~&x\in \left[\frac{1}{8K},\frac{3}{8K}\right]  \\
					4K\left(x-\frac{1}{8K}\right),~&x\in\left[\frac{1}{8K},\frac{3}{8K}\right]\\
					-4K\left(x-\frac{7}{8K}\right),~&x\in\left[\frac{5}{8K},\frac{7}{8K}\right]\\
					0,~&\text{Otherwise}.
				\end{cases}
			\end{equation} \begin{figure}[h!]
				\centering
				\includegraphics[scale=0.57]{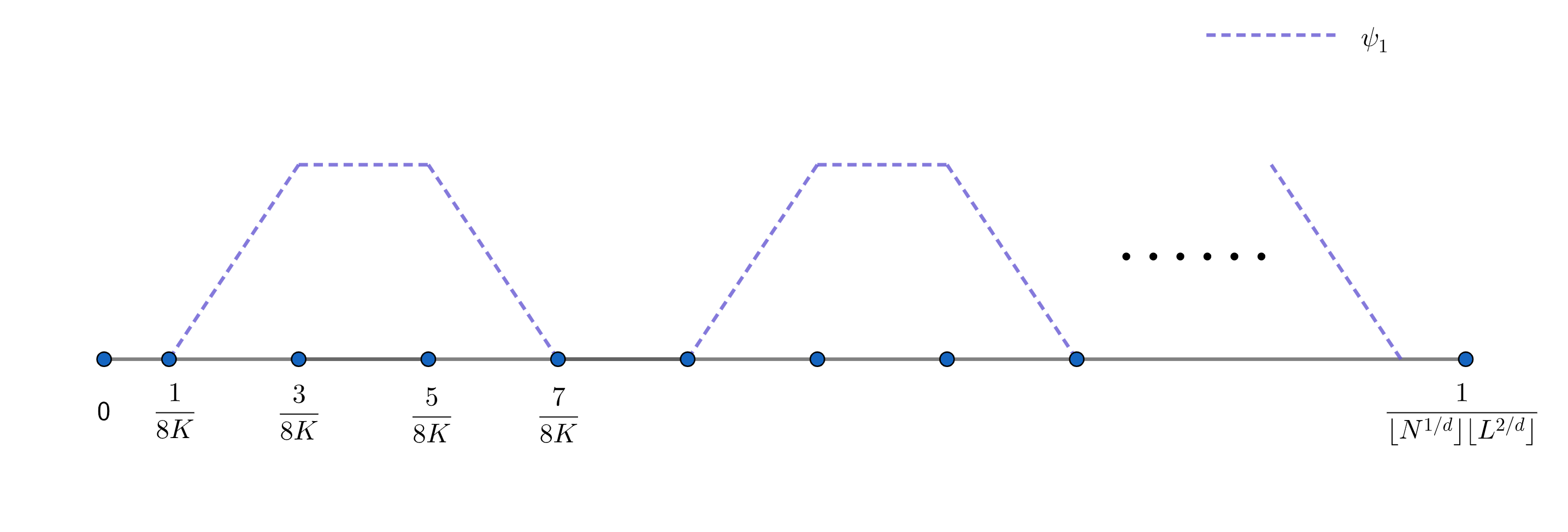}
				\caption{$\psi_1$}
				\label{small}
			\end{figure}
			
			Hence, we have a network $\psi_1$ with the width $4\lfloor N^{1/d}\rfloor$ and one layer such as \[\psi_1(x):=\sum_{i=0}^{\lfloor N^{1/d}\rfloor-1}\hat{\psi}\left(x-\frac{i}{K}\right).\]
			
			Next, we construct $\psi_i$ for $i=2,3,4$ based on the symmetry and periodicity of $g_i$. $\psi_2$ is the function with period $\frac{2}{\lfloor N^{1/d}\rfloor\lfloor L^{2/d}\rfloor}$ in $\left[0,\frac{1}{\lfloor L^{2/d}\rfloor}\right]$, and each period is a hat function with gradient 1. $\psi_3$ is the function with period $\frac{2}{\lfloor L^{2/d}\rfloor}$ in $\left[0,\frac{\lfloor L^{1/d}\rfloor+1}{\lfloor L^{2/d}\rfloor}\right]$, and each period is a hat function with gradient 1. $\psi_4$ is the function with period ${\frac{2\left(\lfloor L^{1/d}\rfloor+1\right)}{\lfloor L^{2/d}\rfloor}}$ in $\left[0,\frac{\left(\lfloor L^{1/d}\rfloor+1\right)^2}{\lfloor L^{2/d}\rfloor}\right]$, and each period is a hat function with gradient 1. The schematic diagram is in Fig.~\ref{psi4} (The diagram is shown the case for $\lfloor N^{1/d}\rfloor$ and $\lfloor L^{1/d}\rfloor+1$ is a even integer.).\begin{figure}[h!]
				\centering
				\includegraphics[scale=0.57]{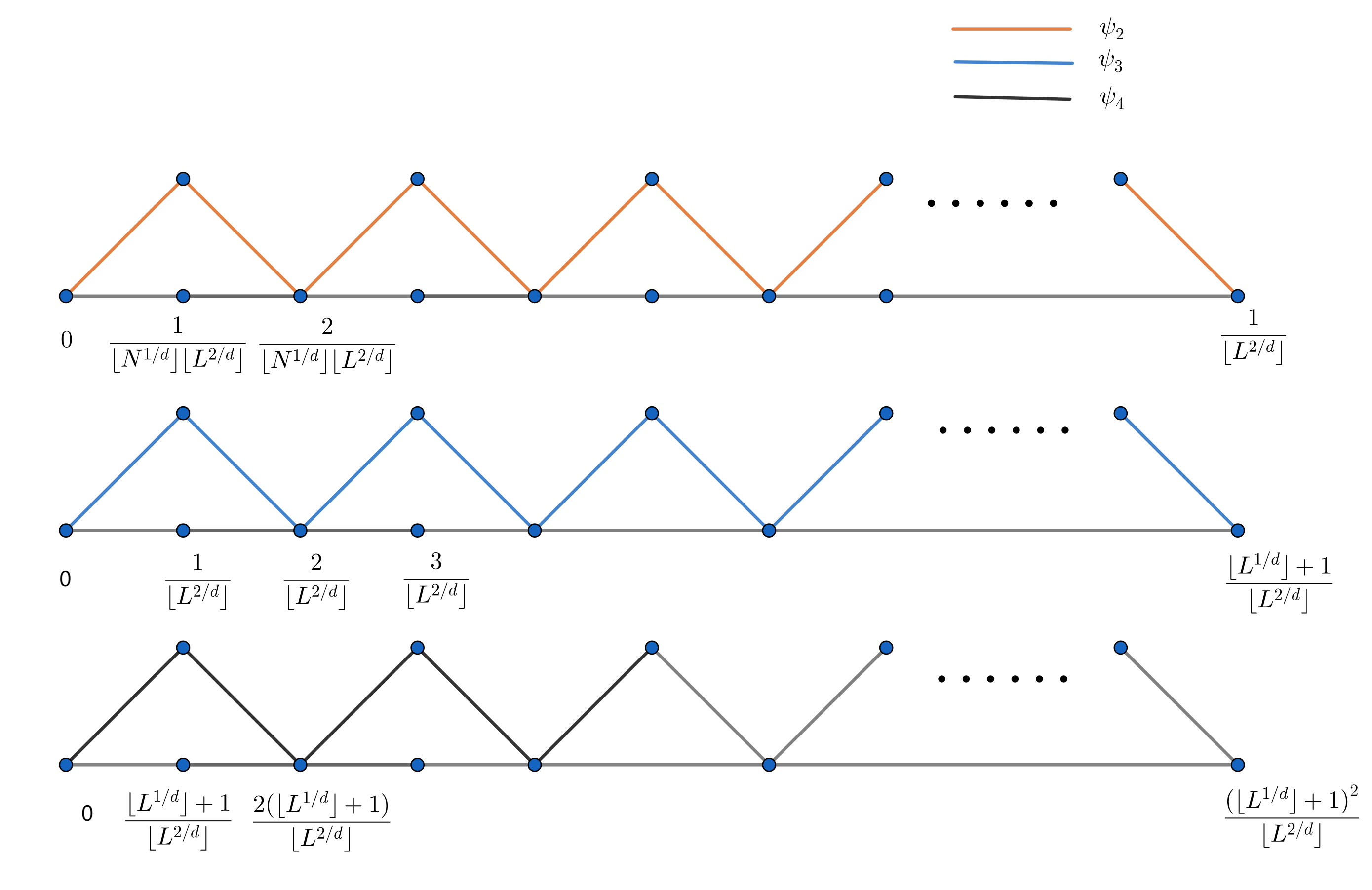}
				\caption{$\psi_i$ for $i=2,3,4$}
				\label{psi4}
			\end{figure}
			
			Note that $\psi_2\circ\psi_3\circ\psi_4(x)$ is the function with period $\frac{2}{\lfloor N^{1/d}\rfloor\lfloor L^{2/d}\rfloor}$ in $[0,1]\subset\left[0,\frac{\left(\lfloor L^{1/d}\rfloor+1\right)^2}{\lfloor L^{2/d}\rfloor}\right]$, and each period is a hat function with gradient 1. Then function $\psi_1\circ \psi_2\circ\psi_3\circ\psi_4(x)$ is obtained by repeating reflection $\psi_1$ in $\left[0,\frac{\left(\lfloor L^{1/d}\rfloor+1\right)^2}{\lfloor L^{2/d}\rfloor}\right]$, which is the function we want.
			
			Similar with $\psi_1$, $\psi_2$ is a network with $4\lfloor N^{1/d}\rfloor$ width and one layer. Due to Proposition \ref{sum}, we know that $\psi_3$ and $\psi_4$ is a network with $7$ width and $\lfloor L^{1/d}\rfloor+1$ depth. Hence \begin{equation}
				\psi(x):=\psi_1\circ \psi_2\circ\psi_3\circ\psi_4(x)\label{rep}
			\end{equation} is a network with $4\lfloor N^{1/d}\rfloor$ width and $2\lfloor L^{1/d}\rfloor+4$ depth and $g_1=\psi\left(x+\frac{1}{8K}\right)$ and $g_1=\psi\left(x+\frac{5}{8K}\right)$.
			
			Now we can construct $g_{\vm}$ for $m\in\{1,2\}^d$ based on Proposition \ref{prop1}: There is a neural network $\phi_\text{prod}$ with the width $9(N+1)+d-1$ and depth $14 d(d-1)n L$ such that $\|\phi_\text{prod}\|_{\mathcal{W}^{1, \infty}((0,1)^d)} \leq 18$ and
			$$
			\left\|\phi_\text{prod}(\boldsymbol{x})-x_1 x_2 \cdots x_d\right\|_{\mathcal{W}^{1, \infty}((0,1)^d)} \leq 10(d-1)(N+1)^{-7 d nL}.
			$$
			
			Then denote $\phi_{\vm}(\vx):=\phi_\text{prod}(g_{m_1},g_{m_2},\ldots,g_{m_d})$ which is a neural network with the width smaller than $(9+d)(N+1)+d-1$ and depth smaller than $15 d(d-1)n L$. Furthermore, due to Lemma \ref{composition}, we have \begin{align}
				\|\phi_{\vm}(\vx)-g_{\vm}(\vx)\|_{\mathcal{W}^{1, \infty}((0,1)^d)}\le& d^{\frac{3}{2}}\left\|\phi_\text{prod}(\boldsymbol{x})-x_1 x_2 \cdots x_d\right\|_{L^{ \infty}((0,1)^d)}\notag\\&+d^{\frac{3}{2}}\left\|\phi_\text{prod}(\boldsymbol{x})-x_1 x_2 \cdots x_d\right\|_{\mathcal{W}^{1, \infty}((0,1)^d)}|\psi|_{\fW^{1,\infty}(0,1)}\notag\\\le&d^{\frac{3}{2}}10(d-1)(N+1)^{-7 nd L}\left(1+4\lfloor N^{1/d}\rfloor^2\lfloor L^{2/d}\rfloor\right)\notag\\\le&50 d^{\frac{5}{2}}(N+1)^{-4dnL},	\end{align}
			where the last inequality is due to \[\frac{\lfloor N^{1/d}\rfloor^2\lfloor L^{2/d}\rfloor}{(N+1)^{ 3d nL}}\le \frac{N^2L^2}{(N+1)^{ 3d nL}}\le \frac{L^2}{(N+1)^{ 3d nL-2}}\le\frac{L^2}{2^{ d nL}}\le1.\]
		\end{proof}
        In the final of this subsection, we establish three lemmas for $\{\Omega_{\vm}\}_{\vm\in\{1,2\}^d}$, $\{g_{\vm}\}_{\vm\in\{1,2\}^d}$ and $\{\phi_{\vm}\}_{\vm\in\{1,2\}^d}$ defined in Subsection \ref{relu}.
  
        \begin{lem}\label{omegalem}For $\{\Omega_{\vm}\}_{\vm\in\{1,2\}^d}$ defined in Definition \ref{omega}, we have \[\bigcup_{\vm\in\{1,2\}^d}\Omega_{\vm}=[0,1]^d.\]\end{lem}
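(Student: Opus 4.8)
\textbf{Proof proposal for Lemma \ref{omegalem}.}

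The plan is to reduce the $d$-dimensional claim to a one-dimensional statement by exploiting the product structure of $\Omega_{\vm} = \prod_{j=1}^d \Omega_{m_j}$. Since a point $\vx = (x_1,\ldots,x_d) \in [0,1]^d$ lies in some $\Omega_{\vm}$ with $\vm \in \{1,2\}^d$ if and only if for each coordinate $j$ there is a choice $m_j \in \{1,2\}$ with $x_j \in \Omega_{m_j}$, it suffices to prove the one-dimensional identity $\Omega_1 \cup \Omega_2 = [0,1]$, where $K = \lfloor N^{1/d}\rfloor^2 \lfloor L^{2/d}\rfloor$ is a fixed positive integer. Once this is established, each coordinate can be assigned independently, so every $\vx \in [0,1]^d$ belongs to $\Omega_{\vm}$ for an appropriate $\vm$, giving $\bigcup_{\vm} \Omega_{\vm} = [0,1]^d$; the reverse inclusion is immediate since each $\Omega_{m_j} \subset [0,1]$ (for $\Omega_2$ this is explicit in Definition \ref{omega}, and for $\Omega_1$ it follows because $\left[\frac{i}{K}, \frac{i}{K}+\frac{3}{4K}\right] \subset [0,1]$ for $0 \le i \le K-1$).

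For the one-dimensional claim, I would fix an arbitrary $x \in [0,1]$ and locate it within one of the subintervals of the uniform grid $\{i/K\}_{i=0}^{K}$. Writing $x \in \left[\frac{i}{K}, \frac{i+1}{K}\right]$ for some $0 \le i \le K-1$, I would split the interval $\left[\frac{i}{K}, \frac{i+1}{K}\right]$ into two pieces: if $x \in \left[\frac{i}{K}, \frac{i}{K}+\frac{3}{4K}\right]$, then $x \in \Omega_1$ directly from the definition of $\Omega_1$; otherwise $x \in \left(\frac{i}{K}+\frac{3}{4K}, \frac{i+1}{K}\right]$, and I would check that this piece is covered by the $\Omega_2$-block centered at $\frac{i+1}{K}$, namely $\left[\frac{i+1}{K} - \frac{1}{2K}, \frac{i+1}{K} + \frac{1}{4K}\right] = \left[\frac{i}{K} + \frac{1}{2K}, \frac{i+1}{K} + \frac{1}{4K}\right]$, which indeed contains $\left(\frac{i}{K}+\frac{3}{4K}, \frac{i+1}{K}\right]$. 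Intersecting with $[0,1]$ does not remove any of these points since they lie in $[0,1]$ already. This covers all of $[0,1]$ except possibly we should double-check the right endpoint $x=1 = \frac{K}{K}$, which lies in the $\Omega_2$-block $\left[\frac{K}{K} - \frac{1}{2K}, \frac{K}{K} + \frac{1}{4K}\right] \cap [0,1] \ni 1$, so it is covered.

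The argument is essentially a bookkeeping exercise, so there is no serious obstacle; the only point requiring a little care is verifying that the "gap" of $\Omega_1$ on each grid cell — the interval $\left(\frac{i}{K}+\frac{3}{4K}, \frac{i+1}{K}\right)$ of length $\frac{1}{4K}$ — is exactly the portion picked up by the shifted cover $\Omega_2$, and that the endpoints and the truncation to $[0,1]$ cause no trouble. I would present the one-dimensional computation in a couple of lines, then assemble the $d$-dimensional statement by the coordinatewise argument above.
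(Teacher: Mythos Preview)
Your proposal is correct and follows essentially the same approach as the paper: both reduce the $d$-dimensional claim to the one-dimensional identity $\Omega_1\cup\Omega_2=[0,1]$, the paper via a short induction on $d$ and you via the equivalent direct coordinatewise (distributivity) argument. Your write-up actually supplies the explicit verification of $\Omega_1\cup\Omega_2=[0,1]$ that the paper simply asserts, so your version is slightly more detailed but otherwise the same.
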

		
		\begin{proof}
			We prove this lemma via induction. $d=1$ is valid due to $\Omega_1\cup\Omega_2=[0,1]$. Assume that the lemma is true for $d-1$, then \begin{align}
				\bigcup_{\vm\in\{1,2\}^d}\Omega_{\vm}=&[0,1]^d=\bigcup_{\vm\in\{1,2\}^{d-1}}\Omega_{\vm}\times \Omega_1+\bigcup_{\vm\in\{1,2\}^{d-1}}\Omega_{\vm}\times \Omega_2\notag\\=&\left([0,1]^{d-1}\times \Omega_1\right)\bigcup\left([0,1]^{d-1}\times \Omega_2\right)=[0,1]^d,
			\end{align} hence the case of $d$ is valid, and we finish the proof of the lemma.
		\end{proof}
		
		\begin{lem}\label{prog}
			$\{g_{\vm}\}_{\vm\in\{1,2\}^d}$ defined in Definition \ref{gm} satisfies:
			
			(i): $\sum_{\vm\in\{1,2\}^d}g_{\vm}(\vx)=1$ for every $x\in[0,1]^d$.
			
			(ii): ${\rm supp}~g_{\vm}\cap[0,1]^d\subset\Omega_{\vm}$, where $\Omega_{\vm}$ is defined in Definition \ref{omega}.
			
			(ii): For any $\vm=(m_1,m_2,\ldots,m_d)\in\{1,2\}^d$ and $\vx=(x_1,x_2,\ldots,x_d)\in[0,1]^d\backslash\Omega_{\vm}$, there exists $j$ such as $g_{m_j}(x_j)=0$ and $\frac{\,\D g_{m_j}(x_j)}{\,\D x_j}=0$.
		\end{lem}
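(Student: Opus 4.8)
\textbf{Proof proposal for Lemma \ref{prog}.}

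The plan is to verify the three properties by reducing everything to the one-dimensional building blocks $g_1,g_2$ from Definition \ref{gm}, since $g_{\vm}(\vx)=\prod_{j=1}^d g_{m_j}(x_j)$ is a tensor product. First I would establish the one-dimensional version of (i), namely $g_1(x)+g_2(x)=1$ for all $x\in[0,1]$. This follows by directly reading off the piecewise-linear definition of $g_1$ and using $g_2(x)=g_1(x+\tfrac{1}{2K})$: on each subinterval of length $\tfrac{1}{4K}$ within a period $[\tfrac iK,\tfrac{i+1}{K}]$ one of the two functions is constant ($0$ or $1$) while the other is the complementary ramp, so the sum telescopes to $1$. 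Then the $d$-dimensional identity is immediate: $\sum_{\vm\in\{1,2\}^d}g_{\vm}(\vx)=\sum_{\vm}\prod_{j=1}^d g_{m_j}(x_j)=\prod_{j=1}^d\bigl(g_1(x_j)+g_2(x_j)\bigr)=1$, by distributing the product over the sum indexed by $\{1,2\}^d$.

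For (ii), I would first check the one-dimensional support claim: ${\rm supp}\,g_1\cap[0,1]\subset\Omega_1$ and ${\rm supp}\,g_2\cap[0,1]\subset\Omega_2$. Inspecting Definition \ref{gm}, $g_1$ vanishes on $[\tfrac iK+\tfrac{3}{4K},\tfrac{i+1}{K}]$ and is supported on $[\tfrac iK,\tfrac iK+\tfrac{3}{4K}]$, which is exactly one of the intervals comprising $\Omega_1$; the claim for $g_2$ follows from the shift. Since $g_{\vm}(\vx)\ne0$ forces $g_{m_j}(x_j)\ne0$ for every $j$, we get $x_j\in\Omega_{m_j}$ for all $j$, i.e. $\vx\in\prod_j\Omega_{m_j}=\Omega_{\vm}$, and taking closures gives ${\rm supp}\,g_{\vm}\cap[0,1]^d\subset\Omega_{\vm}$.

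For the third property (the labeling in the lemma statement repeats ``(ii)'', but this is the final item), take $\vx\in[0,1]^d\setminus\Omega_{\vm}$. Then $\vx\notin\prod_j\Omega_{m_j}$, so there is an index $j$ with $x_j\notin\Omega_{m_j}$. The key point, which I would verify from Definition \ref{gm}, is that the complement of $\Omega_1$ in $[0,1]$ is contained in the interiors of the intervals $(\tfrac iK+\tfrac{3}{4K},\tfrac{i+1}{K})$ where $g_1\equiv 0$ on a whole neighborhood, hence both $g_1(x_j)=0$ and $g_1'(x_j)=0$ there; likewise for $g_2$ via the shift. This gives $g_{m_j}(x_j)=0$ and $\tfrac{\D g_{m_j}(x_j)}{\D x_j}=0$, as required. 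The main (very mild) obstacle is purely bookkeeping: being careful about the endpoint behavior of $\Omega_1,\Omega_2$ at $0$ and $1$ (the intersection with $[0,1]$ in Definition \ref{omega}) and confirming that on the relevant ``flat zero'' intervals the one-sided/weak derivatives indeed vanish — there is no real mathematical difficulty, only a finite case check on one period of the piecewise-linear functions.
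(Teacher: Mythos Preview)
Your proposal is correct and follows essentially the same approach as the paper: both reduce to the one-dimensional facts about $g_1,g_2$ and exploit the tensor-product structure $g_{\vm}(\vx)=\prod_j g_{m_j}(x_j)$. The only cosmetic difference is that the paper phrases (i) as an induction on $d$ (in the style of Lemma~\ref{omegalem}) rather than your direct distributivity argument $\sum_{\vm}\prod_j g_{m_j}(x_j)=\prod_j(g_1(x_j)+g_2(x_j))$, and for (ii)--(iii) the paper argues WLOG for $\vm_*=(1,\ldots,1)$ while you handle general $\vm$, but the substance is identical.
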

		
		\begin{proof}
			(i) can be proved via induction as Lemma \ref{omegalem}, and we leave it to readers.
			
			As for (ii) and (iii), without loss of generality, we show the proof for $\vm_*:=(1,1,\ldots,1)$. For any $\vx\in [0,1]^d\backslash\Omega_{\vm_*}$, there is $x_j\in [0,1]\backslash\Omega_1$. Then $g_1(x_j)=0$ and $g_{\vm_*}(\vx)=\prod_{j=1}^d g_{1}(x_j)=0$, therefore ${\rm supp}~g_{\vm_*}\cap[0,1]^d\subset\Omega_{\vm_*}$. Furthermore, $\frac{\,\D g_{m_j}(x_j)}{\,\D x_j}=0$ for $x_j\in [0,1]\in\Omega_1$ due to the definition of $g_1$ (Definition \ref{gm}), then we finish this proof.
		\end{proof}
		
		The following lemma demonstrates that $\phi_{\vm}$, as defined in Proposition \ref{peri}, can restrict the Sobolev norm of the entire space to $\Omega_{\vm}$.
		\begin{lem}\label{reduce}
			For any $\chi(\vx)\in\fW^{1,\infty}((0,1)^d)$, denote \[M=\max\{\|\chi\|_{\fW^{1,\infty}((0,1)^d)},\|\phi_{\vm}\|_{\fW^{1,\infty}((0,1)^d)}\},\]then we have \begin{align}
				\|\phi_{\vm}(\vx)\cdot\chi(\vx)\|_{\fW^{1,\infty}((0,1)^d)}=&	\|\phi_{\vm}(\vx)\cdot\chi(\vx)\|_{\fW^{1,\infty}(\Omega_{\vm})}\notag\\\|\phi_{\vm}(\vx)\cdot\chi(\vx)-\phi_M(\phi_{\vm}(\vx),\chi(\vx))\|_{\fW^{1,\infty}((0,1)^d)}=&	\|\phi_{\vm}(\vx)\cdot\chi(\vx)-\phi_M(\phi_{\vm}(\vx),\chi(\vx))\|_{\fW^{1,\infty}(\Omega_{\vm})}
			\end{align} for any $\vm\in\{1,2\}^d$, where $\phi_{\vm}(\vx)$ and $\Omega_{\vm}$ is defined in Proposition \ref{peri} and Definition.~\ref{omega}, and $\phi_M$ is from Proposition \ref{2prop} (choosing $a=M$ in the proposition).
		\end{lem}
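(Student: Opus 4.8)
\textbf{Proof proposal for Lemma \ref{reduce}.}

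The plan is to exploit the key structural property of $\phi_{\vm}$ established in Proposition \ref{peri}, namely that it is built as $\phi_{\vm}(\vx)=\phi_{\text{prod}}(g_{m_1}(x_1),\ldots,g_{m_d}(x_d))$ through the product network from Proposition \ref{prop1}, together with property Eq.~(\ref{equal0}): whenever one of the input coordinates vanishes, both the network output and all its partial derivatives vanish. Combining this with part (iii) of Lemma \ref{prog} — that for $\vx\in[0,1]^d\setminus\Omega_{\vm}$ there is a coordinate $j$ with $g_{m_j}(x_j)=0$ and $\frac{\,\D g_{m_j}(x_j)}{\,\D x_j}=0$ — I would show that $\phi_{\vm}$ and $\nabla\phi_{\vm}$ both vanish identically on $[0,1]^d\setminus\Omega_{\vm}$.

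First I would make the vanishing precise: fix $\vx\in[0,1]^d\setminus\Omega_{\vm}$ and let $j$ be the index from Lemma \ref{prog}(iii). Since $g_{m_j}(x_j)=0$, Eq.~(\ref{equal0}) (with the role of the zero coordinate played by the $j$-th input of $\phi_{\text{prod}}$) gives $\phi_{\vm}(\vx)=0$. For the gradient, the chain rule yields $\partial_{x_k}\phi_{\vm}(\vx)=\sum_{\ell=1}^d \big(\partial_\ell\phi_{\text{prod}}\big)\big(g_{m_1}(x_1),\ldots,g_{m_d}(x_d)\big)\cdot \frac{\,\D g_{m_\ell}(x_k)}{\,\D x_k}\,\mathbf 1[\ell=k]$, i.e. $\partial_{x_k}\phi_{\vm}(\vx)=\big(\partial_k\phi_{\text{prod}}\big)(\cdots)\cdot \frac{\,\D g_{m_k}(x_k)}{\,\D x_k}$. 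For $k=j$ the second factor is $0$ by Lemma \ref{prog}(iii); for $k\ne j$ the first factor is $0$ by the derivative part of Eq.~(\ref{equal0}) applied with the $j$-th input equal to zero. Hence $\nabla\phi_{\vm}(\vx)=\mathbf 0$ on all of $[0,1]^d\setminus\Omega_{\vm}$. A mild care point here is that $\phi_{\vm}$ is only piecewise smooth (ReLU network), so I would phrase this in the weak-derivative sense and note that the set where the above pointwise computation could fail has measure zero and the conclusion $D_k\phi_{\vm}=0$ a.e.\ on $[0,1]^d\setminus\Omega_{\vm}$ still holds.

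Given this, the lemma follows from the Leibniz rule for weak derivatives (Lemma \ref{product}) applied to each product $\phi_{\vm}\cdot\chi$: on $[0,1]^d\setminus\Omega_{\vm}$ we have $\phi_{\vm}\cdot\chi=0$ and $D_k(\phi_{\vm}\chi)=\chi\,D_k\phi_{\vm}+\phi_{\vm}\,D_k\chi=0$ a.e., so the $L^\infty$ norm and the $\fW^{1,\infty}$ semi-norm of $\phi_{\vm}\chi$ over $(0,1)^d$ are attained (as essential suprema) on $\Omega_{\vm}$, which is exactly the first claimed equality. For the second equality I would apply the same reasoning to the function $\phi_{\vm}\cdot\chi-\phi_M(\phi_{\vm},\chi)$: by Eq.~(\ref{zero}) of Proposition \ref{2prop}, $\phi_M(0,y)=\partial_y\phi_M(0,y)=0$, so wherever $\phi_{\vm}(\vx)=0$ one also has $\phi_M(\phi_{\vm}(\vx),\chi(\vx))=0$, and its gradient vanishes too because $\partial_{x_k}\phi_M(\phi_{\vm},\chi)=\partial_u\phi_M\cdot\partial_{x_k}\phi_{\vm}+\partial_v\phi_M\cdot\partial_{x_k}\chi$, where the first term vanishes since $\partial_{x_k}\phi_{\vm}=0$ on $[0,1]^d\setminus\Omega_{\vm}$ and the second vanishes since $\partial_v\phi_M(0,v)=0$ by Eq.~(\ref{zero}). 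Hence $\phi_{\vm}\chi-\phi_M(\phi_{\vm},\chi)$ and its weak gradient vanish a.e.\ on $[0,1]^d\setminus\Omega_{\vm}$, giving the second identity. The main obstacle is the bookkeeping of the chain rule across the two layers of composition (the product network $\phi_{\text{prod}}$ inside, and $\phi_M$ outside) while being careful that everything is only Lipschitz rather than $C^1$; once the two vanishing facts Eq.~(\ref{equal0}) and Eq.~(\ref{zero}) are in hand, the rest is the elementary observation that an essential supremum over $(0,1)^d$ of a quantity supported on $\Omega_{\vm}$ equals the essential supremum over $\Omega_{\vm}$.
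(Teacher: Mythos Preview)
Your proposal is correct and follows essentially the same route as the paper: you use Lemma \ref{prog}(iii) together with Eq.~(\ref{equal0}) to show $\phi_{\vm}$ and all its partial derivatives vanish on $(0,1)^d\setminus\Omega_{\vm}$, then apply the product/chain rule and Eq.~(\ref{zero}) to conclude both identities---exactly as the paper does. One cosmetic remark: what you invoke for $D_k(\phi_{\vm}\chi)=\chi\,D_k\phi_{\vm}+\phi_{\vm}\,D_k\chi$ is just the standard Leibniz rule, not Lemma \ref{product} (which is only a norm inequality), but this does not affect the argument.
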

		
		\begin{proof}
			For the first equality, we only need to show that\begin{equation}
				\|\phi_{\vm}(\vx)\cdot\chi(\vx)\|_{\fW^{1,\infty}\left((0,1)^d\backslash \Omega_{\vm}\right)}=0.
			\end{equation}
			
			According to the Proposition \ref{peri}, we have $\phi_{\vm}(\vx)=\phi_\text{prod}(g_{m_1},g_{m_2},\ldots,g_{m_d})$, and for any $\vx=(x_1,x_2,\ldots,x_d)\in(0,1)^d\backslash \Omega_{\vm}$, there is $m_j$ such as $g_{m_j}(x_j)=0$ and $\frac{\,\D g_{m_j}(x_j)}{\,\D x_j}=0$ due to Lemma \ref{prog}. Based on Eq.~(\ref{equal0}) in Proposition \ref{prop1}, we have \[\phi_{\vm}(\vx)=\frac{\partial \phi_{\vm}(\vx)}{\partial x_s}=0,~x\in (0,1)^d\backslash \Omega_{\vm}, s\not=j.\]
			
			Furthermore, \begin{equation}
				\frac{\partial \phi_{\vm}(\vx)}{\partial x_j}=\frac{\partial \phi_\text{prod}(g_{m_1},g_{m_2},\ldots,g_{m_d})}{\partial g_{m_j}}\frac{\,\D g_{m_j}(x_j)}{\,\D x_j}=0.
			\end{equation}
			Hence we have \begin{equation}
				|\phi_{\vm}(\vx)\cdot\chi(\vx)|+\sum_{q=1}^d\left|\frac{\partial \left[\phi_{\vm}(\vx)\cdot\chi(\vx)\right]}{\partial x_q}\right|=0
			\end{equation} for all $\vx\in (0,1)^d\backslash \Omega_{\vm}$.
			
			Similarly, for the second equality in this lemma, we have \begin{align}
				&|\phi_M(\phi_{\vm}(\vx),\chi(\vx))|+\sum_{q=1}^d\left|\frac{\partial \left[\phi_M(\phi_{\vm}(\vx),\chi(\vx))\right]}{\partial x_q}\right|\notag\\=&|\phi_M(0,\chi(\vx))|+\sum_{q=1}^d\left[\left|\frac{\partial \left[\phi_M(0,\chi(\vx))\right]}{\partial \chi(\vx)}\cdot\frac{\partial \chi(\vx)}{\partial x_q}\right|+\left|\frac{\partial \left[\phi_M(\phi_{\vm}(\vx),\chi(\vx))\right]}{\partial \phi_{\vm}(\vx)}\cdot \frac{\partial \phi_{\vm}(\vx)}{\partial x_q}\right|\right]\notag\\=&0,
			\end{align} for all $\vx\in (0,1)^d\backslash \Omega_{\vm}$ based on \[\phi_M(0,y)=\frac{\partial \phi_M(0,y)}{\partial y}=0,~y\in(-M,M),\] and $\frac{\partial \phi_{\vm}(\vx)}{\partial x_q}=0$. Hence we finish our proof.
		\end{proof}
		
		\subsubsection{An approximation of functions in Sobolev spaces based on the Bramble--Hilbert Lemma \cite[Lemma 4.3.8]{brenner2008mathematical}} \label{step1}
		
		In this subsection, we establish $\{f_{K,\vm}\}_{\vm\in\{1,2\}^d}$ as mentioned in Subsection \ref{relu}, which is presented in Theorem \ref{fN}. To prove this result, we build upon the work of \cite{guhring2020error}, which leverages the average Taylor polynomials and the Bramble-Hilbert Lemma to approximate functions in Sobolev spaces.
		
		Before we show Theorem \ref{fN}, we define subsets of $\Omega_{\vm}$ for simplicity notations.
		
		Foe any $\vm\in\{1,2\}^d$, we define \begin{equation}
			\Omega_{\vm,\vi}:=[0,1]^d\cap\prod_{j=1}^d\left[\frac{2i_j-1_{m_j\le 2}}{2K},\frac{3+4i_j-2\cdot1_{m_j\le 2}}{4K}\right]
		\end{equation}$\vi=(i_1,i_2,\ldots,i_d)\in\{0,1\ldots,K\}^d$, and it is easy to check $\bigcup_{\vi\in\{0,1\ldots,K\}^d}\Omega_{\vm,\vi}=\Omega_{\vm}$. \begin{thm}\label{fN}
			Let $K\in\sN_+$ and $n\ge 2$. Then for any $f\in\fW^{n,\infty}((0,1)^d)$ with $\|f\|_{\fW^{n,\infty}((0,1)^d)}\le 1$ and $\vm\in\{1,2\}^d$, there exist piece-wise polynomials function $f_{K,\vm}=\sum_{|\valpha|\le n-1}g_{f,\valpha,\vm}(\vx)\vx^{\valpha}$ on $\Omega_{\vm}$ (Definition \ref{omega}) with the following properties:
			\begin{align}
				\|f-f_{K,\vm}\|_{\fW^{1,\infty}(\Omega_{\vm})}&\le C_1(n,d)K^{-(n-1)},\notag\\\|f-f_{K,\vm}\|_{L^{\infty}(\Omega_{\vm})}&\le C_1(n,d)K^{-n}.  
			\end{align}Furthermore, $g_{f,\valpha,\vm}(\vx):\Omega_{\vm}\to\sR$ is a constant function with on each $\Omega_{\vm,\vi}$ for $\vi\in\{0,1\ldots,K\}^d.$ And \begin{equation}
				|g_{f,\valpha,\vm}(\vx)|\le C_2(n,d)
			\end{equation} for all $\vx\in\Omega_{\vm}$, where $C_1$ and $C_2$ are constants independent with $K$.
		\end{thm}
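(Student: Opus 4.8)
The plan is to apply the Bramble--Hilbert Lemma locally on each small cube of the partition $\{\Omega_{\vm,\vi}\}$ and then patch the resulting polynomial pieces together into the single piecewise-polynomial function $f_{K,\vm}$. First I would recall that each $\Omega_{\vm,\vi}$ is (up to intersection with $[0,1]^d$) a cube of side length $\tfrac{3}{4K}$, hence convex with diameter $\asymp K^{-1}$; this is exactly the geometric input the Bramble--Hilbert machinery needs. On each such cube I would invoke \cite[Lemma 4.3.8]{brenner2008mathematical} with the averaged Taylor polynomial $Q^{n}_{\vm,\vi}f$ of degree $n-1$ (averaged over a ball inscribed in $\Omega_{\vm,\vi}$), which gives
\begin{equation}
  \|f - Q^{n}_{\vm,\vi}f\|_{\fW^{k,\infty}(\Omega_{\vm,\vi})}\le C(n,d)\,(\operatorname{diam}\Omega_{\vm,\vi})^{n-k}\,|f|_{\fW^{n,\infty}(\Omega_{\vm,\vi})}
\end{equation}
for $k=0,1$. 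Since $\operatorname{diam}\Omega_{\vm,\vi}\le \tfrac{\sqrt d}{K}$ and $\|f\|_{\fW^{n,\infty}}\le 1$, this yields the claimed $K^{-n}$ bound in $L^\infty$ and $K^{-(n-1)}$ bound in $\fW^{1,\infty}$ on each piece, with a constant depending only on $n,d$.

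Next I would define $f_{K,\vm}$ by setting it equal to $Q^{n}_{\vm,\vi}f$ on $\Omega_{\vm,\vi}$. The crucial structural point is that the interiors of the sets $\Omega_{\vm,\vi}$ are pairwise disjoint (they only overlap on measure-zero faces), because of the specific way $\Omega_1,\Omega_2$ are built from intervals of length $\tfrac{3}{4K}$ spaced $\tfrac{1}{K}$ apart; I would verify this disjointness coordinate-by-coordinate and note $\bigcup_{\vi}\Omega_{\vm,\vi}=\Omega_{\vm}$ as already stated. Because the overlaps have measure zero and we are working with $L^\infty$-type norms, the value of $f_{K,\vm}$ on the shared faces is immaterial for the error estimates, so $f_{K,\vm}$ is well-defined as an $L^\infty$ function and the local bounds above globalize to $\Omega_{\vm}$ by taking the max over $\vi$. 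Then I would expand each local polynomial $Q^{n}_{\vm,\vi}f(\vx)=\sum_{|\valpha|\le n-1}c_{\valpha,\vm,\vi}\,\vx^{\valpha}$ in the monomial basis and define $g_{f,\valpha,\vm}(\vx):=c_{\valpha,\vm,\vi}$ for $\vx\in\Omega_{\vm,\vi}$; by construction each $g_{f,\valpha,\vm}$ is constant on each $\Omega_{\vm,\vi}$, giving the claimed piecewise-constant form $f_{K,\vm}=\sum_{|\valpha|\le n-1}g_{f,\valpha,\vm}(\vx)\,\vx^{\valpha}$.

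Finally I would bound the coefficients $|g_{f,\valpha,\vm}|=|c_{\valpha,\vm,\vi}|$ uniformly by $C_2(n,d)$. This follows from the standard estimates on averaged Taylor coefficients: the coefficients of $Q^{n}_{\vm,\vi}f$ are integrals of $f$ and its derivatives up to order $n-1$ against bounded kernels over the averaging ball, so $|c_{\valpha,\vm,\vi}|\le C(n,d)\|f\|_{\fW^{n-1,\infty}(\Omega_{\vm,\vi})}\le C(n,d)$; one should be slightly careful that the averaging ball's radius is $\asymp K^{-1}$, which introduces negative powers of $K$ in some intermediate kernel bounds, but these are exactly cancelled by the positive powers of $K$ coming from the volume normalization of the ball, leaving a $K$-independent constant. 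The main obstacle I anticipate is precisely this bookkeeping — tracking the $K$-dependence through the averaged Taylor polynomial construction to confirm the error constants $C_1$ and the coefficient constant $C_2$ are genuinely independent of $K$ — together with the verification that the averaging ball can be chosen inside $\Omega_{\vm,\vi}$ uniformly (including the boundary cubes that get clipped by $[0,1]^d$, which are smaller but still contain a ball of radius $\asymp K^{-1}$, so the chunkiness parameter stays bounded).
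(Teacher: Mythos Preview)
Your approach is essentially the same as the paper's: local averaged Taylor polynomials plus the Bramble--Hilbert Lemma on each $\Omega_{\vm,\vi}$, then patch. The one substantive difference concerns your identified ``main obstacle,'' the boundary cubes clipped by $[0,1]^d$. The paper sidesteps this entirely by first applying a Sobolev extension operator $E:\fW^{n,\infty}((0,1)^d)\to\fW^{n,\infty}(\sR^d)$, setting $\tilde f=Ef$, and then taking all averaged Taylor polynomials of $\tilde f$; this way the averaging balls $B_{\frac{1}{4K},|\cdot|}$ are always compactly contained in the domain of definition, the chunkiness parameter is uniformly $2\sqrt d$ for every piece, and the bookkeeping you worry about disappears (at the cost of the constant $C_E=\|E\|$ entering $C_1$). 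Your alternative of arguing directly that the clipped cubes still contain a ball of radius $\asymp K^{-1}$ can be made to work, but the extension trick is cleaner and is how the paper handles exactly the difficulty you flagged. The paper also writes $f_{K,\vm}=\sum_{\vi}h_{\vi}p_{f,\vi}$ with an explicit partition of unity $\{h_{\vi}\}$, but since $h_{\vi}\equiv 1$ on $\Omega_{\vm,\vi}$ this is purely cosmetic and reduces to your piecewise definition.
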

		
		This proof is similar to that of \cite[Lemma C.4.]{guhring2020error}, but we provide 
		detailed proof as follows for readability. Before the proof, we must introduce the partition of unity, average Taylor polynomials, and a lemma.
		
		\begin{defi}[The partition of unity]\label{unity} Let $d,K\in\sN_+$, then
			\[\Psi=\left\{h_{\vi}:{\vi}\in\{0,1,\ldots,K\}^d\right\}\] with $h_{\vi}:\sR^d\to\sR$ for all ${\vi}\in\{0,1,\ldots,K\}^d$ is called the partition of unity $[0,1]^d$ if it satisfies
			
			(i): $0\le h_{\vi}(\vx)\le 1$ for every $h_{\vi}\in\Psi$.
			
			(ii): $\sum_{h_{\vi}\in\Psi}h_{\vi}=1$ for every $x\in [0,1]^d$.
		\end{defi}	
		
		\begin{defi}\label{average}
			Let $n\ge 1$ and $f\in \fW^{n,\infty}((0,1)^d)$, $\vx_0\in((0,1)^d)$ and $r>0$ such that for the ball $B(\vx_0):=B(\vx_0)_{r,|\cdot|}$ which is a compact subset of $ ((0,1)^d)$. The corresponding Taylor
			polynomial of order $n$ of $f$ averaged over $B$ is defined for \begin{equation}
				Q^nf(x):=\int_{B}	T^n_{\vy}f(\vx)b_r(\vy)\,\D \vy
			\end{equation} where \begin{align}
				T^n_{\vy}{f}(\vx)&:=\sum_{|\valpha|\le n-1}\frac{1}{\valpha!}D^{\valpha}{f}(\vy)(\vx-\vy)^{\valpha},\notag\\b_r(\vx)&:=\begin{cases}
					\frac{1}{c_r}e^{-\left(1-\left(| \vx-\vx_{0}| / r\right)^{2}\right)^{-1}},~&|\vx-\vx_0|<r, \\
					0, ~&|\vx-\vx_0|\le r ,
				\end{cases}\notag\\c_r&=\int_{\sR^d}e^{-\left(1-\left(| \vx-\vx_{0}| / r\right)^{2}\right)^{-1}}\,\D x.
			\end{align}
		\end{defi}
		
		\begin{lem}\label{average coe}
			Let $n\ge 1$ and $f\in \fW^{n,\infty}((0,1)^d)$, $\vx_0\in\Omega$ and $r>0$ such that for the ball $B(\vx_0):=B_{r,|\cdot|}(\vx_0)$ which is a compact subset of $ ((0,1)^d)$. The corresponding Taylor
			polynomial of order $n$ of $f$ averaged over $B$ can be read as \[Q^nf(\vx)=\sum_{|\valpha|\le n-1}c_{f,\valpha}\vx^{\valpha}.\]
			Furthermore, \begin{align}
				\left|c_{f,\valpha}\right|\le C_2(n,d)\|{f}\|_{\fW^{n-1,\infty}(B)}.
			\end{align} where $C_2(n,d)=\sum_{|\valpha+\vbeta|\le n-1}\frac{1}{\valpha!\vbeta!}$.
		\end{lem}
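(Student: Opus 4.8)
The plan is to expand the averaged Taylor polynomial into an explicit polynomial in $\vx$ and then read off and estimate its coefficients. First I would apply the multi-index binomial theorem $(\vx-\vy)^{\valpha}=\sum_{\vbeta\le\valpha}\binom{\valpha}{\vbeta}\vx^{\vbeta}(-\vy)^{\valpha-\vbeta}$, where $\binom{\valpha}{\vbeta}:=\prod_{j=1}^d\binom{\alpha_j}{\beta_j}$, to each monomial appearing in $T^n_{\vy}f(\vx)$. Plugging this into the definition of $Q^nf$ and interchanging the (finite) sum with the integral gives
\[
Q^nf(\vx)=\sum_{|\valpha|\le n-1}\frac{1}{\valpha!}\sum_{\vbeta\le\valpha}\binom{\valpha}{\vbeta}\vx^{\vbeta}\int_B D^{\valpha}f(\vy)(-\vy)^{\valpha-\vbeta}b_r(\vy)\,\D\vy,
\]
and collecting the terms carrying the same power $\vx^{\vbeta}$ yields $Q^nf(\vx)=\sum_{|\vbeta|\le n-1}c_{f,\vbeta}\vx^{\vbeta}$ with
\[
c_{f,\vbeta}=\sum_{\substack{\valpha\ge\vbeta\\|\valpha|\le n-1}}\frac{1}{\valpha!}\binom{\valpha}{\vbeta}\int_B D^{\valpha}f(\vy)(-\vy)^{\valpha-\vbeta}b_r(\vy)\,\D\vy .
\]
Using the elementary componentwise identity $\frac{1}{\valpha!}\binom{\valpha}{\vbeta}=\frac{1}{\vbeta!\,(\valpha-\vbeta)!}$, this already shows that $Q^nf$ is a polynomial of degree at most $n-1$; relabeling the index $\vbeta$ as $\valpha$ gives the stated form.

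For the coefficient bound I would invoke three elementary facts about the integrand: (a) $b_r\ge 0$ and $\int_{\sR^d}b_r(\vy)\,\D\vy=1$ by the normalization constant $c_r$; (b) $|(-\vy)^{\valpha-\vbeta}|=|\vy^{\valpha-\vbeta}|\le 1$ for $\vy\in B\subset\Omega=(0,1)^d$, since every coordinate satisfies $|y_j|\le 1$; and (c) $|D^{\valpha}f(\vy)|\le\|f\|_{\fW^{n-1,\infty}(B)}$ whenever $|\valpha|\le n-1$. Combining these inside each integral and summing over $\valpha$ gives
\[
|c_{f,\vbeta}|\le\sum_{\substack{\valpha\ge\vbeta\\|\valpha|\le n-1}}\frac{1}{\vbeta!\,(\valpha-\vbeta)!}\,\|f\|_{\fW^{n-1,\infty}(B)} .
\]
Substituting $\valpha':=\valpha-\vbeta$ turns the sum into $\sum_{|\valpha'+\vbeta|\le n-1}\frac{1}{\vbeta!\,\valpha'!}$, which is one of the nonnegative ``slices'' of the double sum defining $C_2(n,d)=\sum_{|\valpha+\vbeta|\le n-1}\frac{1}{\valpha!\,\vbeta!}$, hence is at most $C_2(n,d)$. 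This delivers $|c_{f,\vbeta}|\le C_2(n,d)\|f\|_{\fW^{n-1,\infty}(B)}$.

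The argument is essentially a bookkeeping computation, so there is no real analytic obstacle: $b_r$ is a fixed smooth function compactly supported in $B\subset\Omega$, and every derivative of $f$ that occurs has order at most $n-1$, so all the integrals are finite. The only points that require care are the multi-index combinatorics --- verifying the binomial expansion and the factorial identity $\frac{1}{\valpha!}\binom{\valpha}{\vbeta}=\frac{1}{\vbeta!\,(\valpha-\vbeta)!}$ componentwise, and checking that the reindexing $\valpha\mapsto\valpha-\vbeta$ maps the constraint $|\valpha|\le n-1$ correctly onto $|\valpha'+\vbeta|\le n-1$ --- together with keeping track of the change of summation index when identifying the resulting bound with $C_2(n,d)$.
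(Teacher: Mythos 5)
Your proposal is correct and follows essentially the same route as the paper: the paper obtains the same coefficient formula by citing \cite[Lemma B.9]{guhring2020error} (whose content is exactly your multi-index binomial expansion of $(\vx-\vy)^{\valpha}$ followed by collecting powers of $\vx$), and then bounds each integral by $\|f\|_{\fW^{n-1,\infty}(B)}\|b_r\|_{L^1(B)}=\|f\|_{\fW^{n-1,\infty}(B)}$ just as you do. The only difference is that you prove the cited expansion from scratch and make explicit the steps the paper leaves implicit (nonnegativity and normalization of $b_r$, the bound $|\vy^{\valpha-\vbeta}|\le 1$ on $(0,1)^d$, and the reindexing that identifies the factorial sum with a slice of $C_2(n,d)$), all of which check out.
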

		
		\begin{proof}
			Based on \cite[Lemma B.9.]{guhring2020error}, $Q^nf(x)$ can be read as\begin{equation}
				Q^nf(\vx)=\sum_{|\valpha|\le n-1}c_{f,\valpha}\vx^{\valpha}
			\end{equation} where \begin{equation}
				c_{f,\valpha}=\sum_{|\valpha+\vbeta|\le n-1}\frac{1}{(\vbeta+\valpha)!}a_{\vbeta+\valpha}\int_{B}	D^{\valpha+\vbeta}{f}(\vx)\vy^{\vbeta}b_r(\vy)\,\D \vy
			\end{equation} for $a_{\vbeta+\valpha}\le \frac{(\valpha+\vbeta)!}{\valpha!\vbeta!}$.
			Note that \begin{align}
				\left|\int_{B}	D^{\valpha+\vbeta}{f}(\vx)\vy^{\vbeta}b_r(\vy)\,\D \vy\right|\le \|{f}\|_{\fW^{n-1,\infty}(B)}\|b_r(x)\|_{L^1(B)}=\|{f}\|_{\fW^{n-1,\infty}(B)}.
			\end{align} Then  \begin{align}
				\left|c_{f,\valpha}\right|\le C_2(n,d)\|{f}\|_{\fW^{n-1,\infty}(B_{\vm,N})}.
			\end{align} where $C_2(n,d)=\sum_{|\valpha+\vbeta|\le n-1}\frac{1}{\valpha!\vbeta!}$.
		\end{proof}

		The proof of Theorem \ref{fN} is based on average Taylor polynomials and the Bramble--Hilbert Lemma \cite[Lemma 4.3.8]{brenner2008mathematical}. 
		
		\begin{defi}
			Let $\Omega,~B\in\sR^d$. Then $\Omega$ is called stared-shaped with respect to $B$ if \[\overline{\text{conv}}\left(\{\vx\}\cup B\subset \Omega\right),~\text{for all }\vx\in\Omega.\]
		\end{defi}
		
		\begin{defi}
			Let $\Omega\in\sR^d$ be bounded, and define \[\fR:=\left\{r>0: \begin{array}{l}
				\text { there exists } \vx_0 \in \Omega \text { such that } \Omega \text { is } \\
				\text { star-shaped with respect to } B_{r,|\cdot|}\left(\vx_0\right)
			\end{array}\right\} .\]Then we define\[r_{\max }^{\star}:=\sup \fR \quad \text { and call } \quad \gamma:=\frac{\operatorname{diam}(\Omega)}{r_{\max }^{\star}}\]the chunkiness parameter of $\Omega$ if $\fR\not=\emptyset$.
		\end{defi}
		
		\begin{lem}[{Bramble--Hilbert Lemma \cite[Lemma 4.3.8]{brenner2008mathematical}}]\label{BH}
			Let $\Omega\in\sR^d$ be open and bounded, $\vx_0\in\Omega$ and $r>0$ such that $\Omega$ is the stared-shaped with respect to $B:=B_{r,|\cdot|}\left(\vx_0\right)$, and $r\ge \frac{1}{2}r_{\max }^{\star}$. Moreover, let $n\in\sN_+$, $1\le p\le \infty$ and denote by $\gamma$ by the chunkiness parameter of $\Omega$. Then there is a constant $C(n,d,\gamma)>0$ such that for all $f\in\fW^{n,p}(\Omega)$\[\left|f-Q^n f\right|_{\fW^{k, p}(\Omega)} \le C(n,d,\gamma) h^{n-k}|f|_{\fW^{n, p}(\Omega)} \quad \text { for } k=0,1, \ldots, n\]where $Q^n f$ denotes the Taylor polynomial of order $n$ of $f$ averaged over $B$ and $h=\operatorname{diam}(\Omega)$.
		\end{lem}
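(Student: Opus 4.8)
The Bramble--Hilbert Lemma is classical, so one option is simply to invoke \cite[Lemma 4.3.8]{brenner2008mathematical}; what follows is how I would reconstruct a self-contained argument. The plan is to reduce the estimate for a general order $k$ to the case $k=0$ via a commutation identity for the averaged Taylor polynomial, and then to treat $k=0$ through the integral form of the Taylor remainder together with a Schur/Young-type convolution bound, arranging that the domain dependence enters only through the chunkiness parameter $\gamma$. For the reduction I would first show that $D^{\valpha}(Q^n f)=Q^{n-|\valpha|}(D^{\valpha}f)$ for $|\valpha|\le n-1$, while $D^{\valpha}(Q^n f)=0$ for $|\valpha|=n$ since $Q^n f$ is a polynomial of degree at most $n-1$; this follows by differentiating under the integral in Definition \ref{average} and using $D^{\valpha}_{\vx}T^n_{\vy}f(\vx)=T^{n-|\valpha|}_{\vy}(D^{\valpha}f)(\vx)$. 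Hence $D^{\valpha}(f-Q^n f)=D^{\valpha}f-Q^{n-k}(D^{\valpha}f)$ for $|\valpha|=k\le n-1$, so $\left|f-Q^n f\right|_{\fW^{k,p}(\Omega)}=\left(\sum_{|\valpha|=k}\left\|D^{\valpha}f-Q^{n-k}(D^{\valpha}f)\right\|_{L^p(\Omega)}^p\right)^{1/p}$ (with the usual modification if $p=\infty$; the case $k=n$ is immediate since then $D^{\valpha}(f-Q^n f)=D^{\valpha}f$ and any $C\ge1$ works). It therefore suffices to prove $\left\|g-Q^m g\right\|_{L^p(\Omega)}\le C(n,d,\gamma)\,h^{m}\,|g|_{\fW^{m,p}(\Omega)}$ for every $g\in\fW^{m,p}(\Omega)$ with $1\le m\le n$.

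By a standard mollification argument it is enough to treat smooth $g$, for which the Taylor remainder has the integral form $g(\vx)-T^m_{\vy}g(\vx)=m\sum_{|\vbeta|=m}\frac{(\vx-\vy)^{\vbeta}}{\vbeta!}\int_0^1(1-t)^{m-1}D^{\vbeta}g(\vy+t(\vx-\vy))\,\D t$. Integrating this against $b_r(\vy)$ over $B$, changing variables $\vz=\vy+t(\vx-\vy)$ for fixed $\vx$ and integrating out $t$, one arrives at a representation
\[
g(\vx)-Q^m g(\vx)=\sum_{|\vbeta|=m}\int_{\Omega}k_{\vbeta}(\vx,\vz)\,D^{\vbeta}g(\vz)\,\D\vz,
\]
where, because $\Omega$ is star-shaped with respect to $B$, each slice $k_{\vbeta}(\vx,\cdot)$ is supported in $\overline{\text{conv}}(\{\vx\}\cup B)\subset\Omega$, and where the constraints $r\ge\frac12 r_{\max}^{\star}$ and $h=\operatorname{diam}(\Omega)$ yield $\sup_{\vx}\int_{\Omega}|k_{\vbeta}(\vx,\vz)|\,\D\vz\le C(n,d,\gamma)\,h^{m}$ and, symmetrically, $\sup_{\vz}\int_{\Omega}|k_{\vbeta}(\vx,\vz)|\,\D\vx\le C(n,d,\gamma)\,h^{m}$; in the typical range $m<d$ both follow from the pointwise estimate $|k_{\vbeta}(\vx,\vz)|\le C(n,d,\gamma)\,|\vx-\vz|^{m-d}$ together with $\int_{|\vx-\vz|\le h}|\vx-\vz|^{m-d}\,\D\vz\le C_d\,h^{m}$.

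By Schur's test (the generalized Young inequality for integral operators), the two $L^1$ bounds on $k_{\vbeta}$ imply that $\phi\mapsto\int_{\Omega}k_{\vbeta}(\cdot,\vz)\phi(\vz)\,\D\vz$ is bounded on $L^p(\Omega)$ for every $1\le p\le\infty$ with operator norm at most $C(n,d,\gamma)\,h^{m}$. Applying this with $\phi=D^{\vbeta}g$ and summing over $|\vbeta|=m$ gives $\|g-Q^m g\|_{L^p(\Omega)}\le C(n,d,\gamma)\,h^{m}\,|g|_{\fW^{m,p}(\Omega)}$, and combining with the reduction above yields $\left|f-Q^n f\right|_{\fW^{k,p}(\Omega)}\le C(n,d,\gamma)\,h^{n-k}\,|f|_{\fW^{n,p}(\Omega)}$ for $k=0,1,\ldots,n$.

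The crux is the kernel analysis in the middle step: deriving the closed form after averaging and, above all, proving the $L^1$ (equivalently pointwise) kernel bounds with a constant depending on the domain only through $\gamma$. This requires tracking which points are reachable by segments emanating from $\vx$ and passing through $B$ (where star-shapedness enters), controlling $b_r$ and the diameter of its support via $r\ge\frac12 r_{\max}^{\star}$, and bounding the Jacobian factor $t^{-d}$ from the substitution $\vz=\vy+t(\vx-\vy)$ uniformly in $\vx$. Once that is in hand, the reduction step and the convolution estimate are routine.
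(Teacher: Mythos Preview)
The paper does not prove Lemma \ref{BH} at all; it is stated with a citation to \cite[Lemma 4.3.8]{brenner2008mathematical} and then used as a black box in the proof of Theorem \ref{fN}. Your opening sentence already recognizes this, and your reconstruction follows precisely the standard Brenner--Scott argument (commutation $D^{\valpha}Q^n f=Q^{n-|\valpha|}D^{\valpha}f$, integral Taylor remainder, kernel representation with pointwise bound $|k_{\vbeta}(\vx,\vz)|\lesssim|\vx-\vz|^{m-d}$, Schur's test), so there is nothing to compare and your sketch is correct.
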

		\begin{proof}[Proof of Theorem \ref{fN}]
			Without loss of generalization, we prove the case for $\vm=(1,1,\ldots,1)=:\vm_*$.
			
			Denote $E: \fW^{n,\infty}((0,1)^d)\to \fW^{n,\infty}(\sR^d)$ be an extension operator \cite{stein1970singular} and set $\tilde{f}:=Ef$ and $C_E$ is the norm of the extension operator.
			
			Define $p_{f,\vi}$ as the average Taylor polynomial Definition \ref{average} in $B_{\vi,K}:=B_{\frac{1}{4K},|\cdot|}\left(\frac{8\vi+3}{8K}\right)$ i.e.
			\begin{equation}
				p_{f,\vi}:=\int_{B_{\vi,K}}	T^n_{\vy}\tilde{f}(\vx)b_{\frac{1}{4K}}(\vy)\,\D \vy.
			\end{equation}
			Based on Lemma \ref{average coe}, $p_{f,\vi}$ can be read as\begin{equation}
				p_{f,\vi}=\sum_{|\valpha|\le n-1}c_{f,\vi,\valpha}\vx^{\valpha}
			\end{equation} where \begin{equation}
				|c_{f,\vi,\valpha}|\le C_2(n,d).
			\end{equation}

			The reason to define average Taylor polynomial on $B_{\vi,K}$ is to use the Bramble--Hilbert Lemma \ref{BH} on \[\Omega_{\vm_*,\vi}=B_{\frac{3}{8K},\|\cdot\|_{\ell_\infty}}\left(\frac{8\vi+3}{8K}\right)=\prod_{j=1}^d\left[\frac{i_j}{K},\frac{3+4i_j}{4K}\right].\] Note that \[\frac{1}{4K}\ge \frac{1}{2}\cdot \frac{3}{8K}=\frac{1}{2}r_{\max }^{\star}(\Omega_{\vm_*,\vi}),~\gamma(\Omega_{\vm_*,\vi})=\frac{\operatorname{diam}(\Omega_{\vm_*,\vi})}{r_{\max }^{\star}(\Omega_{\vm_*,\vi})}=2\sqrt{d}.\]Therefore we can apply the Bramble--Hilbert Lemma \ref{BH} and have \begin{align}
				\|\tilde{f}-p_{f,\vi}\|_{L^\infty(\Omega_{\vm_*,\vi})}&\le C_{BH}(n,d)K^{-n}\notag\\|\tilde{f}-p_{f,\vi}|_{\fW^{1,\infty}(\Omega_{\vm_*,\vi})}&\le C_{BH}(n,d)K^{-(n-1)}
			\end{align} where $C_{BH}(n,d)=|\{|\valpha|=n\}|\frac{1}{d\int_0^1x^{d-1}e^{-\left(1-x^{2}\right)^{-1}}\,\D x}\left(2+4\sqrt{d}\right)^dC_E$ by following the proof of Lemma \cite[Lemma 4.3.8]{brenner2008mathematical}. Therefore, \[\|\tilde{f}-p_{f,\vi}\|_{\fW^{1,\infty}(\Omega_{\vm_*,\vi})}\le C_1(n,d)K^{-(n-1)}\] where $C_1(n,d)=2C_{BH}(n,d)$. 
			
			Now we construct a partition of unity that we use in this theorem. First of all, given any integer $K$, define $\{h_i\}_{i=0}^K$ from $\sR\to\sR$:\begin{align}
				h_i(x):=h\left(4K\left(x-\frac{8i+3}{8K}\right)\right),~h(x):=
				\begin{cases}
					1,~&|x|<\frac{3}{2} \\
					0,~&|x|>2 \\
					4-2|x|,~&\frac{3}{2}\le|x|\le 2.
				\end{cases}\label{hi}
			\end{align} It is easy to check that $\{h_i\}_{i=0}^K$ is a partition of unity of $[0,1]$ and $h_i(x)=1$ for $x\in \left[\frac{i}{K},\frac{3+4i}{4K}\right]$. Hence we can define $h_{\vi}(\vx)$ for $\vi=(i_1,i_2,\ldots,i_d)\in \{0,1,\ldots,K\}^d$ and $\vx=(x_1,x_2,\ldots,x_d)\in\sR^d$:\begin{equation}
				h_{\vi}(\vx)=\prod_{j=1}^{d}h_{i_j}(x_j),\label{partition1}
			\end{equation} and $\left\{h_{\vi}:{\vi}\in\{0,1,\ldots,K\}^d\right\}$ is a partition of unity of $[0,1]^d$ and $h_{\vi}(\vx)=1$ for $\vx\in \prod_{j=1}^d\left[\frac{i_j}{K},\frac{3+4i_j}{4K}\right]=\Omega_{\vm_*,\vi}$ and $\vi=(i_1,i_2,\ldots,i_d)\in \{0,1,\ldots,K\}^d$.

			Furthermore, \begin{equation}
				\|h_{\vi}(\tilde{f}-p_{f,\vi})\|_{L^\infty(\Omega_{\vm_*,\vi})}\le	\|\tilde{f}-p_{f,\vi}\|_{L^\infty(\Omega_{\vm_*,\vi})}\le C_{BH}(n,d)K^{-n}
			\end{equation} and \begin{align}
				|h_{\vi}(\tilde{f}-p_{f,\vi})|_{\fW^{1,\infty}(\Omega_{\vm_*,\vi})}\le& |\tilde{f}-p_{f,\vi}|_{\fW^{1,\infty}(\Omega_{\vm_*,\vi})}\le C_{BH}(n,d)K^{-(n-1)}
			\end{align} which is due to $h_{\vi}=1$ on $\Omega_{\vm_*,\vi}$.
			
			Then \[\|h_{\vi}(\tilde{f}-p_{f,\vi})\|_{\fW^{1,\infty}(\Omega_{\vm_*,\vi})}\le C_1(n,d)K^{-(n-1)}.\]
			
			Finally, \begin{align}
				\left\|f-\sum_{\vi\in \{0,1,\ldots,K\}^d}h_{\vi}p_{f,\vi}\right\|_{\fW^{1,\infty}(\Omega_{m_*})}\le& \max_{\vi\in\{0,1,\ldots,K\}^d}\|h_{\vi}(\tilde{f}-p_{f,\vi})\|_{\fW^{1,\infty}(\Omega_{\vm_*,\vi})}\notag\\&\le C_1(n,d)K^{-(n-1)},\end{align}
			which is due to $\cup_{\vi\in \{0,1,\ldots,K\}^d}\Omega_{\vm_*,\vi}=\Omega_{m_*}$ and {\rm supp} $h_{\vi}\cap \Omega_{m_*}=\Omega_{\vm_*,\vi}$.
			
			Similarly, \begin{equation}
				\left\|f-\sum_{\vi\in \{0,1,\ldots,K\}^d}h_{\vi}p_{f,\vi}\right\|_{L^{\infty}(\Omega_{1,d})}\le C_1(n,d)K^{-n}.
			\end{equation}Last of all,\begin{align}
				f_{k,\vm_*}(\vx):=&\sum_{\vi\in \{0,1,\ldots,K\}^d}h_{\vi}p_{f,\vi}=\sum_{\vi\in \{0,1,\ldots,K\}^d}\sum_{|\valpha|\le n-1}h_{\vi}c_{f,\vi,\valpha}\vx^{\valpha}\notag\\=&\sum_{|\valpha|\le n-1}\sum_{\vi\in \{0,1,\ldots,K\}^d}h_{\vi}c_{f,\vi,\valpha}\vx^{\valpha}\notag\\=:&\sum_{|\valpha|\le n-1}g_{f,\valpha,\vm_*}(\vx)\vx^{\valpha}
			\end{align} with $|g_{f,\valpha,\vm_*}(\vx)|\le C_2(n,d)$ for $x\in \Omega_{\vm_*}$. Note that $g_{f,\valpha,\vm_*}(\vx)$ is a step function from $\Omega_{\vm_*}\to\sR$:\begin{align}
				g_{f,\valpha,\vm_*}(\vx)=c_{f,\vi,\valpha}
			\end{align} for $\vx\in\prod_{j=1}^d\left[\frac{i_j}{K},\frac{3+4i_j}{4K}\right]$ and $\vi=(i_1,i_2,\ldots,i_d)$ since $h_{\vi}(\vx)=0$ for $\vx\in\Omega_{\vm_*}\backslash \prod_{j=1}^d\left[\frac{i_j}{K},\frac{3+4i_j}{4K}\right]$ and $h_{\vi}(\vx)=1$ for $\vx\in\prod_{j=1}^d\left[\frac{i_j}{K},\frac{3+4i_j}{4K}\right]$.\end{proof}
		
		\subsubsection{Approximation of functions in $\fW^{n,\infty}$ with $\fW^{1,\infty}$ norm by ReLU neural networks in the whole space except a small set}\label{step2}

		\begin{thm}\label{first}
			For any $f\in\fW^{n,\infty}((0,1)^d)$ with $\|f\|_{\fW^{n,\infty}((0,1)^d)}\le 1$, any $N, L\in\sN_+$, and $\vm=(m_1,m_2,\ldots,m_d)\in\{1,2\}^d$, there is a neural network $\psi_{\vm}$ with the width $25n^{d+1}(N+1)\log_2(8N)$ and depth $27n^2(L+2)\log_2(4L)$ such that\begin{align}\|f(\vx)-\psi_{\vm}(\vx)\|_{\fW^{1,\infty}(\Omega_{\vm})}&\le C_6(n,d)N^{-2(n-1)/d}L^{-2(n-1)/d}\notag\\\|f(\vx)-\psi_{\vm}(\vx)\|_{L^{\infty}(\Omega_{\vm})}&\le C_6(n,d)N^{-2n/d}L^{-2n/d},\end{align}
			where $C_6$ is the constant independent with $N,L$.
		\end{thm}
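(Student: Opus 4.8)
The plan is to emulate, by a single $\sigma_1$-NN, the piecewise-polynomial approximant $f_{K,\vm}=\sum_{|\valpha|\le n-1}g_{f,\valpha,\vm}(\vx)\vx^{\valpha}$ supplied by Theorem \ref{fN} with $K=\lfloor N^{1/d}\rfloor^2\lfloor L^{2/d}\rfloor$, following the strategy of \cite{lu2021deep}. The point that makes this work in the Sobolev topology, and not merely in $L^\infty$, is that each coefficient $g_{f,\valpha,\vm}$ is \emph{constant} on every sub-cube $\Omega_{\vm,\vi}$, so its weak derivative vanishes on $\Omega_{\vm}$; if the network copy of $g_{f,\valpha,\vm}$ is again \emph{exactly} piecewise constant on $\Omega_{\vm}$, the derivative of the coefficient factor contributes nothing to the $\fW^{1,\infty}$ error and only the $L^\infty$ accuracy of the coefficient values matters.

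Concretely I would proceed as follows. \textbf{(1) Quantization.} Using Proposition \ref{step} coordinate-wise (with the shift appropriate to $m_j\in\{1,2\}$), followed by one linear layer assembling the $d$ coordinate indices into a single integer label of $\vi$, build a $\sigma_1$-NN $\mathcal{I}$ of width $O(dN)$ and depth $O(L)$ with $\mathcal{I}(\vx)=\mathrm{label}(\vi)$ for all $\vx\in\Omega_{\vm,\vi}$; exactness holds because the sub-cubes are separated by gaps of width at least $1/(4K)$, so we may take $\delta=1/(4K)\le 1/(3K)$ in Proposition \ref{step}. \textbf{(2) Coefficients.} For each $\valpha$ with $|\valpha|\le n-1$, apply Proposition \ref{point} (after re-parametrizing its width/depth so the $(K+1)^d$ labels fit into its index range) to interpolate, up to error $N^{-2n}L^{-2n}$, the rescaled values $c_{f,\vi,\valpha}/(2C_2(n,d))$ at the integer labels, and compose with $\mathcal{I}$ to get $g^{\mathrm{NN}}_{\valpha}$. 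By construction $g^{\mathrm{NN}}_{\valpha}$ is exactly constant on every $\Omega_{\vm,\vi}$, hence $|g^{\mathrm{NN}}_{\valpha}|_{\fW^{1,\infty}(\Omega_{\vm})}=0$, $\|g^{\mathrm{NN}}_{\valpha}\|_{L^\infty}\le C_2(n,d)+1$, and $\|g^{\mathrm{NN}}_{\valpha}-g_{f,\valpha,\vm}\|_{L^\infty(\Omega_{\vm})}\le 2C_2(n,d)N^{-2n}L^{-2n}$. \textbf{(3) Monomials.} Approximate $\vx^{\valpha}$ by $p^{\mathrm{NN}}_{\valpha}$ via Proposition \ref{mutiprop} with $s=n-1$, so $\|p^{\mathrm{NN}}_{\valpha}-\vx^{\valpha}\|_{\fW^{1,\infty}((0,1)^d)}\le 10(n-1)(N+1)^{-7(n-1)L}$ and $\|p^{\mathrm{NN}}_{\valpha}\|_{\fW^{1,\infty}}\le 18$. \textbf{(4) Products and sum.} Multiply $g^{\mathrm{NN}}_{\valpha}$ and $p^{\mathrm{NN}}_{\valpha}$ by the product network of Proposition \ref{2prop} (with $a=18$), place the at most $n^d$ $\valpha$-branches in parallel, and close with a linear layer summing them; this yields the candidate $\psi_{\vm}$.

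The error analysis is then routine. For each $\valpha$, write $g^{\mathrm{NN}}_{\valpha}p^{\mathrm{NN}}_{\valpha}-g_{f,\valpha,\vm}\vx^{\valpha}=(g^{\mathrm{NN}}_{\valpha}-g_{f,\valpha,\vm})p^{\mathrm{NN}}_{\valpha}+g_{f,\valpha,\vm}(p^{\mathrm{NN}}_{\valpha}-\vx^{\valpha})$ and apply Lemma \ref{product}; since $|g^{\mathrm{NN}}_{\valpha}-g_{f,\valpha,\vm}|_{\fW^{1,\infty}(\Omega_{\vm})}=0$ and $|g_{f,\valpha,\vm}|_{\fW^{1,\infty}(\Omega_{\vm})}=0$, only the $L^\infty$ factors of the coefficients survive, and together with the product-network error $6\cdot 18^2 N^{-L}$ of Proposition \ref{2prop}, summing over $\valpha$, and combining with Theorem \ref{fN} one gets
\[
\|f-\psi_{\vm}\|_{\fW^{1,\infty}(\Omega_{\vm})}\le C_1(n,d)K^{-(n-1)}+(\text{exponentially smaller terms}).
\]
Since $K\ge \tfrac{1}{8}N^{2/d}L^{2/d}$, the first term is $\le C_6 N^{-2(n-1)/d}L^{-2(n-1)/d}$; the same computation with the $L^\infty$ bound $\|f-f_{K,\vm}\|_{L^\infty}\le C_1 K^{-n}$ and the $L^\infty$ coefficient error $N^{-2n}L^{-2n}$ gives $\|f-\psi_{\vm}\|_{L^\infty(\Omega_{\vm})}\le C_6 N^{-2n/d}L^{-2n/d}$. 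Finally, tallying width and depth across Propositions \ref{step}, \ref{point}, \ref{mutiprop}, \ref{2prop} — the parallel $\valpha$-branches contribute the factor $n^{d}$ to the width while the depth is governed by the deepest single branch (the monomial network, depth $14(n-1)^2L$) — yields the stated bounds $25n^{d+1}(N+1)\log_2(8N)$ and $27n^2(L+2)\log_2(4L)$ after absorbing lower-order terms.

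The main obstacle is Steps (1)--(2): the network copy of $g_{f,\valpha,\vm}$ must be \emph{exactly} piecewise constant on $\Omega_{\vm}$, because a derivative of size even $\varepsilon$ there would enter the $\fW^{1,\infty}$ bound as $\varepsilon$ times the $L^\infty$ norm of the monomial factor, which cannot be made small enough relative to the $K^{-(n-1)}$ target unless $\varepsilon=0$. This forces the quantization map $\mathcal{I}$ to be exact on each $\Omega_{\vm,\vi}$, which is precisely why the domain was built with gaps between sub-cubes and why $\delta$ must be taken below $1/(4K)$; it is also the reason one cannot simply invoke the $L^p$-approximation results of \cite{guhring2020error,siegel2022optimal}. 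The remaining difficulties are bookkeeping: reconciling the $(K+1)^d$ sub-cube labels with the index range of Proposition \ref{point}, handling the two shifts $m_j\in\{1,2\}$ uniformly by a translation of the construction in Proposition \ref{step}, and tracking the explicit architecture constants through the four compositions.
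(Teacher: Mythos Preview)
Your proposal is correct and follows essentially the same route as the paper: both build $\psi_{\vm}$ as $\sum_{|\valpha|\le n-1}\phi_{\mathrm{prod}}\bigl(g^{\mathrm{NN}}_{\valpha}(\vx),p^{\mathrm{NN}}_{\valpha}(\vx)\bigr)$ with $g^{\mathrm{NN}}_{\valpha}$ obtained by composing Proposition~\ref{step} (quantization) with Proposition~\ref{point} (value interpolation), $p^{\mathrm{NN}}_{\valpha}$ from Proposition~\ref{mutiprop}, and the product from Proposition~\ref{2prop}; the paper's error split $\fE_1+\fE_2+\fE_3$ is exactly your product-error + coefficient-error + monomial-error decomposition, and your emphasis on the \emph{exact} piecewise-constancy of $g^{\mathrm{NN}}_{\valpha}$ on $\Omega_{\vm}$ is precisely the mechanism the paper exploits. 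One small bookkeeping slip: the product network needs $a\ge\max\{C_2(n,d)+1,18\}$ (the paper takes $a=C_3(n,d)=\max\{3C_2(n,d),18\}$), not $a=18$, since the coefficient branch ranges up to $C_2(n,d)+1$.
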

		
		\begin{proof}
			Without loss of the generalization, we consider the case for $\vm_*=(1,1,\ldots,1)$.
			Due to Theorem \ref{fN} and setting $K=\lfloor N^{1/d}\rfloor^2\lfloor L^{2/d}\rfloor$, we have \begin{align}&\|f-f_{K,\vm_*}\|_{\fW^{1,\infty}(\Omega_{\vm_*})}\le C_1(n,d)K^{-(n-1)}\le C_1(n,d)N^{-2(n-1)/d}L^{-2(n-1)/d}\notag\\&\|f-f_{K,\vm_*}\|_{L^{\infty}(\Omega_{\vm_*})}\le C_1(n,d)K^{-n}\le C_1(n,d)N^{-2n/d}L^{-2n/d},\label{thm1}
			\end{align} where $f_{K,\vm_*}=\sum_{|\valpha|\le n-1}g_{f,\valpha,\vm_*}(\vx)\vx^{\valpha}$ for $x\in \Omega_{\vm_*}$. Note that $g_{f,\valpha,\vm_*}(\vx)$ is a constant function for $\vx\in\prod_{j=1}^d\left[\frac{i_j}{K},\frac{3+4i_j}{4K}\right]$ and $\vi=(i_1,i_2,\ldots,i_d)\in\{0,1,\ldots,K-1\}^d$. The remaining part is to approximate $f_{K,\vm_*}$ by neural networks.
			
			The way to approximate $g_{f,\valpha,\vm_*}(\vx)$ is similar with \cite[Theorem 3.1]{hon2022simultaneous}. First of all, due to Proposition \ref{step}, there is a neural network $\phi_1(x)$ with the width $4N+5$ and depth $4L+4$ such that\begin{equation}
				\phi(x)=k,k\in\left[\frac{k}{K},\frac{k+1}{K}-\frac{1}{4K}\right], ~k=0,1,\ldots,K-1.
			\end{equation} Note that we choose $\delta=\frac{1}{4K}\le \frac{1}{3K}$ in Proposition \ref{step}. Then define \[\vphi_2(\vx)=\left[\frac{\phi_1(x_1)}{K},\frac{\phi_1(x_2)}{K},\ldots,\frac{\phi_1(x_d)}{K}\right]^\T.\] For each $p=0,1,\ldots,K^d-1$, there is a bijection\[\veta(p)=[\eta_1,\eta_2,\ldots,\eta_d]\in \{0,1,\ldots,K-1\}^d\] such that $\sum_{j=1}^d\eta_jK^{j-1}=p$. Then define \[\xi_{\valpha,p}=\frac{g_{f,\valpha,\vm_*}\left(\frac{\veta(p)}{K}\right)+C_2(n,d)}{2C_2(n,d)}\in[0,1],\]where $C_2(n,d)$ is the bounded of $g_{f,\valpha,\vm_*}$ defined in Theorem \ref{fN}.
			Therefore, based on Proposition \ref{point}, there is a neural network $\tilde{\phi}_{\valpha}(x)$ with the width $16n(K+1)\log_2(8K)$ and depth $(5L+2)\log_2(4L)$ such that $|\tilde{\phi}_{\valpha}(p)-\xi_{\valpha,p}|\le N^{-2n}L^{-2n}$ for $p=0,1,\ldots K^d-1$. Denote \[\phi_{\valpha}(\vx)=2C_2(n,d)\tilde{\phi}_{\valpha}\left(\sum_{j=1}^d\eta_jK^j\right)-C_2(n,d)\] and obtain that\[\left|\phi_{\valpha}\left(\frac{\veta(p)}{N}\right)-g_{f,\valpha,\vm_*}\left(\frac{\veta(p)}{N}\right)\right|=2C_2(n,d)|\tilde{\phi}_{\valpha}(p)-\xi_{\valpha,p}|\le 2C_2(n,d)N^{-2s}L^{-2s}.\] Then we obtain that \begin{align}
				\|\phi_{\valpha}\left(\vphi_2(\vx)\right)-g_{f,\valpha,\vm_*}\left(\vx\right)\|_{\fW^{1,\infty}(\Omega_{\vm_*})}=&\|\phi_{\valpha}\left(\vphi_2(\vx)\right)-g_{f,\valpha,\vm_*}\left(\vx\right)\|_{L^{\infty}(\Omega_{\vm_*})}\notag\\\le&2C_2(n,d)N^{-2n}L^{-2n}
			\end{align}which is due to $\phi_{\valpha}\left(\vphi_2(\vx)\right)-g_{f,\valpha,\vm_*}\left(\vx\right)$ is a step function, and the first order weak derivative is $0$ in $\Omega_{\vm_*}$.
			
			Due to Proposition \ref{mutiprop}, there is a neural network $\phi_{3,\valpha}$ with the width $9(N+1)+n-1$ and depth $14n^2L$ such that $\|\phi_{3,\valpha}\|_{\fW^{1,\infty}((0,1)^d)}\le 18$ and \begin{equation}
				\left\|\phi_{3,\valpha}(\vx)-\vx^{\valpha}\right\|_{\fW^{1,\infty}((0,1)^d)}\le 10n(N+1)^{-7nL}.
			\end{equation} Due to Proposition \ref{2prop}, there is a neural network $\phi_4$ with the width $15(N+1)$ and depth $4n(L+1)$ such that $\|\phi_4\|_{\fW^{1,\infty}(-C_3,C_3)^2}\le 12(C_2(n,d))^2$ and \begin{equation}
				\left\|\phi_4(x,y)-xy\right\|_{\fW^{1,\infty}((-C_3,C_3)^2)}\le 6(C_2(n,d))^2(N+1)^{-2n(L+1)}.
			\end{equation} where $C_3(n,d)=\max\{3C_2(n,d),18\}$.
			
			Now we define the neural network $\phi_{\vm_*}(\vx)$ to approximate $f_{K,{\vm_*}}(\vx)$ in $\Omega_{\vm_*}$:\begin{equation}
				\psi_{\vm_*}(\vx)=\sum_{|\valpha|\le n-1}\phi_4\left[\phi_{\valpha}(\vphi_2(\vx)),\phi_{3,\valpha}(\vx)\right].\label{nnphi}
			\end{equation} The remaining question is to find the error $\fE$:
			\begin{align}
				\fE:=&\left\|\sum_{|\valpha|\le n-1}\phi_4\left[\phi_{\valpha}(\vphi_2(\vx)),\phi_{3,\valpha}(\vx)\right]-f_{K,\vm_*}(\vx)\right\|_{\fW^{1,\infty}(\Omega_{\vm_*})}\notag\\\le&\sum_{|\valpha|\le n-1}\left\|\phi_4\left[\phi_{\valpha}(\vphi_2(\vx)),\phi_{3,\valpha}(\vx)\right]-g_{f,\valpha,\vm_*}(\vx)\vx^{\valpha}\right\|_{\fW^{1,\infty}(\Omega_{\vm_*})}\notag\\\le&\underbrace{\sum_{|\valpha|\le n-1}\left\|\phi_4\left[\phi_{\valpha}(\vphi_2(\vx)),\phi_{3,\valpha}(\vx)\right]-\phi_{\valpha}(\vphi_2(\vx))\phi_{3,\valpha}(\vx)\right\|_{\fW^{1,\infty}(\Omega_{\vm_*})}}_{=:\fE_1}\notag\\&+\underbrace{\sum_{|\valpha|\le n-1}\left\|\phi_{\valpha}(\vphi_2(\vx))\phi_{3,\valpha}(\vx)-g_{f,\valpha,\vm_*}(\vx)\phi_{3,\valpha}(\vx)\right\|_{\fW^{1,\infty}(\Omega_{\vm_*})}}_{=:\fE_2}\notag\\&+\underbrace{\sum_{|\valpha|\le n-1}\left\|g_{f,\valpha,\vm_*}(\vx)\phi_{3,\valpha}(\vx)-g_{f,\valpha,\vm_*}(\vx)\vx^{\valpha}\right\|_{\fW^{1,\infty}(\Omega_{\vm_*})}}_{=:\fE_3}.
			\end{align}
			
				As for $\fE_1$, due to Lemma \ref{composition}, we have \begin{align}
					\fE_{1}\le&\sum_{|\valpha|\le n-1}2\sqrt{d}\max\Big\{\left\|\phi_4(x,y)-xy\right\|_{L^{\infty}((-C_3,C_3)^2)},\left\|\phi_4(x,y)-xy\right\|_{\fW^{1,\infty}((-C_3,C_3)^2)}\notag\\&\cdot\max\{\|\phi_{\valpha}(\vphi_2(\vx))\|_{\fW^{1,\infty}(\Omega_{\vm_*})},\|\phi_{3,\valpha}(\vx)\|_{\fW^{1,\infty}(\Omega_{\vm_*})}\}\Big\}\notag\\\le&\sum_{|\valpha|\le n-1}2\sqrt{d}\max\left\{\left\|\phi_4(x,y)-xy\right\|_{L^{\infty}((-C_3,C_3)^2)},C_3(n,d)\left\|\phi_4(x,y)-xy\right\|_{\fW^{1,\infty}((-C_3,C_3)^2)}\right\}\notag\\\le& \sum_{|\valpha|\le n-1} 12\sqrt{d}\left[C_3(n,d)+1\right](C_2(n,d))^2(N+1)^{-2n(L+1)}\notag\\\le& C_4(n,d)(N+1)^{-2n(L+1)}\label{e1}
				\end{align} where $C_4(n,d)=12\sqrt{d}n^d\left[C_3(n,d)+1\right](C_2(n,d))^2$.
				
				As for $\fE_{2}$, due to Lemma \ref{product}, we have \begin{align}
					\fE_{2}\le& \sum_{|\valpha|\le n-1}2\left\|\phi_{\valpha}(\vphi_2(\vx))-g_{f,\valpha,\vm_*}(\vx)\right\|_{\fW^{1,\infty}(\Omega_{\vm_*})}\cdot\left\|\phi_{3,\valpha}(\vx)\right\|_{\fW^{1,\infty}(\Omega_{\vm_*})}\notag\\\le &72n^dC_2(n,d)N^{-2n}L^{-2n}.
				\end{align} The estimation of $\fE_{3}$ is similar with that of $\fE_{2}$ which is \begin{align}
					\fE_3&\le \sum_{|\valpha|\le n-1}\|g_{f,\valpha,\vm_*}\|_{\fW^{1,\infty}(\Omega_{\vm_*})}\cdot\left\|\phi_{3,\valpha}(\vx)-\vx^{\valpha}\right\|_{\fW^{1,\infty}(\Omega_{\vm_*})}\notag\\&\le 10n^dC_2(n,d)n(N+1)^{-7nL}.
				\end{align}
				
				Therefore, using \[(N+1)^{-7nL}\le (N+1)^{-2n(L+1)}\le N^{-2n}L^{-2n}\]the total error is \begin{align}
					\fE\le \fE_1+\fE_2+\fE_3\le C_5(n,d)K^{-2n}L^{-2n},\label{thm2}
				\end{align}where $C_5(n,d)=C_4(n,d)+72n^dC_2(n,d)+10n^dC_2(n,d)n$.
				
				At last, we finish the proof by estimating the network's width and depth, implementing $\psi_{\vm_*}(\vx)$. From Eq.~(\ref{nnphi}), we know that $\psi_{\vm_*}(\vx)$ consists of the following subnetworks:
				
				1. $\phi_{3,\valpha}(\vx)$ with the width $9(N+1)+n-1$ and depth $14n^2L$.
				
				2. $\phi_2(\vx)$ with the width $4N+5$ and depth $4L+4$.
				
				3.  $\phi_{\valpha}$ with the width $16n(N+1)\log_2(8N)$ and depth $(5L+2)\log_2(4L)$.
				
				4. $\phi_4(x,y)$  with the width $15(N+1)$ and depth $4n(L+1)$.
				
				Therefore $\phi(\vx)$ is a neural network with the width $25n^{d+1}(N+1)\log_2(8N)$ and depth $27n^2(L+2)\log_2(4L)$.
				
				Combining Eqs.~(\ref{thm1}) and (\ref{thm2}), we have that there is a neural network $\psi_{\vm_*}$ with the width $25n^{d+1}(N+1)\log_2(8N)$ and depth $27n^2(L+2)\log_2(4L)$ such that\begin{align}&\|f(\vx)-\psi_{\vm_*}(\vx)\|_{\fW^{1,\infty}(\Omega_{\vm_*})}\le C_6(n,d)N^{-2(n-1)/d}L^{-2(n-1)/d}\notag\\&\|f(\vx)-\psi_{\vm_*}(\vx)\|_{L^{\infty}(\Omega_{\vm_*})}\le C_6(n,d)N^{-2n/d}L^{-2n/d},\label{app1}\end{align}
				where $C_6=C_1+C_5$ is the constant independent with $N,L$.
				
				Similarly, we can construct a neural network $\psi_{\vm}$ with the width $25n^{d+1}(N+1)\log_2(8N)$ and depth $27n^2(L+2)\log_2(4L)$ which can approximate $f$ on $\Omega_{\vm}$ with same order of Eq.~({\ref{app1}}).
			\end{proof}
			\subsubsection{Proof of Theorem \ref{main1}}\label{step3}
			
			Now we can prove Theorem \ref{main1} based on Theorem \ref{first} and Proposition \ref{peri}.
			
			\begin{proof}[Proof of Theorem \ref{main1}]
				Based on Theorem \ref{first}, there is a sequence of the neural network $\{\psi_{\vm}(\vx)\}_{\vm\in\{1,2\}^d}$ such that \begin{align}&\|f(\vx)-\psi_{\vm_*}(\vx)\|_{\fW^{1,\infty}(\Omega_{\vm_*})}\le C_6(n,d)N^{-2(n-1)/d}L^{-2(n-1)/d}\notag\\&\|f(\vx)-\psi_{\vm_*}(\vx)\|_{L^{\infty}(\Omega_{\vm_*})}\le C_6(n,d)N^{-2n/d}L^{-2n/d},\end{align}
				where $C_6=C_1+C_5$ is the constant independent with $N,L$, and each $\psi_{\vm}$ is a neural network with the width $25n^{d+1}(N+1)\log_2(8N)$ and depth $27n^2(L+2)\log_2(4L)$. According to Proposition \ref{peri}, there is a sequence of the neural network $\{\phi_{\vm}(\vx)\}_{\vm\in\{1,2\}^d}$ such that \[\|\phi_{\vm}(\vx)-g_{\vm}(\vx)\|_{\fW^{1,\infty}((0,1)^d)}\le50 d^{\frac{5}{2}}(N+1)^{-4dnL},\]where $\{g_{\vm}\}_{\vm\in\{1,2\}^d}$ is defined in Definition \ref{gm} with $\sum_{\vm\in\{1,2\}^d}g_{\vm}(\vx)=1$ and ${\rm supp}~ g_{\vm}\cap[0,1]^d=\Omega_{\vm}$. For each $\phi_{\vm}$, it is a neural network with the width smaller than $(9+d)(N+1)+d-1$ and depth smaller than $15 d(d-1)n L$.
				
				Due to Proposition \ref{2prop}, there is a neural network $\widehat{\phi}$ with the width $15(N+1)$ and depth $14n^2L$ such that $\|\widehat{\phi}\|_{\fW^{1,\infty}(-C_7,C_7)^2}\le 12(C_7(n,d))^2$ and \begin{equation}
					\left\|\widehat{\phi}(x,y)-xy\right\|_{\fW^{1,\infty}(a,b)^2}\le 6(C_7)^2(N+1)^{-7n(L+1)},
				\end{equation}where $C_7=C_6+50d^{\frac{5}{2}}+1$.
				
				Now we define \begin{equation}
					\phi(\vx)=\sum_{\vm\in\{1,2\}^d}\widehat{\phi}(\phi_{\vm}(\vx),\psi_{\vm}(\vx)).
				\end{equation}
				
				Note that \begin{align}
					\fR:=&\|f(\vx)-\phi(\vx)\|_{\fW^{1,\infty}((0,1)^d)}=\left\|\sum_{\vm\in\{1,2\}^d} g_{\vm}\cdot f(\vx)-\phi(\vx)\right\|_{\fW^{1,\infty}((0,1)^d)}\notag\\\le &\left\|\sum_{\vm\in\{1,2\}^d} \left[g_{\vm}\cdot f(\vx)-\phi_{\vm}(\vx)\cdot\psi_{\vm}(\vx)\right]\right\|_{\fW^{1,\infty}((0,1)^d)}\notag\\&+\left\|\sum_{\vm\in\{1,2\}^d} \left[\phi_{\vm}(\vx)\cdot\psi_{\vm}(\vx)-\widehat{\phi}(\phi_{\vm}(\vx),\psi_{\vm}(\vx))\right]\right\|_{\fW^{1,\infty}((0,1)^d)}.
				\end{align}
				
				As for the first part, \begin{align}
					&\left\|\sum_{\vm\in\{1,2\}^d} \left[g_{\vm}\cdot f(\vx)-\phi_{\vm}(\vx)\cdot\psi_{\vm}(\vx)\right]\right\|_{\fW^{1,\infty}((0,1)^d)}\notag\\\le& \sum_{\vm\in\{1,2\}^d}\left\| g_{\vm}\cdot f(\vx)-\phi_{\vm}(\vx)\cdot\psi_{\vm}(\vx)\right\|_{\fW^{1,\infty}((0,1)^d)}\notag\\\le &\sum_{\vm\in\{1,2\}^d}\left[\left\| (g_{\vm}-\phi_{\vm}(\vx))\cdot f(\vx)\right\|_{\fW^{1,\infty}((0,1)^d)}+\left\| (f_{\vm}-\psi_{\vm}(\vx))\cdot \phi_{\vm}(\vx)\right\|_{\fW^{1,\infty}((0,1)^d)}\right]\notag\\=&\sum_{\vm\in\{1,2\}^d}\left[\left\| (g_{\vm}-\phi_{\vm}(\vx))\cdot f(\vx)\right\|_{\fW^{1,\infty}((0,1)^d)}+\left\| (f_{\vm}-\psi_{\vm}(\vx))\cdot \phi_{\vm}(\vx)\right\|_{\fW^{1,\infty}(\Omega_{\vm})}\right],
				\end{align}where the last equality is due to Lemma \ref{reduce}. Based on Lemma \ref{product} and $\|f\|_{\fW^{1,\infty}((0,1)^d)}\le 1$, we have \begin{align}
					\left\| (g_{\vm}-\phi_{\vm}(\vx))\cdot f(\vx)\right\|_{\fW^{1,\infty}((0,1)^d)}\le \left\| (g_{\vm}-\phi_{\vm}(\vx))\right\|_{\fW^{1,\infty}((0,1)^d)}\le 50 d^{\frac{5}{2}}(N+1)^{-4dnL}.
				\end{align} And \begin{align}
					&\left\| (f_{\vm}-\psi_{\vm}(\vx))\cdot \phi_{\vm}(\vx)\right\|_{\fW^{1,\infty}(\Omega_{\vm})}\notag\\\le& \left\| (f_{\vm}-\psi_{\vm}(\vx))\right\|_{\fW^{1,\infty}(\Omega_{\vm})}\cdot \|\phi_{\vm}\|_{L^\infty(\Omega_{\vm})}+\left\| (f_{\vm}-\psi_{\vm}(\vx))\right\|_{L^{\infty}(\Omega_{\vm})}\cdot \|\phi_{\vm}\|_{W^{1,\infty}(\Omega_{\vm})}\notag\\\le &C_6(n,d)N^{-2(n-1)/d}L^{-2(n-1)/d}\cdot\left( 1+50d^{\frac{5}{2}}\right)+C_6(n,d)N^{-2n/d}L^{-2n/d}\cdot54d^{\frac{5}{2}}{\lfloor N^{1/d}\rfloor^2\lfloor L^{2/d}\rfloor}\notag\\\le& C_7(n,d)N^{-2(n-1)/d}L^{-2(n-1)/d},
				\end{align}
				where the second inequality is due to \begin{align}
					&\|\phi_{\vm}\|_{L^\infty(\Omega_{\vm})}\le \|\phi_{\vm}\|_{L^\infty([0,1]^d)}\le \|g_{\vm}\|_{L^\infty([0,1]^d)}+\|\phi_{\vm}-g_{\vm}\|_{L^\infty([0,1]^d)}\le 1+50d^{\frac{5}{2}}\notag\\
					&\|\phi_{\vm}\|_{\fW^{1,\infty}(\Omega_{\vm})}\le \|\phi_{\vm}\|_{\fW^{1,\infty}([0,1]^d)}\le \|g_{\vm}\|_{\fW^{1,\infty}([0,1]^d)}+\|\phi_{\vm}-g_{\vm}\|_{\fW^{1,\infty}([0,1]^d)}\notag\\&\le {4\lfloor N^{1/d}\rfloor^2\lfloor L^{2/d}\rfloor}+50d^{\frac{5}{2}}
				\end{align}
				
				Therefore\begin{align}
					\left\|\sum_{\vm\in\{1,2\}^d} \left[g_{\vm}\cdot f(\vx)-\phi_{\vm}(\vx)\cdot\psi_{\vm}(\vx)\right]\right\|_{\fW^{1,\infty}((0,1)^d)}\le 2^d(C_7(n,d)+50d^{\frac{5}{2}})N^{-2(n-1)/d}L^{-2(n-1)/d}\label{r1}
				\end{align} due to $(N+1)^{-4dnL}\le N^{-2n}L^{-2n}$.
				
				For the second part, due to Lemma \ref{reduce}, we have \begin{align}&\left\|\sum_{\vm\in\{1,2\}^d} \left[\phi_{\vm}(\vx)\cdot\psi_{\vm}(\vx)-\widehat{\phi}(\phi_{\vm}(\vx),\psi_{\vm}(\vx))\right]\right\|_{\fW^{1,\infty}((0,1)^d)}\notag\\\le&\sum_{\vm\in\{1,2\}^d}\left\| \phi_{\vm}(\vx)\cdot\psi_{\vm}(\vx)-\widehat{\phi}(\phi_{\vm}(\vx),\psi_{\vm}(\vx))\right\|_{\fW^{1,\infty}((0,1)^d)}\notag\\=&\sum_{\vm\in\{1,2\}^d}\left\| \phi_{\vm}(\vx)\cdot\psi_{\vm}(\vx)-\widehat{\phi}(\phi_{\vm}(\vx),\psi_{\vm}(\vx))\right\|_{\fW^{1,\infty}(\Omega_{\vm})}.
				\end{align}
				
				Similarly with the estimation of $\fE_1$ (\ref{e1}), we have that \begin{align}
					&\left\| \phi_{\vm}(\vx)\cdot\psi_{\vm}(\vx)-\widehat{\phi}(\phi_{\vm}(\vx),\psi_{\vm}(\vx))\right\|_{\fW^{1,\infty}(\Omega_{\vm})}\notag\\\le& C_8(n,d)(N+1)^{-7n(L+1)}\le C_8(n,d)N^{-2(n-1)/d}L^{-2(n-1)/d}.\label{r2}
				\end{align}
				
				Combining (\ref{r1}) and (\ref{r2}), we have that there is a $\sigma_1$-NN $\phi$ with the width $(34+d)2^dn^{d+1}(N+1)\log_2(8N)$ and depth $56d^2n^2(L+1)\log_2(4L)$ such that\[\|f(\vx)-\phi(\vx)\|_{\fW^{1,\infty}((0,1)^d)}\le C_9(n,d)N^{-2(n-1)/d}L^{-2(n-1)/d},\]where $C_9$ is the constant independent with $N,L$.
				
			\end{proof}
			
			The method proposed in \cite{lu2021deep,hon2022simultaneous,siegel2022optimal,shen2022optimal,shen2020deep} may not be applied to prove Theorems \ref{main1}. These works approximate the target function $f$ using a deep neural network $\phi$ in the unit cube except for an arbitrarily small region $\Omega_\delta$, as per \cite[Lemma 2.2]{shen2019nonlinear}. Since $\|\phi\|_{L^\infty(\Omega)}$ can be bounded and is independent of the size of $\Omega\delta$, $\|f-\phi\|_{L^p(\Omega)}$ can be well estimated across the entire space for $p\in[1,+\infty)$. For approximations measured in the $L^{\infty}(\Omega)$ norm, \cite{lu2021deep} translates the deep neural network $\phi$, while \cite{siegel2022optimal} constructs different neural networks in the unit cube away from various negligible regions. Both methods aim to find neural networks $\{\phi_i(\vx)\}{i=1}^N$ that approximate the target function $f$ well in different regions. They then observe that the middle value of $\{\phi_i(\vx)\}_{i=1}^N$ is close to $f(\vx)$ for all $\vx^*\in\Omega$, and construct the middle-value function using a ReLU neural network. However, these methods may not be generalized to prove the theorems presented in this paper.

            Neither of the methods previously proposed can be applied to the approximation measured in Sobolev space. In the first method, $\|\phi\|_{\fW^{1,\infty}(\Omega)}$ depends on the length of $\Omega\delta$, and the derivative is substantial in the negligible region, as shown in \cite[Lemma 2.2]{shen2019nonlinear}. Thus, $\|f-\phi\|_{\fW^{1,p}(\Omega)}$ will be excessively large. In the second method, median value functions can only identify the median values, not the median values of functions and their derivatives simultaneously. In this paper, we overcome this difficulty using a partition of unity. We construct a partition of unity of $\Omega$ and approximate them using ReLU DNNs denoted as $\{\phi_{\vm}\}_{\vm\in\{1,2\}^d}$. For each $\phi{\vm}$, its support set is the unit cube away from a small region, and we can construct a deep neural network $\psi_{\vm}$ that approximates the target function $f$ well on ${\rm supp~}\phi_{\vm}$. We then combine $\{\phi_{\vm}\}_{\vm\in\{1,2\}^d}$ and $\{\psi_{\vm}\}_{\vm\in\{1,2\}^d}$ to obtain a deep neural network that can approximate the target function $f$ well across the entire space. This approach resolves the issue of simultaneous approximation of both functions and their derivatives in Sobolev spaces.

			\subsection{Proofs of Corollaries \ref{main2} and \ref{main3}}\label{smooth}
			
			\subsubsection{Preliminaries}	
			First, we list a few basic lemmas of $\sigma_2$ neural networks repeatedly applied in our main analysis.
			\begin{lem}{\cite[Lemma 3.7]{hon2022simultaneous}}\label{sigma2}
				The following basic lemmas of $\sigma_2$ neural networks s hold:
				
				(i) $\sigma_1$ neural networks are $\sigma_2$ neural networks.
				
				(ii) Any identity map in $\mathbb{R}^d$ can be realized exactly by a $\sigma_2$ neural network with one hidden layer and $2 d$ neurons.
				
				(iii) $f(x)=x^2$ can be realized exactly by a $\sigma_2$ neural network with one hidden layer and two neurons.
				
				(iv) $f(x, y)=x y=\frac{(x+y)^2-(x-y)^2}{4}$ can be realized exactly by a $\sigma_2$ neural network with one hidden layer and four neurons.
				
				(v) Assume $\vx^{\valpha}=x_1^{\alpha_1} x_2^{\alpha_2} \cdots x_d^{\alpha_d}$ for $\valpha \in \sN^d$. For any $N, L \in \sN^{+}$ such that $N L+2^{\left\lfloor\log _2 N\right\rfloor} \geq$ $|\valpha|$, there exists a $\sigma_2$ neural network $\phi(\vx)$ with the width $4 N+2 d$ and depth $L+\left\lceil\log _2 N\right\rceil$ such that \[\phi(\vx)=\vx^\alpha\] for any $\vx \in \sR^d$.
				
				(vi) Assume $P(\boldsymbol{x})=\sum_{j=1}^J c_j \boldsymbol{x}^{\alpha_j}$ for $\boldsymbol{\alpha}_j \in \mathbb{N}^d$. For any $N, L, a, b \in \mathbb{N}^{+}$such that $a b \geq J$ and $\left(L-2 b-b \log _2 N\right) N \geq b \max _j\left|\boldsymbol{\alpha}_j\right|$, there exists a $\sigma_2$ neural network $\phi(\vx)$ with the width $4 N a+2 d+2$ and depth $L$ such that
				$$
				\phi(\vx)=P(\vx) \text { for any }\vx \in \sR^d \text {. }
				$$
			\end{lem}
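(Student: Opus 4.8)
The plan is to verify the six items in turn, building the later constructions from the earlier ones and using throughout that a $\sigma_2$-NN is permitted to employ $\sigma_1$ activations as well (any $\sigma_t$ with $t\le 2$ is allowed). Item (i) is then immediate from the definition of $\sigma_i$-NNs. For item (iii), I would use the pointwise identity $\sigma_2(x)+\sigma_2(-x)=x^2$, valid for all $x\in\sR$ because one of $x,-x$ is nonnegative and the square of the other contributes nothing; this is a single hidden layer with the two neurons $\sigma_2(x),\sigma_2(-x)$ followed by the linear readout $(u,v)\mapsto u+v$. Item (iv) follows by expanding $(x+y)^2$ and $(x-y)^2$ the same way: the four neurons $\sigma_2(x+y),\sigma_2(-x-y),\sigma_2(x-y),\sigma_2(-x+y)$ in one hidden layer, combined with coefficients $\frac{1}{4},\frac{1}{4},-\frac{1}{4},-\frac{1}{4}$, reproduce $xy$ exactly. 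For item (ii) I would use the $\sigma_1$ identity $\sigma_1(x)-\sigma_1(-x)=x$ applied coordinatewise, which needs two neurons per coordinate, i.e. $2d$ neurons in one hidden layer.

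The substance lies in (v) and (vi). For (v), $\vx^{\valpha}$ is a product of $m:=|\valpha|$ scalar factors ($x_k$ repeated $\alpha_k$ times), so it suffices to realize the product of $m$ given reals using the four-neuron multiplication gadget of (iv) as the only nonlinear operation. I would schedule the multiplications in two phases: in the first, the $m$ factors are distributed among at most $N$ parallel lanes and folded together over $L$ layers, one multiplication per lane per layer, with factors not yet needed passed forward through $\sigma_1$ identity channels (the extra $2d$ of width); in the second, a balanced binary tree of $\lceil\log_2 N\rceil$ layers multiplies the $\le N$ surviving lane-products together. The balanced-tree freedom is what accommodates the additive slack $2^{\lfloor\log_2 N\rfloor}$ in the hypothesis $NL+2^{\lfloor\log_2 N\rfloor}\ge|\valpha|$, and the resulting count gives width $4N+2d$ and depth $L+\lceil\log_2 N\rceil$.

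For (vi) I would write $P(\vx)=\sum_{j=1}^{J}c_j\vx^{\valpha_j}$ with $J\le ab$ and lay the $J$ monomials out as $a$ parallel working blocks, each block computing $b$ of the monomials one after another along its depth and accumulating their scaled sum, the scalars $c_j$ being absorbed into the final linear layer. A shared bank of $2d$ $\sigma_1$ identity channels keeps $\vx$ available to all blocks, and $2$ further channels carry the running accumulator, so the width is $4Na+2d+2$; the depth-budget hypothesis $(L-2b-b\log_2 N)N\ge b\max_j|\valpha_j|$ is exactly what ensures a chain of $b$ copies of the (v)-subnetwork, each of depth about $\lceil\,|\valpha_j|/N\,\rceil+\lceil\log_2 N\rceil$ plus a constant interfacing overhead, fits within $L$ layers. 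I expect the main obstacle to be the bookkeeping in (v) and (vi): partitioning into lanes and blocks so that the width and depth land exactly on the stated formulas, threading the identity channels so that every factor is present when its multiplication is scheduled, and handling the floor/ceiling edge cases around $\log_2 N$ together with the slack term $2^{\lfloor\log_2 N\rfloor}$; once the layout is pinned down the arithmetic itself is routine.
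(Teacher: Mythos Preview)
The paper does not supply its own proof of this lemma: it is quoted verbatim from \cite[Lemma 3.7]{hon2022simultaneous} and used as a black box. Your sketch is therefore not competing with any argument in the present paper, but it does match the standard proof one finds in the cited source: items (i)--(iv) via the explicit identities you wrote (in particular $\sigma_2(x)+\sigma_2(-x)=x^2$ and the polarization identity for $xy$), and items (v)--(vi) via the lane/binary-tree scheduling of pairwise products together with $\sigma_1$ identity channels to carry $\vx$ and a running accumulator forward. There is nothing missing at the level of ideas; as you correctly anticipate, the only nontrivial work is the width/depth bookkeeping in (v) and (vi), which is routine once the layout is fixed.
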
	
			
			Next, we define a function which will be repeatly used in the proof of Corollary \ref{main2} in this section.
			\begin{defi}
				Define $s(x)$ from $\sR\to [0,1]$ as \begin{equation}
					s(x):=
					\begin{cases}
						
						2x^2,~&x\in\left[0,\frac{1}{2}\right]\\
						-2(x-1)^2+1,~&x\in\left[\frac{1}{2},1\right]\\1,~&x\in \left[1,2\right]  \\-2(x-2)^2+1,~&x\in\left[2,\frac{5}{2}\right]\\
						2(x-3)^2,~&x\in\left[\frac{5}{2},3\right]\\
						0,~&\text{Otherwise}.
					\end{cases}
				\end{equation}
			\end{defi}		
			\begin{figure}[h!]
				\centering
				\includegraphics[scale=0.20]{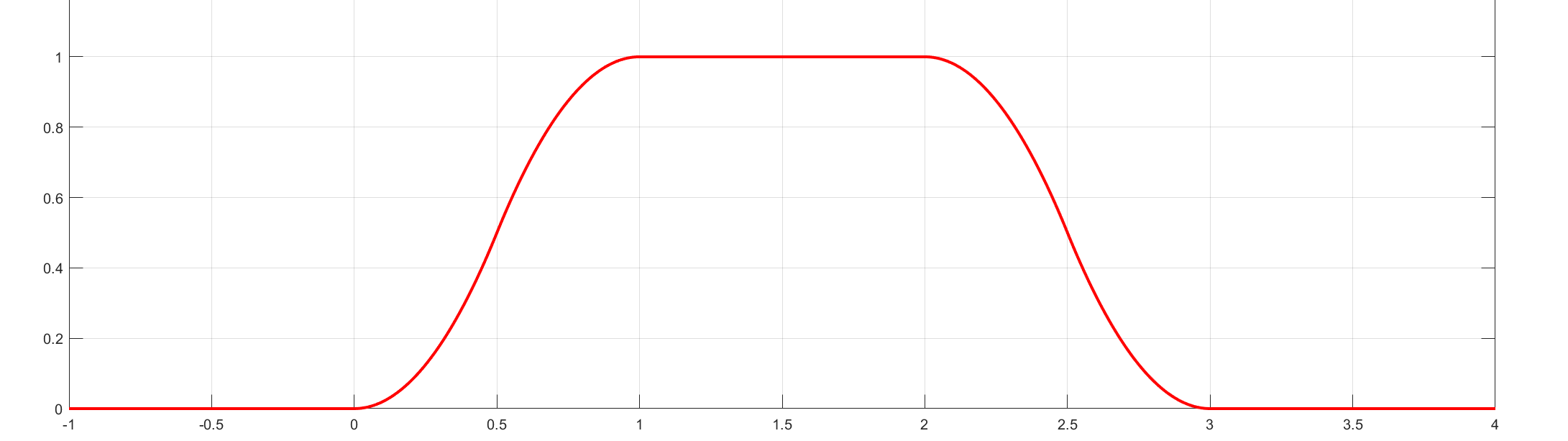}
				\caption{$s(x)$ in $\sR$}
				\label{sx}
			\end{figure}
			
			\begin{defi}\label{sm}
				Given $K\in\sN_+$, then we define two functions in $\sR$:\begin{align}
					s_1(x)=\sum_{i=0}^{K}s\left(4Kx+1-4i\right),~s_2(x)=s_1\left(x+\frac{1}{2K}\right).
				\end{align}
				
				Then for any $\vm=(m_1,m_2,\ldots,m_d)\in\{1,2\}^d$, we define \begin{equation}
					s_{\vm}(\vx):=\prod_{j=1}^d s_{m_j}(x_j)
				\end{equation} for any $\vx=(x_1,x_2,\ldots,x_d)\in\sR^d$.
			\end{defi}
			
			\begin{prop}\label{smm}
				Given $N,L,d\in\sN_+$ with $N L+2^{\left\lfloor\log _2 N\right\rfloor} \geq d$ and $L\ge\left\lceil\log _2 N\right\rceil$, and setting $K=\lfloor N^{1/d}\rfloor\lfloor L^{2/d}\rfloor$, $\{s_{\vm}(\vx)\}_{\vm\in\{1,2\}^d}$ defined in Definition \ref{sm} satisfies:
				
				(i): $\|s_{\vm}(\vx)\|_{L^\infty((0,1)^d)}\le 1$,~$\|s_{\vm}(\vx)\|_{\fW^{1,\infty}((0,1)^d)}\le 8K$ and $\|s_{\vm}(\vx)\|_{\fW^{1,\infty}((0,1)^d)}\le 64K^2$ for any $\vm\in\{1,2\}^d$.
				
				(ii): $\{s_{\vm}(\vx)\}_{\vm\in\{1,2\}^d}$ is a partition of the unity $[0,1]^d$ with ${\rm supp}~s_{\vm}(\vx)\cap[0,1]^d=\Omega_{\vm}$ defined in Definition \ref{omega}.
				
				(iii):For any $\vm\in\{1,2\}^d$, there is a $\sigma_2$ neural network $\lambda_{\vm}(\vx)$ with the width $16N+2d$ and depth $4L+5$ such as \[\lambda_{\vm}(\vx)=\prod_{j=1}^ds_{m_j}(x_j)=s_{\vm}(\vx),\vx\in[0,1]^d.\]
			\end{prop}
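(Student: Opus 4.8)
The plan is to reduce every assertion to one–dimensional statements about the template $s$ and the building blocks $s_1,s_2$, and then tensorize over the $d$ coordinates, mirroring the treatment of the ReLU partition $\{g_{\vm}\}$ in Proposition \ref{peri} and Lemmas \ref{prog}--\ref{reduce}. First I would record the elementary properties of $s$: it is continuous and piecewise quadratic, and in fact $C^1$ since the one–sided derivatives agree at each breakpoint in $\{0,\frac12,1,2,\frac52,3\}$; moreover $0\le s\le1$, $\|s'\|_{L^\infty(\sR)}\le2$, and $\|s''\|_{L^\infty(\sR)}\le4$ (the second derivative being piecewise constant with values in $\{-4,0,4\}$). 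Because ${\rm supp}\,s=[0,3]$, the dilated translates $s\bigl(4Kx+1-4i\bigr)$ in the definition of $s_1$ have pairwise disjoint supports, so at most one is active at any point; the chain rule then gives $\|s_1\|_{L^\infty}\le1$, $\|s_1'\|_{L^\infty}\le8K$, $\|s_1''\|_{L^\infty}\le64K^2$, and the same for the shift $s_2$.

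For (i) I would then apply the Leibniz rule to $s_{\vm}(\vx)=\prod_j s_{m_j}(x_j)$ together with $0\le s_{m_j}\le1$ to bound $|s_{\vm}|\le1$, $|\partial_k s_{\vm}|\le8K$, and $|\partial_k\partial_\ell s_{\vm}|\le64K^2$ for every $k,\ell$. For (ii), since $\sum_{\vm\in\{1,2\}^d}s_{\vm}(\vx)=\prod_{j=1}^d\bigl(s_1(x_j)+s_2(x_j)\bigr)$, the partition identity reduces to $s_1+s_2\equiv1$ on $[0,1]$; I would verify this by noting that on each transition interval the rising half–tooth of one function and the falling half–tooth of the other overlap, where the explicit quadratics give $2y^2+(1-2y^2)=1$ for $y\in[0,\frac12]$ and $(1-2(y-1)^2)+2(y-1)^2=1$ for $y\in[\frac12,1]$, while on each plateau one function equals $1$ and the argument of the other lies in the flat–zero gap of $s$. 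The support identity ${\rm supp}\,s_{\vm}\cap[0,1]^d=\Omega_{\vm}$ then follows from ${\rm supp}\,s_i\cap[0,1]=\Omega_i$ ($i=1,2$), read off directly from ${\rm supp}\,s=[0,3]$ and Definition \ref{omega}, together with ${\rm supp}\,s_{\vm}=\prod_j{\rm supp}\,s_{m_j}$.

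For (iii) I would imitate the construction of $\phi_{\vm}$ in Proposition \ref{peri}, now exploiting that $\sigma_2$ activations allow exact (not approximate) realizations. Since $s$ is continuous, piecewise quadratic with $O(1)$ breakpoints, it is realized exactly by a $\sigma_2$–NN of $O(1)$ width and one hidden layer — write $s$ as an affine function plus a fixed combination of the terms $\sigma_2(\,\cdot-t)$ and $\sigma_1(\,\cdot-t)$ over the breakpoints $t$, using that $\sigma_1$–NNs are $\sigma_2$–NNs by Lemma \ref{sigma2}(i). To assemble $s_1$ without paying width $O(K)$ — note that $K=\lfloor N^{1/d}\rfloor\lfloor L^{2/d}\rfloor$ can exceed $N$ — I would compose this template with a $\sigma_1$–realizable periodization map that folds $[0,1]$ onto a single period of $s_1$ by repeated reflection, built from a one–layer width $O(N^{1/d})$ sub-network and two width $O(L^{1/d})$, depth $O(L^{1/d})$ sub-networks exactly as $\psi_1\circ\psi_2\circ\psi_3\circ\psi_4$ in Proposition \ref{peri} (using Propositions \ref{step} and \ref{sum}); the flat–zero gap in each period of $s_1$ absorbs the mismatch between the period length and the support width. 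Then $s_1$ and $s_2(x)=s_1(x+\frac1{2K})$ are exact $\sigma_2$–NNs, and running the $d$ coordinate maps $x_j\mapsto s_{m_j}(x_j)$ in parallel and feeding their outputs into the exact product network of Lemma \ref{sigma2}(iv)--(vi) — the monomial $\vx^{(1,\dots,1)}$ has $|\valpha|=d$, and the hypotheses $NL+2^{\lfloor\log_2N\rfloor}\ge d$ and $L\ge\lceil\log_2N\rceil$ make it realizable with width $4N+2d$ and depth $L+\lceil\log_2N\rceil\le2L$ — yields $\lambda_{\vm}(\vx)=\prod_j s_{m_j}(x_j)=s_{\vm}(\vx)$ on $[0,1]^d$. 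Bookkeeping the widths and depths of these blocks gives the stated width $16N+2d$ and depth $4L+5$.

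The main obstacle is in (iii): obtaining an exact identity $\lambda_{\vm}=s_{\vm}$ — rather than the small–error bound of Proposition \ref{peri} — while keeping width $O(N)$ and depth $O(L)$. A flat realization of $s_1$ would cost width proportional to the number $K$ of teeth, which forces the periodization–by–reflection detour, and one must check that the fundamental period of $s_1$, which is not a symmetric tent because of its flat–zero gap, is nonetheless hit exactly by the composition; the constants must then be pushed through the reduction map and the $\sigma_2$ product network to land on width $16N+2d$ and depth $4L+5$. Parts (i) and (ii) are routine once the $C^1$ regularity of $s$ and the complementary–quadratic identity for $s_1+s_2$ have been written out.
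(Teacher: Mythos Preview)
Your proposal is correct and follows essentially the same approach as the paper. For (i) and (ii) the paper simply says ``direct calculation,'' which is exactly the one-dimensional reduction plus tensorization you outline; for (iii) the paper also realizes $s$ exactly as a width-$O(1)$ $\sigma_2$-network (via the concrete formula $s(x)=2g^2(x)-2g^2(-x+1)+2g^2(3-x)-2g^2(2+x)+\tfrac12$ with $g$ a $\sigma_1$ clip), composes it with the $\psi_2\circ\psi_3\circ\psi_4$ periodization from Proposition~\ref{peri}, and then applies Lemma~\ref{sigma2}(v) for the $d$-fold product, exactly as you describe. (Minor note: Proposition~\ref{step} plays no role in that periodization---only Proposition~\ref{sum} is used for $\psi_3,\psi_4$.)
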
		
			
			\begin{proof}
				(i) and (ii) are proved by direct calculation. The proof of (iii) follows:
				
				First, we architect $s(x)$ by a $\sigma_2$ neural network. The is a $\sigma_1$ neural network $g(x)$ with $3$ the width and one layer such that:\begin{align}
					g(x):=
					\begin{cases}
						x,~&x\in\left[0,\frac{1}{2}\right]\\
						\frac{1}{2},~&x\in\Big[\frac{1}{2},+\infty\Big)\\
						0,~&\text{Otherwise}.
				\end{cases}	\end{align} Based on (iii) in Lemma \ref{sigma2}, $g^2(x)$ is a $\sigma_2$ neural network with $3$ the width and two layers. Then by direct calculation, we notice that \begin{equation}
					s(x)=2g^2(x)-2g^2(-x+1)+2g^2\left(3-x\right)-2g^2\left(2+x\right)+\frac{1}{2},
				\end{equation} which is a $\sigma_2$ neural network with $12$ the width and two layers. The $\widetilde{g}(x)$ defined as \begin{equation}
					\widetilde{g}(x)=\sum_{i=0}^{\lfloor N^{1/d}\rfloor-1}s\left(4Kx-4i-\frac{1}{2}\right)
				\end{equation}is a $\sigma_2$ neural network with $12(\lfloor N^{1/d}\rfloor)$ the width and two layers.
				
				Similar with Lemma \ref{peri}, we know that \[\hat{g}=\widetilde{g}\circ \psi_2\circ\psi_3\circ\psi_4(x)\]is a $\sigma_2$ neural network with $12(\lfloor N^{1/d}\rfloor)$ the width and $5+2\lfloor L^{1/d}\rfloor$, and \begin{align}
					s_1(x)=\hat{g}\left(x+\frac{1}{8K}\right),~s_2(x)=s_1\left(x+\frac{1}{2K}\right),~x\in[0,1].
				\end{align}
				
				Based on (v) in Lemma \ref{sigma2}, we have there is a $\sigma_2$ neural network $\lambda_{\vm}(\vx)$ with the width $16N+2d$ and depth $4L+5$ such as \[\lambda_{\vm}(\vx)=\prod_{j=1}^ds_{m_j}(x_j)=s_{\vm}(\vx),\vx\in[0,1]^d.\]
			\end{proof}
			
			\subsubsection{Proof of Corollaries \ref{main2} and \ref{main3}}
			
			The proof is comprised of three parts, which include Theorem \ref{fN2} and \ref{2}, followed by the combination of these results. Theorem \ref{fN2} is to apply the Bramble--Hilbert Lemma \ref{BH} measured in the norm of $\fW^{2,\infty}$:
			\begin{thm}\label{fN2}
				Let $K\in\sN_+$ and $n\ge 2$. Then for any $f\in\fW^{n,\infty}((0,1)^d)$ with $\|f\|_{\fW^{n,\infty}((0,1)^d)}\le 1$ and $\vm\in\{1,2\}^d$, there exist piece-wise polynomials function $f_{K,\vm}=\sum_{|\valpha|\le n-1}g_{f,\valpha,\vm}(\vx)\vx^{\valpha}$ on $\Omega_{\vm}$ (Definition \ref{omega}) with the following properties:
				\begin{align}\|f-f_{K,\vm}\|_{\fW^{2,\infty}(\Omega_{\vm})}&\le C_1(n,d)K^{-(n-2)},\notag\\
					\|f-f_{K,\vm}\|_{\fW^{1,\infty}(\Omega_{\vm})}&\le C_1(n,d)K^{-(n-1)},\notag\\\|f-f_{K,\vm}\|_{L^{\infty}(\Omega_{\vm})}&\le C_1(n,d)K^{-n}.  
				\end{align}Furthermore, $g_{f,\valpha,\vm}(\vx):\Omega_{\vm}\to\sR$ is a constant function with on each $\Omega_{\vm,\vi}$ for $\vi\in\{0,1\ldots,K\}^d.$ And \begin{equation}
					|g_{f,\valpha,\vm}(\vx)|\le C_2(n,d)
				\end{equation} for all $\vx\in\Omega_{\vm}$, where $C_1$ and $C_2$ are constants independent with $K$.
			\end{thm}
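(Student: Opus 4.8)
The plan is to transcribe the proof of Theorem~\ref{fN} almost verbatim, the only genuine change being that the Bramble--Hilbert estimate is pushed one order higher (to $k=2$), which is allowed precisely because $n\ge 2$. Without loss of generality I take $\vm=\vm_*=(1,1,\ldots,1)$, so that $\Omega_{\vm_*}=\Omega_1^d$ decomposes as the union of the axis-aligned cubes $\Omega_{\vm_*,\vi}=\prod_{j=1}^d\bigl[\tfrac{i_j}{K},\tfrac{3+4i_j}{4K}\bigr]$, $\vi\in\{0,1,\ldots,K\}^d$, each of side length $\tfrac{3}{4K}$, and any two of which are at positive distance (separated by a gap of width $\tfrac1{4K}$). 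First I would fix an extension operator $E\colon\fW^{n,\infty}((0,1)^d)\to\fW^{n,\infty}(\sR^d)$ as in \cite{stein1970singular}, set $\tilde f:=Ef$, and write $C_E$ for its norm, so $|\tilde f|_{\fW^{n,\infty}(\sR^d)}\le C_E$.

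Next, for each $\vi$ I would let $p_{f,\vi}$ be the order-$n$ averaged Taylor polynomial of $\tilde f$ over the ball $B_{\vi,K}:=B_{\frac1{4K},|\cdot|}\bigl(\tfrac{8\vi+3}{8K}\bigr)$ (the center of the cube $\Omega_{\vm_*,\vi}$), so that $B_{\vi,K}\subset\Omega_{\vm_*,\vi}$. By Lemma~\ref{average coe}, $p_{f,\vi}=\sum_{|\valpha|\le n-1}c_{f,\vi,\valpha}\vx^{\valpha}$ with $|c_{f,\vi,\valpha}|\le C_2(n,d)$. Since $\Omega_{\vm_*,\vi}$ is star-shaped with respect to $B_{\vi,K}$, $\tfrac1{4K}\ge\tfrac12 r^{\star}_{\max}(\Omega_{\vm_*,\vi})=\tfrac{3}{16K}$, and the chunkiness parameter is $\gamma(\Omega_{\vm_*,\vi})=2\sqrt d$, the Bramble--Hilbert Lemma~\ref{BH} applies with $p=\infty$ and yields, for $k=0,1,2$,
\[
\bigl|\tilde f-p_{f,\vi}\bigr|_{\fW^{k,\infty}(\Omega_{\vm_*,\vi})}\le C(n,d)\,\bigl(\operatorname{diam}\Omega_{\vm_*,\vi}\bigr)^{n-k}|\tilde f|_{\fW^{n,\infty}(\Omega_{\vm_*,\vi})}\le C(n,d)\,K^{-(n-k)},
\]
using $\operatorname{diam}\Omega_{\vm_*,\vi}=\tfrac{3\sqrt d}{4K}$.

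Then I would define $f_{K,\vm_*}$ on $\Omega_{\vm_*}$ by declaring $f_{K,\vm_*}=p_{f,\vi}$ on each $\Omega_{\vm_*,\vi}$, equivalently $g_{f,\valpha,\vm_*}=c_{f,\vi,\valpha}$ on $\Omega_{\vm_*,\vi}$; this is unambiguous because the cubes are pairwise disjoint and cover $\Omega_{\vm_*}$, it is piecewise constant in the coefficients, and it inherits $|g_{f,\valpha,\vm_*}|\le C_2(n,d)$. (As in Theorem~\ref{fN}, the same object is produced by $\sum_{\vi}h_{\vi}p_{f,\vi}$ with the partition of unity built there, since $h_{\vi}\equiv1$ on $\Omega_{\vm_*,\vi}$ and $\operatorname{supp}h_{\vj}\cap\Omega_{\vm_*,\vi}=\emptyset$ for $\vj\ne\vi$.) Because the pieces lie at positive mutual distance, the weak derivatives on $\Omega_{\vm_*}$ are just the piecewise weak derivatives, so for $k=0,1,2$,
\[
\|f-f_{K,\vm_*}\|_{\fW^{k,\infty}(\Omega_{\vm_*})}=\max_{\vi}\|\tilde f-p_{f,\vi}\|_{\fW^{k,\infty}(\Omega_{\vm_*,\vi})}\le\max_{\vi}\sum_{j=0}^{k}\bigl|\tilde f-p_{f,\vi}\bigr|_{\fW^{j,\infty}(\Omega_{\vm_*,\vi})}\le C_1(n,d)\,K^{-(n-k)},
\]
where $K^{-n}\le K^{-(n-1)}\le K^{-(n-2)}$ lets the order-$j$ terms be absorbed. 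Taking $k=2,1,0$ gives the three displayed bounds of Theorem~\ref{fN2}, and the general $\vm\in\{1,2\}^d$ case follows identically after replacing $\Omega_1$ by the appropriate shift governing $\Omega_2$ and recentering the balls.

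I expect the only delicate point — and the reason one cannot simply invoke Theorem~\ref{fN} — to be the observation that $f_{K,\vm_*}$ is globally discontinuous yet still a legitimate element of $\fW^{2,\infty}(\Omega_{\vm_*})$: this works exactly because $\Omega_{\vm_*}$ splits into cubes separated by gaps, so no interface (jump) terms enter any weak derivative, and because on each cube the partition-of-unity weight $h_{\vi}$ is identically $1$ and therefore contributes nothing to the second-order derivatives despite being merely Lipschitz. Once this is noted, the remainder is a routine rerun of the $\fW^{1,\infty}$ argument with the Bramble--Hilbert index $k$ taken up to $2$; no new estimates beyond Lemmas~\ref{average coe} and \ref{BH} are needed.
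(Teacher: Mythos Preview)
Your proposal is correct and follows exactly the approach the paper intends: the paper's proof of Theorem~\ref{fN2} is simply ``The proof is the same as that of Theorem~\ref{fN},'' and you have carried out precisely that rerun, invoking the Bramble--Hilbert Lemma~\ref{BH} at $k=0,1,2$ and using the fact that the cubes $\Omega_{\vm_*,\vi}$ are pairwise separated by gaps so that the piecewise definition of $f_{K,\vm_*}$ lies in $\fW^{2,\infty}(\Omega_{\vm_*})$ with norm equal to the maximum over the pieces. Your explicit remark about why the merely Lipschitz partition of unity $h_{\vi}$ causes no trouble at second order (it is identically $1$ on each $\Omega_{\vm_*,\vi}$, and its support meets no other piece) is exactly the point that makes the argument go through, and matches the paper's implicit reasoning.
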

			
			The proof is the same as that of Theorem \ref{fN}. Note that $\{f_{K,\vm}\}_{\vm\in\{1,2\}^d}$ will be same in two theorems if $f\in\fW^{n,\infty}((0,1)^d)$ in two theorem are same.
			
			Theorem \ref{2} is to establish $\sigma_2$ neural networks $\{\gamma_{\vm}\}_{\{1,2\}^d}$, and each $\gamma_{\vm}$ can approximate $f$ well on $\Omega_{\vm}$.
			\begin{thm}\label{2}
				For any $f\in\fW^{n,\infty}((0,1)^d)$ with $\|f\|_{\fW^{n,\infty}((0,1)^d)}\le 1$, any $N, L\in\sN_+$ with $N L+2^{\left\lfloor\log _2 N\right\rfloor} \geq n$ and $L\ge\left\lceil\log _2 N\right\rceil$, and $\vm=(m_1,m_2,\ldots,m_d)\in\{1,2\}^d$, there is a $\sigma_2$ neural network $\gamma_{\vm}$ with the width $28n^{d+1}(N+d)\log_2(8N)$ and depth $11n^2(L+2)\log_2(4L)$ such that\begin{align}\|f(\vx)-\gamma_{\vm}(\vx)\|_{\fW^{2,\infty}(\Omega_{\vm})}&\le C_{10}(n,d)N^{-2(n-2)/d}L^{-2(n-2)/d}\notag\\\|f(\vx)-\gamma_{\vm}(\vx)\|_{\fW^{1,\infty}(\Omega_{\vm})}&\le C_{10}(n,d)N^{-2(n-1)/d}L^{-2(n-1)/d}\notag\\\|f(\vx)-\gamma_{\vm}(\vx)\|_{L^{\infty}(\Omega_{\vm})}&\le C_{10}(n,d)N^{-2n/d}L^{-2n/d},\end{align}
				where $C_{10}$ is the constant independent with $N,L$.
			\end{thm}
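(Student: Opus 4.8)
The plan is to mirror, almost line for line, the proof of Theorem~\ref{first}, replacing the Bramble--Hilbert input Theorem~\ref{fN} by Theorem~\ref{fN2} and the ReLU monomial/product gadgets by their exact $\sigma_2$ counterparts from Lemma~\ref{sigma2}. The decisive simplification over the ReLU case is that a $\sigma_2$-NN reproduces any monomial $\vx^{\valpha}$ exactly (Lemma~\ref{sigma2}(v)) and the product $xy$ exactly (Lemma~\ref{sigma2}(iv)), so the \emph{only} error sources are the polynomial approximation error of $f_{K,\vm}$ from Theorem~\ref{fN2} and the error in approximating the piecewise-constant coefficient functions $g_{f,\valpha,\vm}$; the latter, being piecewise constant on $\Omega_{\vm}$, has vanishing first- and second-order weak derivatives there and therefore contributes nothing to the higher Sobolev norms beyond its $L^\infty$ size.

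\textbf{Step 1 (piecewise-polynomial surrogate).} Fix $\vm\in\{1,2\}^d$ and set $K=\lfloor N^{1/d}\rfloor^2\lfloor L^{2/d}\rfloor$. Theorem~\ref{fN2} supplies $f_{K,\vm}=\sum_{|\valpha|\le n-1}g_{f,\valpha,\vm}(\vx)\vx^{\valpha}$ with each $g_{f,\valpha,\vm}$ constant on the cells $\Omega_{\vm,\vi}$, $|g_{f,\valpha,\vm}|\le C_2(n,d)$, and $\|f-f_{K,\vm}\|_{\fW^{j,\infty}(\Omega_{\vm})}\le C_1 K^{-(n-j)}$ for $j=0,1,2$. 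With this $K$ one has $K^{-(n-j)}\le N^{-2(n-j)/d}L^{-2(n-j)/d}$ up to a constant absorbed into $C_{10}$.

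\textbf{Step 2 (network construction).} As in Theorem~\ref{first}, let $\vphi_2=(\phi_1(x_1)/K,\dots,\phi_1(x_d)/K)$ be the ReLU step map of Proposition~\ref{step} with $\delta=1/(4K)$; since $\delta$ matches the gaps between the cells $\Omega_{\vm,\vi}$ (whose sides have length $3/(4K)$), $\vphi_2$ is constant on each cell, and it is a $\sigma_2$-NN by Lemma~\ref{sigma2}(i). Compose $\vphi_2$ with the affine index map and a point-fitting ReLU network $\phi_\valpha$ from Proposition~\ref{point} with $s=n$ (legitimate because $K^d\le N^2L^2$), so that on $\Omega_{\vm}$ the composite $\phi_\valpha\circ\vphi_2$ is piecewise constant and $\|\phi_\valpha\circ\vphi_2-g_{f,\valpha,\vm}\|_{L^\infty(\Omega_{\vm})}\le 2C_2(n,d)N^{-2n}L^{-2n}$. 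Realize each $\vx^{\valpha}$ exactly by Lemma~\ref{sigma2}(v) (the hypotheses $NL+2^{\lfloor\log_2 N\rfloor}\ge n$ and $L\ge\lceil\log_2 N\rceil$ are exactly those assumed, and since $|\valpha|\le n-1$ this costs width $4N+2d$ and depth $L+\lceil\log_2 N\rceil$) and each product exactly by Lemma~\ref{sigma2}(iv), and set
\[
\gamma_{\vm}(\vx):=\sum_{|\valpha|\le n-1}\bigl(\phi_\valpha\circ\vphi_2\bigr)(\vx)\cdot\vx^{\valpha},
\]
a $\sigma_2$-NN. Stacking the $O(n^d)$ summands in parallel and adding one layer for the final sum, the width is $O(n^d)$ branches of width $O(nN\log N)$, i.e.\ $\le 28n^{d+1}(N+d)\log_2(8N)$, and the depth is governed by the point-fitting subnetwork, $O(L\log L)$, the monomial branches contributing only $L+\lceil\log_2 N\rceil$; this is comfortably within $11n^2(L+2)\log_2(4L)$.

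\textbf{Step 3 (error estimate).} On $\Omega_{\vm}$ the function $u_\valpha:=\phi_\valpha\circ\vphi_2-g_{f,\valpha,\vm}$ is constant on each cell, hence $u_\valpha\vx^{\valpha}$ agrees cellwise with a constant multiple of $\vx^{\valpha}$, so $D^{\vbeta}(u_\valpha\vx^{\valpha})=u_\valpha\,D^{\vbeta}\vx^{\valpha}$ for $|\vbeta|\le 2$ and
\[
\|u_\valpha\vx^{\valpha}\|_{\fW^{2,\infty}(\Omega_{\vm})}\le \|u_\valpha\|_{L^\infty(\Omega_{\vm})}\,\max_{|\vbeta|\le 2}\|D^{\vbeta}\vx^{\valpha}\|_{L^\infty((0,1)^d)}\le C(n,d)N^{-2n}L^{-2n}.
\]
Summing over $\valpha$, $\|f_{K,\vm}-\gamma_{\vm}\|_{\fW^{2,\infty}(\Omega_{\vm})}\le C(n,d)N^{-2n}L^{-2n}$, which is dominated by the Bramble--Hilbert term of Step~1; the triangle inequality then gives the $\fW^{2,\infty}$ bound, and repeating the argument with $|\vbeta|\le 1$ and $|\vbeta|=0$ gives the $\fW^{1,\infty}$ and $L^\infty$ bounds.

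\textbf{Main obstacle.} The delicate point is the interplay of the exactly represented smooth factor $\vx^{\valpha}$ with the merely piecewise-constant, approximately represented coefficient factor: one must verify that the breakpoints of the ReLU quantizer $\vphi_2$ fall strictly inside the gaps of $\Omega_{\vm}$ (this is precisely why $\delta=1/(4K)$ is paired with cells of side $3/(4K)$), so that all weak derivatives of $u_\valpha$ vanish on $\Omega_{\vm}$ and the Leibniz rule collapses to a single surviving term; together with the bookkeeping required to confirm that the composite's width and depth fit inside $28n^{d+1}(N+d)\log_2(8N)$ and $11n^2(L+2)\log_2(4L)$. Everything else is a routine transcription of the $\fW^{1,\infty}$ argument.
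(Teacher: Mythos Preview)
Your proposal is correct and follows essentially the same route as the paper's proof: replace the Bramble--Hilbert input by Theorem~\ref{fN2}, keep the ReLU quantizer $\vphi_2$ and point-fitter $\phi_\valpha$ from Propositions~\ref{step} and~\ref{point} (these are $\sigma_2$-NNs by Lemma~\ref{sigma2}(i)), realize $\vx^{\valpha}$ and $xy$ exactly via Lemma~\ref{sigma2}(v),(iv), and exploit that the resulting error term $u_\valpha=\phi_\valpha\circ\vphi_2-g_{f,\valpha,\vm}$ is piecewise constant on $\Omega_{\vm}$ so that the $\fW^{2,\infty}$ estimate reduces to an $L^\infty$ one. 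Your error analysis is in fact slightly more explicit than the paper's (which simply bounds $\|u_\valpha\vx^{\valpha}\|_{\fW^{2,\infty}}$ by $n^2\|u_\valpha\|_{\fW^{2,\infty}}$), and the width/depth accounting matches.
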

			
			\begin{proof}
				The proof is similar to that of Theorem \ref{first}; the difference is that $xy$ and $\vx^{\valpha}$ can be architected precisely by $\sigma_2$ neural networks. 
				
				Without loss of the generalization, we consider the case for $\vm_*=(1,1,\ldots,1)$.
				Due to Theorem \ref{fN2} and setting $K=\lfloor N^{1/d}\rfloor^2\lfloor L^{2/d}\rfloor$, we have \begin{align}&\|f-f_{K,\vm_*}\|_{\fW^{2,\infty}(\Omega_{\vm_*})}\le C_1(n,d)K^{-(n-2)}\le C_1(n,d)N^{-2(n-2)/d}L^{-2(n-2)/d}\notag\\&\|f-f_{K,\vm_*}\|_{\fW^{1,\infty}(\Omega_{\vm_*})}\le C_1(n,d)K^{-(n-1)}\le C_1(n,d)N^{-2(n-1)/d}L^{-2(n-1)/d}\notag\\&\|f-f_{K,\vm_*}\|_{L^{\infty}(\Omega_{\vm_*})}\le C_1(n,d)K^{-n}\le C_1(n,d)N^{-2n/d}L^{-2n/d},\label{thm3}
				\end{align} where $f_{K,\vm_*}=\sum_{|\valpha|\le n-1}g_{f,\valpha,\vm_*}(\vx)\vx^{\valpha}$ for $x\in \Omega_{\vm_*}$. Note that $g_{f,\valpha,\vm_*}(\vx)$ is a constant function for $\vx\in\prod_{j=1}^d\left[\frac{i_j}{K},\frac{3+4i_j}{4K}\right]$ and $\vi=(i_1,i_2,\ldots,i_d)\in\{0,1,\ldots,K-1\}^d$. The remaining part is to approximate $f_{K,\vm_*}$ by neural networks.
				
				The way to approximate $g_{f,\valpha,\vm_*}(\vx)$ is same with Theorem \ref{first}, and we have that \begin{align}
					\|\phi_{\valpha}\left(\vphi_2(\vx)\right)-g_{f,\valpha,\vm_*}\left(\vx\right)\|_{\fW^{2,\infty}(\Omega_{\vm_*})}=&\|\phi_{\valpha}\left(\vphi_2(\vx)\right)-g_{f,\valpha,\vm_*}\left(\vx\right)\|_{\fW^{1,\infty}(\Omega_{\vm_*})}\notag\\=&\|\phi_{\valpha}\left(\vphi_2(\vx)\right)-g_{f,\valpha,\vm_*}\left(\vx\right)\|_{L^{\infty}(\Omega_{\vm_*})}\notag\\\le&2C_2(n,d)N^{-2n}L^{-2n}
				\end{align}which is due to $\phi_{\valpha}\left(\vphi_2(\vx)\right)-g_{f,\valpha,\vm_*}\left(\vx\right)$ is a step function, and the first order weak derivative is $0$ in $\Omega_{\vm_*}$.
				
				Due to (v) in Lemma \ref{sigma2}, there is a $\sigma_2$ neural network $\phi_{5,\valpha}(\vx)$ with the width $4 N+2 d$ and depth $L+\left\lceil\log _2 N\right\rceil$ such that \begin{equation}
					\phi_{5,\valpha}(\vx)=\vx^{\valpha},~\vx\in\sR^d.
				\end{equation} Due to (iv) in Lemma \ref{sigma2}, there is a $\sigma_2$ neural network $\phi_{6}(\vx)$ with the width $4$ and depth $1$ such that \begin{equation}
					\phi_{6}(x,y)=xy,~x,y\in\sR.
				\end{equation}
				
				Now we define the neural network $\gamma_{\vm_*}(\vx)$ to approximate $f_{K,{\vm_*}}(\vx)$ in $\Omega_{\vm_*}$:\begin{equation}
					\gamma_{\vm_*}(\vx)=\sum_{|\valpha|\le n-1}\phi_6\left[\phi_{\valpha}(\vphi_2(\vx)),\phi_{5,\valpha}(\vx)\right].\label{2nnphi}
				\end{equation} The remaining question is to find the error $\fE$:
				\begin{align}
					\widetilde{\fE}:=&\left\|\sum_{|\valpha|\le n-1}\phi_6\left[\phi_{\valpha}(\vphi_2(\vx)),\phi_{5,\valpha}(\vx)\right]-f_{K,\vm_*}(\vx)\right\|_{\fW^{2,\infty}(\Omega_{\vm_*})}\notag\\\le&\sum_{|\valpha|\le n-1}\left\|\phi_6\left[\phi_{\valpha}(\vphi_2(\vx)),\phi_{5,\valpha}(\vx)\right]-g_{f,\valpha,\vm_*}(\vx)\vx^{\valpha}\right\|_{\fW^{2,\infty}(\Omega_{\vm_*})}\notag\\=&\sum_{|\valpha|\le n-1}\left\|\phi_{\valpha}(\vphi_2(\vx))\vx^{\valpha}-g_{f,\valpha,\vm_*}(\vx)\vx^{\valpha}\right\|_{\fW^{2,\infty}(\Omega_{\vm_*})}\notag\\\le&n^2\sum_{|\valpha|\le n-1}\left\|\phi_{\valpha}(\vphi_2(\vx))-g_{f,\valpha,\vm_*}(\vx)\right\|_{\fW^{2,\infty}(\Omega_{\vm_*})}\notag\\\le &2n^{d+2}C_2(n,d)N^{-2n}L^{-2n}.\label{thm4}
				\end{align}
				
				At last, we finish the proof by estimating the network's the width and depth, implementing $\gamma_{\vm_*}(\vx)$. From Eq.~(\ref{2nnphi}), we know that $\gamma_{\vm_*}(\vx)$ consists of the following subnetworks:
				
				1. $\phi_{5,\valpha}(\vx)$ with the width $4 N+2 d$ and depth $L+\left\lceil\log _2 N\right\rceil$.
				
				2. $\phi_2(\vx)$ with the width $4N+5$ and depth $4L+4$.
				
				3.  $\phi_{\valpha}$ with the width $16n(N+1)\log_2(8N)$ and depth $(5L+2)\log_2(4L)$.
				
				4. $\phi_6(x,y)$  with the width $4$ and depth $1$.
				
				Therefore $\phi(\vx)$ is a neural network with the width $28n^{d+1}(N+d)\log_2(8N)$ and depth $11n^2(L+2)\log_2(4L)$.
				
				Combining Eqs.~(\ref{thm3}) and (\ref{thm4}), we have that there is a neural network $\gamma_{\vm_*}$ with the width $28n^{d+1}(N+d)\log_2(8N)$ and depth $11n^2(L+2)\log_2(4L)$ such that\begin{align}&\|f(\vx)-\psi_{\vm_*}(\vx)\|_{\fW^{2,\infty}(\Omega_{\vm_*})}\le C_{10}(n,d)N^{-2(n-2)/d}L^{-2(n-2)/d}\notag\\&\|f(\vx)-\psi_{\vm_*}(\vx)\|_{\fW^{1,\infty}(\Omega_{\vm_*})}\le C_{10}(n,d)N^{-2(n-1)/d}L^{-2(n-1)/d}\notag\\&\|f(\vx)-\psi_{\vm_*}(\vx)\|_{L^{\infty}(\Omega_{\vm_*})}\le C_{10}(n,d)N^{-2n/d}L^{-2n/d},\label{app2}\end{align}
				where $C_{10}=C_1+2n^{d+2}C_2$ is the constant independent with $N,L$.
				
				Similarly, we can construct a neural network $\gamma_{\vm}$ with the width $28n^{d+1}(N+d)\log_2(8N)$ and depth $11n^2(L+2)\log_2(4L)$ which can approximate $f$ on $\Omega_{\vm}$ with same order of Eq.~({\ref{app2}}).
			\end{proof}
			
			The last part is to combine $\{\lambda_{\vm}\}_{\vm\in\{1,2\}^d}$ and $\{\gamma_{\vm}\}_{\vm\in\{1,2\}^d}$ in $[0,1]^d$ and obtain a $\sigma_2$ neural network to approximate $f$ measured in the norm of $\fW^2$.
			
			\begin{proof}[Proof of Corollary \ref{main2}]
				Based on Theorem \ref{2}, there is a sequence of the neural network $\{\gamma_{\vm}(\vx)\}_{\vm\in\{1,2\}^d}$ such that \begin{align}\|f(\vx)-\gamma_{\vm}(\vx)\|_{\fW^{2,\infty}(\Omega_{\vm})}&\le C_{10}(n,d)N^{-2(n-2)/d}L^{-2(n-2)/d}\notag\\\|f(\vx)-\gamma_{\vm}(\vx)\|_{\fW^{1,\infty}(\Omega_{\vm})}&\le C_{10}(n,d)N^{-2(n-1)/d}L^{-2(n-1)/d}\notag\\\|f(\vx)-\gamma_{\vm}(\vx)\|_{L^{\infty}(\Omega_{\vm})}&\le C_{10}(n,d)N^{-2n/d}L^{-2n/d},\end{align}
				where $C_{10}$ is the constant independent with $N,L$, and each $\gamma_{\vm}$ is a neural network with the width $28n^{d+1}(N+d)\log_2(8N)$ and depth $11n^2(L+2)\log_2(4L)$. According to Proposition \ref{smm}, there is a sequence of the neural network $\{s_{\vm}(\vx)\}_{\vm\in\{1,2\}^d}$ satisfies:
				
				(i): $\|s_{\vm}(\vx)\|_{L^\infty((0,1)^d)}\le 1$,~$\|s_{\vm}(\vx)\|_{\fW^{1,\infty}((0,1)^d)}\le 8K$ and $\|s_{\vm}(\vx)\|_{\fW^{1,\infty}((0,1)^d)}\le 64K^2$ for any $\vm\in\{1,2\}^d$.
				
				(ii): $\{s_{\vm}(\vx)\}_{\vm\in\{1,2\}^d}$ is a partition of the unity $[0,1]^d$ with ${\rm supp}~s_{\vm}(\vx)\cap[0,1]^d=\Omega_{\vm}$ defined in Definition \ref{omega}. 
				
				For each $s_{\vm}$, it is a $\sigma_2$ neural network with the width $16N+2d$ and depth $4L+5$.
				
				Due to (iv) in Lemma \ref{sigma2}, there is a $\sigma_2$ neural network $\phi_{6}(\vx)$ with the width $4$ and depth $1$ such that \begin{equation}
					\phi_{6}(x,y)=xy,~x,y\in\sR.
				\end{equation}
				
				Now we define \begin{equation}
					\gamma(\vx)=\sum_{\vm\in\{1,2\}^d}\phi_6(s_{\vm}(\vx),\gamma_{\vm}(\vx)).\label{4nnphi}
				\end{equation}
				
				Note that \begin{align}
					\widetilde{\fR}:=&\|f(\vx)-\gamma(\vx)\|_{\fW^{2,\infty}((0,1)^d)}\le\sum_{\vm\in\{1,2\}^d}\left\| s_{\vm}(x)\cdot f(\vx)-s_{\vm}(\vx)\gamma_{\vm}(\vx)\right\|_{\fW^{2,\infty}((0,1)^d)}\notag\\= &\sum_{\vm\in\{1,2\}^d}\left\| s_{\vm}(x)\cdot f(\vx)-s_{\vm}(\vx)\gamma_{\vm}(\vx)\right\|_{\fW^{2,\infty}(\Omega_{\vm})}.
				\end{align}where the last equality is due to ${\rm supp}~s_{\vm}(\vx)\cap[0,1]^d=\Omega_{\vm}$.
				
				Then due to chain rule, for each $\vm\in\{1,2\}^d$, we have\begin{align}
					&\left\| s_{\vm}(x)\cdot f(\vx)-s_{\vm}(\vx)\gamma_{\vm}(\vx)\right\|_{\fW^{2,\infty}(\Omega_{\vm})}\notag\\\le&\left\| s_{\vm}(x)\right\|_{\fW^{2,\infty}(\Omega_{\vm})}\left\| f(\vx)-\gamma_{\vm}(\vx)\right\|_{L^{\infty}(\Omega_{\vm})}+2\left\| s_{\vm}(x)\right\|_{\fW^{1,\infty}(\Omega_{\vm})}\left\| f(\vx)-\gamma_{\vm}(\vx)\right\|_{\fW^{1,\infty}(\Omega_{\vm})}\notag\\&+\left\| s_{\vm}(x)\right\|_{L^{\infty}(\Omega_{\vm})}\left\| f(\vx)-\gamma_{\vm}(\vx)\right\|_{\fW^{2,\infty}(\Omega_{\vm})}+\left\| s_{\vm}(x)\right\|_{\fW^{1,\infty}(\Omega_{\vm})}\left\| f(\vx)-\gamma_{\vm}(\vx)\right\|_{L^{\infty}(\Omega_{\vm})}\notag\\&+\left\| s_{\vm}(x)\right\|_{L^{\infty}(\Omega_{\vm})}\left\| f(\vx)-\gamma_{\vm}(\vx)\right\|_{\fW^{1,\infty}(\Omega_{\vm})}+\left\| s_{\vm}(x)\right\|_{L^{\infty}(\Omega_{\vm})}\left\| f(\vx)-\gamma_{\vm}(\vx)\right\|_{L^{\infty}(\Omega_{\vm})}\notag\\\le &91 C_{10}(n,d)N^{-2(n-2)/d}L^{-2(n-2)/d}. 
				\end{align}
				
				Hence \[\widetilde{\fR}\le 2^{d+7}C_{10}(n,d)N^{-2(n-2)/d}L^{-2(n-2)/d}.\]
				
				At last, we finish the proof by estimating the network's width and depth, implementing $\gamma(\vx)$. From Eq.~(\ref{4nnphi}), we know that $\gamma(\vx)$ consists of the following subnetworks:
				
				1. $\gamma_{\vm}(\vx)$ with the width $28n^{d+1}(N+d)\log_2(8N)$ and depth $11n^2(L+2)\log_2(4L)$.
				
				2. $s_{\vm}(\vx)$ with the width $16N+2d$ and depth $4L+5$.
				
				3. $\phi_6(x,y)$  with the width $4$ and depth $1$.
				
				Therefore $\gamma(\vx)$ is a neural network with the width $2^{d+6}n^{d+1}(N+d)\log_2(8N)$ and depth $15n^2(L+2)\log_2(4L)$.
				
			\end{proof}
			
			Our method can easily extend to approximations measured by the norm of $\fW^{m,\infty}$. The primary difference in the proof lies in the need to establish a differential $\{s_{\vm}(\vx)\}_{\{1,2\}^d}$, which can be achieved by constructing architected $s_{\vm}(\vx)$ as piece-wise $m$-degree polynomial functions. By extanding this approach, we can obtain Corollary \ref{main3} using our method.
			
			\subsection{Proof of Theorem \ref{Optimality}}\label{opt_proof}

			\begin{proof}
				The Theorem \ref{Optimality} will be proved by contradiction. The idea of the proof is inspired by Ref.~\cite{lu2021deep}.
				
				\begin{clm}\label{no}
					There exist $\rho, C_1, C_2, C_3, J_0>0$ and $s,d\in\sN^+$ such that,  for any $f\in\fF_{n,d}$, we have \begin{equation}
						\inf_{\phi\in\widehat{\Phi}}|\phi-f|_{\fW^{1,\infty}((0,1)^d)}\le C_3L^{-2(n-1)/d-\rho}N^{-2(n-1)/d-\rho}.
					\end{equation} for all $NL\ge J_0$, where \[\widehat{\Phi}:=\{\phi:\text{ReLU FNNs $\phi$ with the width  $\le C_1 N\log N$ and depth $\le C_2 L\log L$}\}.\]
				\end{clm}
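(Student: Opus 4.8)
Claim \ref{no} is stated only to be refuted: it is the hypothesis of the proof-by-contradiction of Theorem \ref{Optimality}, and what must follow its statement is a derivation of a contradiction from it (the claim asserts a uniform approximation rate strictly faster than the one Theorem \ref{main1} already achieves, so it cannot hold). Following the strategy of Ref.~\cite{lu2021deep}, the plan is to confront the super-optimal rate asserted by the claim with the VC-dimension ceiling for ReLU-network derivatives proved in Theorem \ref{vcdim}, after accounting for the $N\log N$/$L\log L$ inflation of the admissible widths and depths inside $\widehat\Phi$.

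First I would fix the constants $\rho,C_1,C_2,C_3,J_0$ and the integers $n,d$ supplied by the claim, set the target accuracy $\varepsilon:=C_3\,(NL)^{-2(n-1)/d-\rho}$, and introduce a scale $K=K(NL)\in\sN_+$ to be fixed below. On the uniform $K^d$-grid of $(0,1)^d$, place one translated-and-rescaled copy $\beta_{\vi}$ (of a single fixed smooth bump) per cell, $\vi\in\{1,\dots,K\}^d$, with pairwise disjoint supports, normalized so that $\|\beta_{\vi}\|_{\fW^{n,\infty}((0,1)^d)}\le 1$ while $\partial_1\beta_{\vi}$ attains a value $\ge c_0K^{-(n-1)}$ at an interior point $\vx_{\vi}^\ast$ (the order $K^{-(n-1)}$ of the first derivative is the scaling forced by the $\fW^{n,\infty}$-normalization). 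For each $\vb\in\{0,1\}^{K^d}$ the superposition $f_{\vb}:=\sum_{\vi}(2b_{\vi}-1)\beta_{\vi}$ lies in $\fF_{n,d}$, and by disjointness of the supports $\partial_1 f_{\vb}(\vx_{\vi}^\ast)>0\iff b_{\vi}=1$.

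Assuming Claim \ref{no}, for each $\vb$ there is $\phi_{\vb}\in\widehat\Phi$ with $|\phi_{\vb}-f_{\vb}|_{\fW^{1,\infty}}\le\varepsilon$ (up to arbitrarily small slack, since the infimum need not be attained), hence $\|\partial_1\phi_{\vb}-\partial_1 f_{\vb}\|_{L^\infty}\le\varepsilon$; if $\varepsilon<\frac{1}{2} c_0K^{-(n-1)}$ this forces $\partial_1\phi_{\vb}(\vx_{\vi}^\ast)>0\iff b_{\vi}=1$ for every $\vi$. As $\vb$ ranges over all of $\{0,1\}^{K^d}$, the class $\{\partial_1\phi:\phi\in\widehat\Phi\}$ shatters $\{\vx_{\vi}^\ast\}_{\vi}$, and since this class sits inside $D\Phi$ for the network family of width $\le C_1N\log N$ and depth $\le C_2L\log L$, we obtain $\text{VCdim}(D\Phi)\ge K^d$. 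Choosing $K:=\big\lfloor c_1(NL)^{2/d+\rho/(n-1)}\big\rfloor$ with $c_1$ small enough that $\varepsilon<\frac{1}{2} c_0K^{-(n-1)}$ holds for all large $NL$ gives $K^d\ge c_2(NL)^{2+\rho d/(n-1)}$. On the other hand, Theorem \ref{vcdim} applied with width $C_1N\log N$ and depth $C_2L\log L$ bounds $\text{VCdim}(D\Phi)\le\bar C\,(C_1N\log N)^2(C_2L\log L)^2\log_2(C_2L\log L)\log_2(C_1N\log N)=O\!\big((NL)^2\,\mathrm{polylog}(NL)\big)$. Comparing the two bounds yields $c_2(NL)^{2+\rho d/(n-1)}=O\!\big((NL)^2\,\mathrm{polylog}(NL)\big)$, i.e.\ $(NL)^{\rho d/(n-1)}=O(\mathrm{polylog}(NL))$, which is false once $NL\ge J_1$ for some $J_1\ge J_0$; any such $(N,L)$ contradicts Claim \ref{no}, which is precisely the outcome sought (and which delivers Theorem \ref{Optimality}).

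The hard part is the bump construction feeding the shattering step: the copies $\beta_{\vi}$ must simultaneously stay in the unit ball of $\fW^{n,\infty}((0,1)^d)$, carry a first derivative of the \emph{exact} order $K^{-(n-1)}$ at a controllable interior point, and have disjoint supports so that in $f_{\vb}$ the derivative at $\vx_{\vi}^\ast$ depends only on the single label $b_{\vi}$. Pinning down the constant $c_0$ and this cell-by-cell decoupling is exactly what fixes the admissible scale $K\sim(NL)^{2/d+\rho/(n-1)}$, hence the exponent $2+\rho d/(n-1)$ that strictly exceeds the $O((NL)^2\,\mathrm{polylog}(NL))$ ceiling of Theorem \ref{vcdim}; the remaining steps are routine accounting for the logarithmic width/depth inflation in $\widehat\Phi$.
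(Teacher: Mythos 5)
Your proposal is correct and follows essentially the same route as the paper: it recognizes Claim \ref{no} as the hypothesis to be refuted, builds disjointly supported $\fW^{n,\infty}$-normalized bumps on a grid of scale $K\sim(NL)^{2/d+O(\rho)}$ whose first derivatives force a shattering of the grid centers by $\{D_1\phi:\phi\in\widehat\Phi\}$, and contradicts the $O(N^2L^2\,\mathrm{polylog})$ VC-dimension ceiling of Theorem \ref{vcdim} applied with the inflated width $C_1N\log N$ and depth $C_2L\log L$. The only (immaterial) deviation is your exponent choice $\rho/(n-1)$ with a small constant $c_1$ in place of the paper's $\rho/(2(n-1))$, which serves the same purpose of keeping the bump-derivative magnitude above the assumed approximation error.
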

				
				The remaining question is to show Claim \ref{no} is invalid. 
				
				Denote \[D\widehat{\Phi}:=\{\psi:\psi=D_i\phi, \phi\in\widehat{\Phi},i=1,\ldots,d\},\] Due to Theorem \ref{vcdim}, we obtain\begin{align}
					\operatorname{VCDim}(D\widehat{\Phi})\le C_4N^2L^2\log_2 L\log_2 N=:b_u.\label{upper}
				\end{align}
				
				Now we will use Claim \ref{no} to estimate a lower bound \[b_{l}:=\lfloor(NL)^{\frac{2}{d}+\frac{\rho}{2(n-1)}}\rfloor^d\] of $\operatorname{VCDim}(D\widehat{\Phi})$. In other words, we will construct $\{\psi_{\beta}(\vx):\psi_{\beta}(\vx)\in D\widehat{\Phi}, \beta\in\fB\}$ to scatter $b_{l}$ points. $\fB$ will be defined later.
				
				First, fix $i=1,\ldots,d$, and there exists $\widetilde{g} \in C^{\infty}\left(0,1\right)^d$ such that $\frac{\partial{\widetilde{g}(\vzero)}}{\partial x_i}=1$ and $\widetilde{g}(\boldsymbol{x})=0$ for $\|\boldsymbol{x}\|_2 \geq 1 / 3$. And we can find a constant $C_5>0$ such that $g:=\widetilde{g} / C_5 \in \fF_{n, d}$.
				
				Denote $M=\lfloor(NL)^{\frac{2}{d}+\frac{\rho}{2(n-1)}}\rfloor$. Divide $[0,1]^d$ into $M^d$ non-overlapping sub-cubes $\left\{Q_{\vtheta}\right\}_\theta$ as follows:
				$$
				Q_{\vtheta}:=\left\{\vx=\left[x_1, x_2, \cdots, x_d\right]^T \in[0,1]^d: x_i \in\left[\frac{\theta_i-1}{M}, \frac{\theta_i}{M}\right], i=1,2, \cdots, d\right\},
				$$
				for any index vector $\vtheta=\left[\theta_1, \theta_2, \cdots, \theta_d\right]^T \in\{1,2, \cdots, M\}^d$. Denote the center of $Q_{\boldsymbol{\theta}}$ by $\boldsymbol{x}_{\boldsymbol{\theta}}$ for all $\boldsymbol{\theta} \in\{1,2, \cdots, M\}^d$. Define
				$$
				\fB:=\left\{\beta: \beta \text { is a map from }\{1,2, \cdots, M\}^d \text { to }\{-1,1\}\right\} .
				$$
				For each $\beta \in \fB$, we define, for any $\boldsymbol{x} \in \mathbb{R}^d$,
				$$
				h_\beta(\boldsymbol{x}):=\sum_{\boldsymbol{\theta} \in\{1,2, \cdots, M\}^d} M^{-n} \beta(\boldsymbol{\theta}) g_{\boldsymbol{\theta}}(\boldsymbol{x}), \quad \text { where } g_{\boldsymbol{\theta}}(\boldsymbol{x})=g\left(M \cdot\left(\boldsymbol{x}-\boldsymbol{x}_{\boldsymbol{\theta}}\right)\right) \text {. }
				$$ Due to $|{\rm supp}\widetilde{g}(\boldsymbol{x})|\le \frac{2}{3}$ and $|D^{\valpha}	h_\beta(\boldsymbol{x})|\le M^{-n+|\valpha|}\|g\|_{\fW^{n,\infty}}\le 1$, we obtain that\[|D^{\valpha}f_\beta(\boldsymbol{x})|\le 1\] for any $|\valpha|\le n$
				Therefore, $f_{\beta}\in\fF_{n, d}$. And it is easy to check $\{D_i h_\beta=h_\beta: \beta\in\fB\}$ can scatters $b_{l}$ points since $\frac{\partial{\widetilde{g}(\vzero)}}{\partial x_i}=1$ and $\widetilde{g}(\boldsymbol{x})=0$ for $\|\boldsymbol{x}\|_2 \geq 1 / 3$.
				
				Note that for any $h_\beta\in\fF_{n, d}$, there is a $\phi_\beta\in\widehat{\Phi}$ such that $C_3(NL)^{\frac{-2(n-1)}{d}-\frac{\rho}{2}}\ge |D_i h_\beta(\vx_{\vtheta})-D_i\phi_\beta(\vx_{\vtheta})|$ for any $J_\beta\le NL$ due to Claim \ref{no}. Denote $J_1=\max_{\beta\in\fB}\{J_\beta\}$. There is a constant $J_2$ such that $\frac{M^{-n+1}}{C_5}\ge C_3(NL)^{\frac{-2(n-1)}{d}-\rho}$ for $J_2\le NL$. Define $J:=\max \{J_1,J_2\}$, then for any $J\le NL$, we have\begin{align}
					|D_i h_\beta(\vx_{\vtheta})|=\left|M^{-n+1}\frac{\partial{g(\vx_{\vtheta})}}{\partial x_i}\right|=\frac{M^{-n+1}}{C_5}\ge C_3(NL)^{\frac{-2(n-1)}{d}-\rho}\ge |D_i h_\beta(\vx_{\vtheta})-D_i\phi_\beta(\vx_{\vtheta})|.
				\end{align}
				
				In other words, for any $\beta \in \fB$ and $\boldsymbol{\theta} \in\{1,2, \cdots, M\}^d, D_if_\beta\left(\vx_\theta\right)$ and $D_i\phi_\beta\left(\vx_\theta\right)$ have the same sign. Then $\left\{D_i\phi_\beta: \beta \in \fB\right\}$ shatters $\left\{\boldsymbol{x}_\theta: \boldsymbol{\theta} \in\{1,2, \cdots, M\}^d\right\}$ since $\left\{D_ih_\beta: \beta \in \fB\right\}$ shatters $\left\{\boldsymbol{x}_{\boldsymbol{\theta}}: \boldsymbol{\theta} \in\{1,2, \cdots, M\}^d\right\}$ as discussed above. Hence,
				\begin{equation}
					\operatorname{VCDim}\left(\left\{\phi_\beta: \beta \in \fB\right\}\right) \geq M^d=b_l,\label{low}
				\end{equation}for $N, L \in \sN$ with $N L \geq J$.
				
				By Eqs.~(\ref{upper},\ref{low}), for any $N, L \in \mathbb{N}$ with $N L \geq J$, we have $b_l \leq \operatorname{VCDim}\left(\left\{\phi_\beta: \beta \in \fB\right\}\right) \leq \operatorname{VCDim}(D\widehat{\Phi}) \leq b_u$,
				implying that
				\begin{equation}\lfloor(NL)^{\frac{2}{d}+\frac{\rho}{2(n-1)}}\rfloor^d \leq C_4N^2L^2\log_2 L\log_2 N\label{wrong}
				\end{equation}
				which is a contradiction for sufficiently large $N, L \in \mathbb{N}$. So we finish the proof of Theorem \ref{Optimality}.
			\end{proof}
			
			Based on the proof of Theorem \ref{Optimality}, we can easily check that the estimation of VC-dimension of DNN derivatives (Theorem \ref{vcdim}) is nearly optimal and prove Corollary \ref{vcdim_opt}. Assume $\operatorname{VCDim}(D\widehat{\Phi})\le b_u=O(N^{2-\varepsilon}L^{2-\varepsilon})$ in Eq.~(\ref{wrong}) for $\varepsilon>0$, and $b_l$ must be larger than $\lfloor(NL)^{\frac{2}{d}}\rfloor^d$ according the construction in the proof of Theorem \ref{Optimality} and Theorem \ref{main1}. Hence we still obtain a contradiction in Eq.~(\ref{wrong}), and the estimation in Theorem \ref{vcdim} is nearly optimal.
			\subsection{Proof of Theorem \ref{gen thm}}\label{Proof_gen}
			
			\subsubsection{Bounding generalization error by Rademacher complexity}
			
			\begin{defi}[Rademacher complexity \cite{anthony1999neural}]
				Given a sample set $S=\{z_1,z_2,\ldots,z_M\}$ on a domain $\fZ$, and a class $\fF$ of real-valued functions defined on $\fZ$, the empirical Rademacher complexity of $\fF$ in $S$ is defined as \[\tR_S(\fF):=\frac{1}{M}\rmE_{\Sigma_M}\left[\sup_{f\in\fF}\sum_{i=1}^M\sigma_if(z_i)\right],\]where $\Sigma_M:=\{\sigma_1,\sigma_2,\ldots,\sigma_M\}$ are independent random variables drawn from the Rademacher distribution, i.e., $\rmP(\sigma_i=+1)=\rmP(\sigma_i=-1)=\frac{1}{2}$ for $i=1,2,\ldots,M.$ For simplicity, if $S=\{z_1,z_2,\ldots,z_M\}$ is an independent random variable set with the uniform distribution, denote \[\tR_M(\fF):=\rmE_S\tR_S(\fF).\]
			\end{defi}
			
			The following lemma will be used to bounded generalization error by Rademacher complexities:
			
			\begin{lem}[\cite{wainwright2019high}, Proposition 4.11]
				Let $\fF$ be a set of functions. Then \[\rmE_X\sup_{u\in\fF}\left|\frac{1}{M}\sum_{i=1}^Mu(x_j)-\rmE_{x\sim\fP_\Omega} u(x)\right|\le 2\tR_M(\fF),\]where $X:=\{x_1,\ldots,x_M\}$ is an independent random variable set with the uniform distribution.
			\end{lem}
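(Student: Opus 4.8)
The plan is to establish the bound by the classical \emph{symmetrization} (ghost-sample) argument. First I would introduce a ``ghost sample'' $X'=\{x_1',\ldots,x_M'\}$ consisting of $M$ independent draws from $\fP_\Omega$, independent of $X$. Since each $x_j'$ has the same law as a generic $x\sim\fP_\Omega$, one may write $\rmE_{x\sim\fP_\Omega}u(x)=\rmE_{X'}\bigl[\frac1M\sum_{j=1}^M u(x_j')\bigr]$ for every $u\in\fF$, so that the quantity to be bounded becomes
\[
\rmE_X\sup_{u\in\fF}\left|\frac1M\sum_{j=1}^M u(x_j)-\rmE_{X'}\Bigl[\frac1M\sum_{j=1}^M u(x_j')\Bigr]\right|.
\]

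Next I would pull the supremum and the absolute value inside the inner expectation via Jensen's inequality (a supremum over a fixed index set, composed with $|\cdot|$, is convex), obtaining the bound $\rmE_{X,X'}\sup_{u\in\fF}\bigl|\frac1M\sum_{j=1}^M\bigl(u(x_j)-u(x_j')\bigr)\bigr|$. The key step is then symmetrization: for each $j$ the difference $u(x_j)-u(x_j')$ is symmetric about $0$ because $x_j$ and $x_j'$ are exchangeable, so multiplying the $j$-th summand by an independent Rademacher sign $\sigma_j\in\{-1,+1\}$ does not change the joint distribution of the vector of summands; hence the previous bound equals $\rmE_{X,X'}\rmE_{\Sigma_M}\sup_{u\in\fF}\bigl|\frac1M\sum_{j=1}^M\sigma_j\bigl(u(x_j)-u(x_j')\bigr)\bigr|$. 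Applying the triangle inequality $|a-b|\le|a|+|b|$ and subadditivity of $\sup$, then using that $X$ and $X'$ are identically distributed, I would bound this by $2\,\rmE_X\rmE_{\Sigma_M}\sup_{u\in\fF}\bigl|\frac1M\sum_{j=1}^M\sigma_j u(x_j)\bigr|=2\tR_M(\fF)$, which is the assertion.

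There is no deep obstacle here; the statement is standard and is quoted from \cite{wainwright2019high}, so the work is essentially bookkeeping. The two points that do require care are: making precise the exchangeability argument that licenses inserting the signs $\sigma_j$ (and checking the relevant measurability), and matching the convention for the Rademacher complexity --- here $\tR_M(\fF)$ is read, as in \cite{wainwright2019high}, with the absolute value placed inside the supremum; if instead one uses the sign-less variant of the definition, the same chain applied to the two one-sided deviations $\sup_{u\in\fF}(\hat{\rmE}u-\rmE u)$ and $\sup_{u\in\fF}(\rmE u-\hat{\rmE}u)$ still yields the stated bound up to an absolute constant. I would present the four displayed inequalities above in order, with the exchangeability and convention remarks occupying a line each.
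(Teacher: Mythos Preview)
Your proposal is correct: the paper does not supply its own proof of this lemma but simply cites \cite{wainwright2019high}, Proposition~4.11, and your ghost-sample symmetrization argument is exactly the standard proof found there. One minor point: the paper's definition of $\tR_S(\fF)$ does \emph{not} place an absolute value inside the supremum, so to land precisely on $2\tR_M(\fF)$ you should either invoke the one-sided version twice (as you note in your final paragraph) or observe that the class $\fF$ in the paper's applications is closed under the relevant sign change.
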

			
			Now we can show that generalization error can be bounded by Rademacher complexities of two function sets. 
			
			\begin{lem}\label{connect rad}
				Let $d, N, L,M\in\sN_+$, $B,C_1,C_2\in\sR_+$. For any $f\in\fW^{1,\infty}((0,1)^d)$ with $\|f\|_{\fW^{1,\infty}((0,1)^d)}\le 1$, set  \begin{align}
					\widetilde{\Phi}&:=\{\phi:\text{$\phi$ with the width  $\le C_1 N\log N$ and depth $\le C_2 L\log L$}, \|\phi\|_{\fW^{1,\infty}((0,1)^d)}\le B\}\notag\\	D\widetilde{\Phi}&:=\{\psi:\psi=D_i\phi, i=1,\ldots,d\}.
				\end{align} If $\phi(\vx;\vtheta_D),\phi(\vx;\vtheta_S)\in\widetilde{\Phi}$, we have \[\rmE\fR_S(\vtheta_D)-\fR_D(\vtheta_D)+\rmE\fR_D(\vtheta_S)-\rmE\fR_S(\vtheta_S)\le 4(B+1)(d\tR_M(D\widetilde{\Phi})+\tR_M(\widetilde{\Phi})),\]where $\rmE$ is expected responding to $X$, and $X:=\{\vx_1,\ldots,\vx_M\}$ is an independent random variables set uniformly distributed on $(0,1)^d$.
			\end{lem}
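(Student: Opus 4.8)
The plan is to run a standard symmetrization argument on the combined Sobolev loss. First I would write, for $\phi=\phi(\cdot;\vtheta)$, $u_\phi(\vx):=|\nabla f(\vx)-\nabla\phi(\vx)|^2+|f(\vx)-\phi(\vx)|^2=\sum_{j=1}^{d}\big(D_jf(\vx)-D_j\phi(\vx)\big)^2+\big(f(\vx)-\phi(\vx)\big)^2$, so that $\fR_D(\vtheta)=\rmE_{\vx\sim\fP_\Omega}u_\phi(\vx)$ and $\fR_S(\vtheta)=\frac{1}{M}\sum_{i=1}^{M}u_\phi(\vx_i)$, with $\fP_\Omega$ the uniform law on $(0,1)^d$. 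Since $\vtheta_D$ is a minimizer of the functional $\fR_D$, which does not involve the sample, it does not depend on $X=\{\vx_1,\dots,\vx_M\}$, so $\fR_S(\vtheta_D)$ is an unbiased estimator of $\fR_D(\vtheta_D)$ and $\rmE_X\fR_S(\vtheta_D)-\fR_D(\vtheta_D)=0$; the left-hand side thus equals $\rmE_X\big[\fR_D(\vtheta_S)-\fR_S(\vtheta_S)\big]$, which, using $\phi(\cdot;\vtheta_S)\in\widetilde{\Phi}$, is at most $\rmE_X\sup_{\phi\in\widetilde{\Phi}}\big|\fR_S(\vtheta)-\fR_D(\vtheta)\big|=\rmE_X\sup_{u\in\mathcal{U}}\big|\frac{1}{M}\sum_{i}u(\vx_i)-\rmE_{\vx\sim\fP_\Omega}u(\vx)\big|$, where $\mathcal{U}:=\{u_\phi:\phi\in\widetilde{\Phi}\}$. (Even without the unbiasedness remark, each of the two differences is bounded by this same supremum, giving the same conclusion up to the constant.) By Proposition~4.11 of \cite{wainwright2019high} (the lemma quoted just above) this is at most $2\tR_M(\mathcal{U})$, so everything reduces to estimating $\tR_M(\mathcal{U})$.

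The second step is to decompose $\tR_M(\mathcal{U})$. Each $u_\phi$ is the sum of the $d+1$ terms $(D_jf-D_j\phi)^2$, $j=1,\dots,d$, and $(f-\phi)^2$, all built from the same $\phi$; relaxing to an independent network in each summand only enlarges the class, so subadditivity of the Rademacher complexity gives $\tR_M(\mathcal{U})\le\sum_{j=1}^{d}\tR_M\big(\{(D_jf-D_j\phi)^2:\phi\in\widetilde{\Phi}\}\big)+\tR_M\big(\{(f-\phi)^2:\phi\in\widetilde{\Phi}\}\big)$. For a fixed $j$, $\|D_jf\|_{L^\infty}\le\|f\|_{\fW^{1,\infty}((0,1)^d)}\le1$ and $\|D_j\phi\|_{L^\infty}\le\|\phi\|_{\fW^{1,\infty}((0,1)^d)}\le B$, so $D_jf-D_j\phi$ takes values in $[-(B+1),B+1]$, on which $t\mapsto t^2$ vanishes at $0$ and is $2(B+1)$-Lipschitz; Talagrand's contraction lemma then gives $\tR_M\big(\{(D_jf-D_j\phi)^2:\phi\}\big)\le 2(B+1)\,\tR_M\big(\{D_jf-D_j\phi:\phi\}\big)$. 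Because $\tR_M$ is unchanged under subtracting the fixed function $D_jf$ and under negation, the right-hand side equals $2(B+1)\,\tR_M\big(\{D_j\phi:\phi\in\widetilde{\Phi}\}\big)\le 2(B+1)\,\tR_M(D\widetilde{\Phi})$, using $\{D_j\phi:\phi\in\widetilde{\Phi}\}\subset D\widetilde{\Phi}$. The same argument applied to $(f-\phi)^2$ (whose range is again contained in $[-(B+1),B+1]$) yields $\tR_M\big(\{(f-\phi)^2:\phi\}\big)\le 2(B+1)\,\tR_M(\widetilde{\Phi})$. Summing the $d+1$ estimates gives $\tR_M(\mathcal{U})\le 2(B+1)\big(d\,\tR_M(D\widetilde{\Phi})+\tR_M(\widetilde{\Phi})\big)$, and combining with the factor $2$ from symmetrization produces the asserted bound $4(B+1)\big(d\,\tR_M(D\widetilde{\Phi})+\tR_M(\widetilde{\Phi})\big)$.

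I expect the contraction step to be the main obstacle: the loss is the \emph{square} of a Sobolev discrepancy rather than a Lipschitz function of $\phi$, so the contraction principle cannot be invoked until one has an a priori bound on the ranges of $D_jf-D_j\phi$ and $f-\phi$. This is exactly why the constraint $\|\phi\|_{\fW^{1,\infty}((0,1)^d)}\le B$ is imposed in the definition of $\widetilde{\Phi}$, and it is what makes the final constant proportional to $B+1$; it is also the point at which the derivative classes $\{D_j\phi:\phi\in\widetilde{\Phi}\}$ enter, linking this bound to $\tR_M(D\widetilde{\Phi})$ and hence to the pseudo-dimension estimate of Theorem~\ref{pdim}. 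The remaining ingredients---justifying the ``independent network per summand'' relaxation, invariance of $\tR_M$ under fixed translations and negation, and the inclusions of the component function classes into $D\widetilde{\Phi}$ and $\widetilde{\Phi}$---are routine.
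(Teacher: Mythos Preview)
Your argument is correct and in fact cleaner than the paper's own proof. Both routes decompose the Sobolev loss coordinate-wise and invoke the symmetrization bound (Wainwright, Proposition~4.11), but the order of operations differs. The paper first splits $\fR_S-\fR_D$ into the $d+1$ squared terms, then for each term passes directly from the empirical-vs-population gap of $|D_j(f-\phi)|^2$ to $(B+1)$ times the gap of $D_j(f-\phi)$ itself, and only afterwards applies the symmetrization lemma; that squared-to-linear step is asserted without justification, and for the sample-dependent parameter $\vtheta_S$ it is not obvious how to argue it without something like a contraction inequality. You instead apply symmetrization to the full squared-loss class $\mathcal{U}$ first, then decouple via subadditivity of the Rademacher complexity, and finally peel off the square with Ledoux--Talagrand contraction using the $2(B+1)$-Lipschitz bound on $t\mapsto t^2$ restricted to $[-(B+1),B+1]$. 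This makes the role of the constraint $\|\phi\|_{\fW^{1,\infty}}\le B$ completely transparent and reproduces exactly the stated constant $4(B+1)$. Your observation that $\rmE\fR_S(\vtheta_D)=\fR_D(\vtheta_D)$ (because $\vtheta_D$ does not depend on the sample) is also correct and shows that one of the two differences on the left-hand side is in fact identically zero; the paper does not exploit this.
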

			\begin{proof}
				\begin{align}
					&\rmE\fR_S(\vtheta_D)-\fR_D(\vtheta_D)\notag\\=&\sum_{j=1}^d\left(\rmE\frac{1}{M}\sum_{i=1}^M\left|\frac{\partial (f(\vx_i)-\phi(\vx_i;\vtheta_D))}{\partial x_j}\right|^2-\int_{(0,1)^d}\left|\frac{\partial (f(\vx)-\phi(\vx;\vtheta_D))}{\partial x_j}\right|^2\,\D \vx\right)\notag\\&+\rmE\frac{1}{M}\sum_{i=1}^M\left| (f(\vx_i)-\phi(\vx_i;\vtheta_D))\right|^2-\int_{(0,1)^d}\left| (f(\vx)-\phi(\vx;\vtheta_D))\right|^2\,\D \vx\notag\\\le& (B+1)\sum_{j=1}^d\left(\rmE\left|\frac{1}{M}\sum_{i=1}^M\frac{\partial (f(\vx_i)-\phi(\vx_i;\vtheta_D))}{\partial x_j}-\int_{(0,1)^d}\frac{\partial (f(\vx)-\phi(\vx;\vtheta_D))}{\partial x_j}\,\D \vx\right|\right)\notag\\&+(B+1)\rmE\left|\frac{1}{M}\sum_{i=1}^M (f(\vx_i)-\phi(\vx_i;\vtheta_D))-\int_{(0,1)^d} (f(\vx)-\phi(\vx;\vtheta_D))\,\D \vx\right|\notag\\\le&2(B+1)(d\tR_M(D\widetilde{\Phi})+\tR_M(\widetilde{\Phi}))
				\end{align}where the last inequality is due to Lemma \ref{connect rad}. Similarly, we can estimate $\rmE\fR_D(\vtheta_S)-\rmE\fR_S(\vtheta_S)$ and finish the proof.
			\end{proof}
			
			\subsubsection{Bounding the Rademacher complexity and the proof of Theorem \ref{gen thm}}
			In this subsection, we aim to estimate the Rademacher complexity using the covering number. We then estimate the covering number using the pseudo-dimension.
			
			\begin{defi}[covering number \cite{anthony1999neural}]
				Let $(V,\|\cdot\|)$ be a normed space, and $\Theta\in V$. $\{V_1,V_2,\ldots,V_n\}$ is an $\varepsilon$-covering of $\Theta$ if $\Theta\subset \cup_{i=1}^nB_{\varepsilon,\|\cdot\|}(V_i)$. The covering number $\fN(\varepsilon,\Theta,\|\cdot\|)$ is defined as \[\fN(\varepsilon,\Theta,\|\cdot\|):=\min \{n: \exists \varepsilon \text {-covering over } \Theta \text { of size } n\} \text {. }\]
			\end{defi}
			
			\begin{defi}[Uniform covering number \cite{anthony1999neural}]
				Suppose the $\fF$ is a class of functions from $\fF$ to $\sR$. Given $n$ samples $\vZ_n=(z_1,\ldots,z_n)\in\fX^n$, define \[\fF|_{\vZ_n}=\{(u(z_1),\ldots,u(z_n)):u\in\fF\}.\]The uniform covering number $\fN(\varepsilon,\fF,n)$ is defined as \[\fN(\varepsilon,\fF,n)=\max_{\vZ_n\in\fX^n}\fN\left(\varepsilon, \fF|_{\vZ_n},\|\cdot\|_{\infty}\right),\]where $\fN\left(\varepsilon, \fF|_{\vZ_n},\|\cdot\|_{\infty}\right)$ denotes the $\varepsilon$-covering number of $\fF|_{\vZ_n}$ w.r.t the $L_\infty$-norm.
			\end{defi}
			
			Then we use a lemma to estimate the Rademacher complexity using the covering number.
			
			\begin{lem}[Dudley's theorem \cite{anthony1999neural}]\label{dudley}
				Let $\fF$ be a function class such that $\sup_{f\in\fF}\|f\|_\infty\le B$. Then the Rademacher complexity $\tR_n(\fF)$ satisfies that \[\tR_n(\fF) \leq \inf _{0 \leq \delta \leq B}\left\{4 \delta+\frac{12}{\sqrt{n}} \int_\delta^B \sqrt{\log 2\fN(\varepsilon,\fF,n)} \,\D \varepsilon\right\}\]
			\end{lem}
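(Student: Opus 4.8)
The plan is to prove this by the classical chaining (Dudley entropy integral) argument. I would first reduce to the empirical version: for an arbitrary fixed sample $\vZ_n=(z_1,\dots,z_n)$ I will bound the empirical Rademacher complexity $\tR_{\vZ_n}(\fF)=\frac1n\rmE_{\Sigma_n}\sup_{f\in\fF}\sum_{i=1}^n\sigma_i f(z_i)$ by the claimed right-hand side, and then take expectation over $\vZ_n$; since $\fN(\varepsilon,\fF,n)$ is already the worst-case covering number over samples, passing to $\tR_n(\fF)=\rmE_{\vZ_n}\tR_{\vZ_n}(\fF)$ only improves the bound. Throughout I identify $f\in\fF$ with its evaluation vector $v_f=(f(z_1),\dots,f(z_n))\in\sR^n$, so that $\|v_f\|_\infty\le B$, and I record that an $\varepsilon$-cover of $\fF|_{\vZ_n}$ in $\|\cdot\|_\infty$ is a fortiori a $\sqrt n\,\varepsilon$-cover in $\|\cdot\|_2$, which is the norm that interacts well with $\sum_i\sigma_i(\cdot)_i$.

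Next I would fix the geometric scales $\varepsilon_j:=B\,2^{-j}$, $j=0,1,2,\dots$, choose for each $j$ a minimal $\varepsilon_j$-cover $C_j$ of $\fF|_{\vZ_n}$ in $\|\cdot\|_\infty$ with $|C_j|=\fN(\varepsilon_j,\fF,n)$ and with $C_0=\{\vzero\}$ (legitimate since $\|v_f\|_\infty\le B=\varepsilon_0$), and for each $f$ a nearest element $\pi_j(f)\in C_j$. Telescoping $v_f=\sum_{j=1}^{N}(\pi_j(f)-\pi_{j-1}(f))+(v_f-\pi_N(f))$ and using linearity of $\rmE_{\Sigma_n}$ together with subadditivity of $\sup$, I split $\tR_{\vZ_n}(\fF)$ into a residual term and a sum of ``link'' terms. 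The residual satisfies $\sum_i\sigma_i(v_f-\pi_N(f))_i\le\|\sigma\|_2\,\|v_f-\pi_N(f)\|_2\le\sqrt n\cdot\sqrt n\,\varepsilon_N$, contributing at most $\varepsilon_N$ after dividing by $n$. For the $j$-th link, $\pi_j(f)-\pi_{j-1}(f)$ ranges over a fixed finite set of cardinality at most $\fN(\varepsilon_j,\fF,n)\,\fN(\varepsilon_{j-1},\fF,n)\le\fN(\varepsilon_j,\fF,n)^2$ whose elements have $\|\cdot\|_2\le\sqrt n(\varepsilon_j+\varepsilon_{j-1})=3\sqrt n\,\varepsilon_j$; applying Massart's finite-class lemma $\rmE_{\Sigma_n}\max_{a\in A}\sum_i\sigma_i a_i\le(\max_{a\in A}\|a\|_2)\sqrt{2\log|A|}$ bounds it by $\frac1n\cdot3\sqrt n\,\varepsilon_j\sqrt{2\log(2\fN(\varepsilon_j,\fF,n)^2)}\le\frac{6\varepsilon_j}{\sqrt n}\sqrt{\log 2\fN(\varepsilon_j,\fF,n)}$.

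Summing the links yields $\tR_{\vZ_n}(\fF)\le\varepsilon_N+\frac{6}{\sqrt n}\sum_{j=1}^{N}\varepsilon_j\sqrt{\log 2\fN(\varepsilon_j,\fF,n)}$. Since $\varepsilon\mapsto\fN(\varepsilon,\fF,n)$ is non-increasing and $\varepsilon_j=2(\varepsilon_j-\varepsilon_{j+1})$, each summand is at most $2\int_{\varepsilon_{j+1}}^{\varepsilon_j}\sqrt{\log 2\fN(\varepsilon,\fF,n)}\,\D\varepsilon$, so the sum collapses into $2\int_{\varepsilon_{N+1}}^{B}\sqrt{\log 2\fN(\varepsilon,\fF,n)}\,\D\varepsilon$. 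Given $\delta\in(0,B]$ I would truncate the chain at the scale comparable to $\delta$ (residual level $\varepsilon_N$ comparable to $\delta$, lower integration limit $\varepsilon_{N+1}\le\delta$), which gives $\tR_{\vZ_n}(\fF)\le 4\delta+\frac{12}{\sqrt n}\int_{\delta}^{B}\sqrt{\log 2\fN(\varepsilon,\fF,n)}\,\D\varepsilon$ after absorbing the geometric-rounding factors into the constant $4$; taking $\rmE_{\vZ_n}$ and then the infimum over $\delta$ finishes the proof. The step I expect to require the most care is the bookkeeping of the links: verifying that $\{\pi_j(f)-\pi_{j-1}(f):f\in\fF\}$ is genuinely a fixed finite set of the asserted cardinality and radius, uniformly in $f$, so that Massart's lemma applies level by level, and checking that the residual truly vanishes (or is absorbed into the $4\delta$ term) at the chosen truncation level while the discrete-to-integral comparison remains valid.
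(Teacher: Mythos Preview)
The paper does not actually prove this lemma: it is stated with a citation to \cite{anthony1999neural} and used as a black box in the proof of Theorem~\ref{gen thm}, so there is no ``paper's own proof'' to compare against. Your proposal supplies the standard Dudley chaining argument, and it is correct. The one place where your sketch is slightly imprecise is the final truncation: to get both the residual $\varepsilon_N\le 4\delta$ \emph{and} the integral lower limit $\varepsilon_{N+1}\ge\delta$ (so that $\int_{\varepsilon_{N+1}}^B\le\int_\delta^B$), you should choose $N$ to be the largest index with $\varepsilon_{N+1}\ge\delta$, which forces $\delta\le\varepsilon_{N+1}<2\delta$ and hence $\varepsilon_N<4\delta$; this is presumably what you mean by ``absorbing the geometric-rounding factors into the constant $4$,'' but it is worth stating the choice of $N$ explicitly since the inequality for the integral goes the wrong way if $\varepsilon_{N+1}<\delta$.
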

			
			To bound the Rademacher complexity, we employ Lemma \ref{dudley}, which bounds it by the uniform covering number. We estimate the uniform covering number by the pseudo-dimension based on the following lemma.
			
			\begin{lem}[\cite{anthony1999neural}]\label{cover dim}
				Let $\fF$ be a class of functions from $\fX$ to $[-B,B]$. For any $\varepsilon>0$, we have \[\fN(\varepsilon,\fF,n)\le \left(\frac{2enB}{\varepsilon\text{Pdim}(\fF)}\right)^{\text{Pdim}(\fF)}\] for $n\ge \text{Pdim}(\fF)$.
			\end{lem}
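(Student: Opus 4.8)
The plan is to prove the stated $L_\infty$ covering-number bound by the classical route from pseudo-dimension to covering numbers: reduce covering to packing, discretise the range, and feed the resulting finite pattern count into the Sauer--Shelah lemma. Write $d := \text{Pdim}(\fF)$ and fix an arbitrary sample $\vZ_n = (z_1,\dots,z_n) \in \fX^n$; since the uniform covering number is the supremum over such samples, it suffices to bound $\fN(\varepsilon, \fF|_{\vZ_n}, \|\cdot\|_\infty)$ for this fixed $\vZ_n$. First I would let $T \subseteq \fF|_{\vZ_n}$ be a maximal subset that is $\varepsilon$-separated in $\ell_\infty$, i.e.\ with pairwise distance $> \varepsilon$; such a $T$ exists and is finite because $\fF|_{\vZ_n} \subseteq [-B,B]^n$ is bounded, and by maximality every element of $\fF|_{\vZ_n}$ lies within $\ell_\infty$-distance $\varepsilon$ of some element of $T$, so $T$ is an $\varepsilon$-cover and $\fN(\varepsilon, \fF|_{\vZ_n}, \|\cdot\|_\infty) \le |T|$. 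It therefore remains to bound $|T|$.

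For that I would discretise the range. Let $G$ be a grid of $[-B,B]$ of spacing $\varepsilon$ with $|G| \le 2B/\varepsilon + 1$ points, chosen so that every open subinterval of $[-B,B]$ of length $> \varepsilon$ contains a point of $G$, and form the enlarged sample of $n|G|$ pairs $\{(z_i, a) : 1\le i\le n,\ a\in G\}$. The key claim is that $f \mapsto \big(1[f(z_i) > a]\big)_{1\le i\le n,\ a\in G}$ is injective on $T$: if $f \ne g$ in $T$ then some coordinate $i$ has, say, $f(z_i) > g(z_i) + \varepsilon$, and since $\big(g(z_i), f(z_i)\big)$ has length $> \varepsilon$ it contains some $a \in G$, so $1[g(z_i) > a] = 0 \ne 1 = 1[f(z_i) > a]$. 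Hence $|T|$ is at most the number of distinct sign patterns of $\fF$ on these $n|G|$ points. Now observe that, by Definition \ref{Pse}, $d$ is exactly the VC-dimension of the class of subgraph indicators $\{(x,y) \mapsto 1[f(x)>y] : f \in \fF\}$, so the Sauer--Shelah lemma (see \cite{anthony1999neural}) bounds that number by $\sum_{i=0}^{d} \binom{n|G|}{i}$. Combining with $\sum_{i=0}^d \binom{m}{i} \le (em/d)^d$ for $m \ge d$ (applicable since $n \ge d$), together with a routine clean-up of the ceiling term to replace $n|G|$ by $2nB/\varepsilon$, yields $|T| \le (2enB/(\varepsilon d))^d$, which with $\fN(\varepsilon, \fF|_{\vZ_n}, \|\cdot\|_\infty) \le |T|$ gives the claim.

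The one step that needs genuine care is the discretisation: the spacing must be small enough that $\varepsilon$-separation in $\ell_\infty$ is always witnessed by some threshold in $G$, which is what makes $f \mapsto \big(1[f(z_i)>a]\big)$ injective, yet $|G|$ has to stay of order $B/\varepsilon$ so the final constant is $2e$ and not larger; pinning down the $\lceil\cdot\rceil$'s and the strict versus non-strict inequalities so that the stated constant emerges is where the bookkeeping lives. The remaining ingredients — that a maximal $\varepsilon$-separated set is an $\varepsilon$-cover, the identification of $\text{Pdim}(\fF)$ with the VC-dimension of the subgraph class, and the Sauer--Shelah bound itself — are standard, so I anticipate no further obstacle.
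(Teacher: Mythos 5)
The paper states this lemma as a citation to \cite{anthony1999neural} and gives no proof of its own, so the comparison is with the standard textbook argument, and your packing--thresholding--Sauer route is exactly that argument and is correct: a maximal $\varepsilon$-separated subset of $\fF|_{\vZ_n}$ covers, separation is witnessed by a threshold on an $\varepsilon$-grid, the resulting map into sign patterns of the subgraph class $\{(x,y)\mapsto 1[f(x)>y]\}$ on the $n|G|$ augmented points is injective, and Sauer--Shelah with $\text{VCdim}=\text{Pdim}(\fF)$ plus $\sum_{i\le d}\binom{m}{i}\le(em/d)^d$ finishes it. The only bookkeeping you flagged is genuinely routine: place the grid points strictly in the interior, $G=\{-B+j\varepsilon: 1\le j\le \lceil 2B/\varepsilon\rceil-1\}$, so that $|G|\le 2B/\varepsilon$ while every open subinterval of $[-B,B]$ of length $>\varepsilon$ still meets $G$, which yields the stated constant $2e$.
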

			
			The remaining problem is to bound $\text{Pdim}(\widetilde{\Phi})$ and $\text{Pdim}(D\widetilde{\Phi})$. Based on \cite{bartlett2019nearly}, $\text{Pdim}(\widetilde{\Phi})=O(L^2N^2\log_2 L\log_2 N)$. For the $\text{Pdim}(D\widetilde{\Phi})$, we can estimate it by Theorem \ref{pdim}.

			Now we can estimate generalization error based on Lemma \ref{connect rad}.
			\begin{proof}[Proof of Theorem \ref{gen thm}]
				 Let $J=\max\{\text{Pdim}(D\widehat{\Phi}),\text{Pdim}(\widehat{\Phi})\}$. Due to Lemma \ref{dudley}, \ref{cover dim} and Theorem \ref{pdim}, for any $M\ge J$, we have \begin{align}
					\tR_M(D\widetilde{\Phi})\le& 4\delta+\frac{12}{\sqrt{M}} \int_\delta^B \sqrt{\log 2\fN(\varepsilon,D\widehat{\Phi},M)} \,\D \varepsilon\notag\\\le &4\delta+\frac{12}{\sqrt{M}} \int_\delta^B \sqrt{\log 2\left(\frac{2eMB}{\varepsilon\text{Pdim}(D\widehat{\Phi})}\right)^{\text{Pdim}(D\widehat{\Phi})}} \,\D \varepsilon\notag\\\le&4\delta+\frac{12B}{\sqrt{M}}+12\left(\frac{\text{Pdim}(D\widehat{\Phi})}{M}\right)^{\frac{1}{2}} \int_\delta^B \sqrt{\log \left(\frac{2eMB}{\varepsilon\text{Pdim}(D\widehat{\Phi})}\right)} \,\D \varepsilon.
				\end{align}
				
				By the direct calculation for the integral, we have\[\int_\delta^B \sqrt{\log \left(\frac{2eMB}{\varepsilon\text{Pdim}(D\widehat{\Phi})}\right)} \,\D \varepsilon\le B\sqrt{\log \left(\frac{2eMB}{\delta\text{Pdim}(D\widehat{\Phi})}\right)}.\]
				
				Then choosing $\delta=B\left(\frac{\text{Pdim}(D\widehat{\Phi})}{M}\right)^{\frac{1}{2}}\le B$, we have \begin{equation}
					\tR_M(D\widetilde{\Phi})\le 28 B\left(\frac{\text{Pdim}(D\widehat{\Phi})}{M}\right)^{\frac{1}{2}}\sqrt{\log \left(\frac{2eM}{\text{Pdim}(D\widehat{\Phi})}\right)}.
				\end{equation}
				
				Therefore, due to Theorem \ref{pdim}, there is a constant $C_4$ independent with $L, N, M$ such as\begin{equation}
					\tR_M(D\widetilde{\Phi})\le C_4 \frac{NL(\log_2 L\log_2 N)^{\frac{1}{2}}}{\sqrt{M}}\log M.
				\end{equation}
				
				$\tR_M(\widetilde {\Phi})$ can be estimate in the similar way. Due to Lemma \ref{connect rad}, we have that there is a constant $C_5=C_5(B,d, C_1,C_2)$ such that\begin{equation}
					\rmE\fR_S(\vtheta_D)-\fR_D(\vtheta_D)+\rmE\fR_D(\vtheta_S)-\rmE\fR_S(\vtheta_S)\le C_5\frac{NL(\log_2 L\log_2 N)^{\frac{1}{2}}}{\sqrt{M}}\log M.
				\end{equation}
			\end{proof}	

		\end{document}